\documentclass{article}
\usepackage[letterpaper,margin=1in]{geometry}
\usepackage{amsmath,amsfonts,amssymb}
\usepackage{mathtools}
\usepackage{amsthm}
\usepackage{algorithm}
\usepackage{algorithmic}
\usepackage{array}
\usepackage{booktabs}
\usepackage{multirow}
\usepackage[caption=false,font=normalsize,labelfont=sf,textfont=sf]{subfig}
\usepackage{textcomp}
\usepackage{stfloats}
\usepackage{url}
\usepackage{graphicx}
\usepackage{xcolor}
\usepackage{cite}
\usepackage[hidelinks]{hyperref}

\newtheorem{definition}{Definition}[section]
\newtheorem{proposition}{Proposition}[section]
\newtheorem*{proposition*}{Proposition}
\newtheorem{remark}{Remark}[section]

\title{Heterogeneous Information-Bottleneck Coordination Graphs\\for Multi-Agent Reinforcement Learning}
\author{Wei~Duan, Junyu~Xuan, En~Yu, Xiaoyu~Yang, and Jie~Lu\\
Australian Artificial Intelligence Institute (AAII), University of Technology Sydney\\
\texttt{\{wei.duan, junyu.xuan, en.yu-1, jie.lu\}@uts.edu.au}, \texttt{xiaoyu.yang-3@student.uts.edu.au}}
\date{}

\begin{document}
\maketitle

\begin{abstract}
  Coordination graphs specify which agents exchange information in cooperative multi-agent reinforcement learning (MARL). Existing sparse-graph methods, however, rely on heuristic, edge-uniform criteria for topology and structurally blind bottlenecks for message bandwidth, and lack a principled way to jointly learn heterogeneous connectivity and allocate differentiated communication capacity.

  We propose \textbf{Heterogeneous Information-Bottleneck Coordination Graphs (HIBCG)}, which couples a group-aligned block-diagonal prior---assigning heterogeneous edge density per group block---with per-agent message compression on the learned graph.
  Theoretically, we show that the group-aligned prior is never worse than a flat isotropic prior, that the structural penalty decomposes additively across group blocks, and that minimising the standard TD loss lower-bounds IB relevance without a separate mutual-information estimator.
  Across nine scenarios in SMACv1, SMACv2, and MAgent Battle (up to 100 agents), HIBCG attains the strongest results on heterogeneous multi-role maps, scales where several baselines fail to converge, and is validated by ablations and theory-aligned diagnostics of the group-aligned prior and dual-path design.
\end{abstract}

\paragraph{Keywords.} Multi-agent reinforcement learning, coordination graph, graph learning, information bottleneck, graph neural network.

\section{Introduction}
\label{sec:intro}

Cooperative multi-agent reinforcement learning (MARL) relies on agents exchanging task-relevant information so that local decisions can be coordinated toward a shared team objective~\cite{sukhbaatar2016commnet,jiang2018atoc,TarMAC,guo2024survey}. 
A \emph{coordination graph} makes this communication selective by specifying which agent pairs exchange information at each step~\cite{foerster2016learning,liu2023partially,Duan2025bayesian}. 
By keeping only informative links, sparse coordination graphs reduce noise aggregation, lower computation, and generalise better to unseen states~\cite{CASEC,SOPCG,GACG,DBLP:conf/ifaamas/VarelaSM25}. The core open problem is therefore not \emph{whether} to use a graph, but \emph{how to learn} a graph whose topology is faithful to the underlying coordination structure of the task. Existing graph learners answer this question with \emph{homogeneous} criteria---fixed thresholds~\cite{GACG}, top-$k$ selection~\cite{SOPCG}, variance payoffs~\cite{CASEC}, attention scores~\cite{DICG,GA2NET}, or dynamic factor-graph generation policies~\cite{DDFG}---that treat every link with the same rule.

Two limitations remain.
\textbf{(1) Topology learning is edge-uniform and heuristic:} these methods lack a principled mechanism to learn heterogeneous connectivity---e.g., how dense intra-group edges should be relative to inter-group links when agents form functional sub-teams.
\textbf{(2) Message bandwidth is structurally blind:} even after a topology is learned, surviving edges share the same representational capacity or pass through a single global bottleneck~\cite{NDQ,MASIA,MAGI,BVME}. 
In heterogeneous deployments, agents collaborating within a functional sub-team (e.g., co-located mobile robots synchronising a joint manoeuvre) require dense, high-bandwidth messaging to maintain coherence, whereas cross-team links primarily carry sparse status or handoff signals. Yet, no existing method allocates differentiated communication capacity to these structurally distinct agent relationships.

\begin{figure}[t]
\centering
\includegraphics[width=\columnwidth]{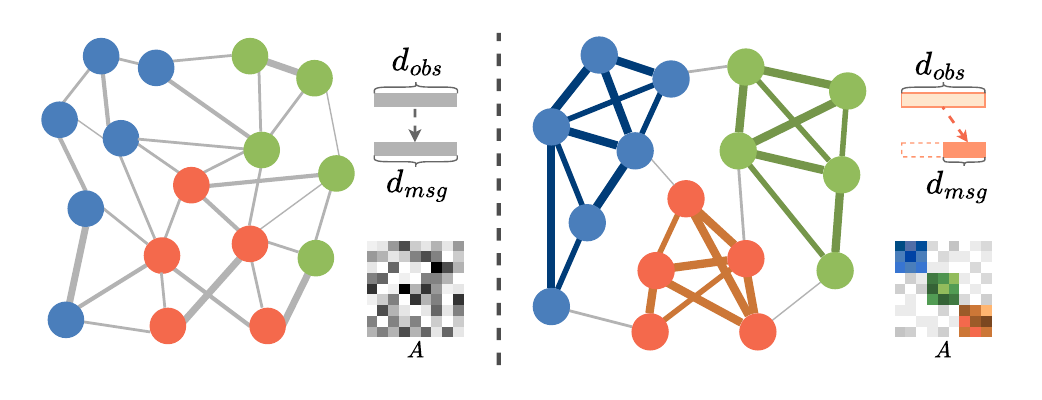}
\caption{%
  \textbf{Edge-uniform vs.\ heterogeneous coordination graphs.}
  \emph{Left:} existing methods apply edge-uniform sparsification to all links (uniform grey edges), leave message bandwidth uncompressed or squeezed by a single global bottleneck ($d_{\mathrm{obs}}\!\to\!d_{\mathrm{msg}}$ without reduction), and yield a structurally indifferent adjacency~$A$.
  \emph{Right:} HIBCG learns heterogeneous connectivity---dense, thick intra-group edges and sparse, thin cross-group links---via a group-aligned block-diagonal prior, and compresses per-agent messages on the resulting graph (orange dashed bottleneck). The learned adjacency~$A$ exhibits block-diagonal structure aligned with the agent partition.
}
\label{fig:conceptual}
\end{figure}

\medskip
To address both limitations, we propose \textbf{Heterogeneous Information-Bottleneck Coordination Graphs (HIBCG)}, which jointly learns heterogeneous connectivity and allocates differentiated message capacity for structurally different agent relationships (Figure~\ref{fig:conceptual}). Using the graph information bottleneck (GIB) as a \emph{tool}, HIBCG couples two controls. \textbf{Topology learning} uses a group-aligned block-diagonal prior to provide a closed-form criterion for edge retention, making intra-group edges cheap to keep while requiring cross-group edges to justify their information cost. \textbf{Message control} then regulates the per-agent feature bandwidth on the learned topology, compressing messages so that the surviving communication channels carry task-relevant content. Together, these two controls learn not only which edges should exist, but also how much information should flow through the resulting graph.

\noindent\textbf{Contributions.}
\begin{enumerate}
\item \textbf{A new method for heterogeneous coordination-graph learning.}\;
We propose HIBCG, to our knowledge the first method that jointly learns a sparse topology---with group-aware density control per edge block---and a compressed per-agent message representation matched to that topology---with per-agent bandwidth allocation---using closed-form Gaussian KL penalties built on a group-aligned block-diagonal prior.

\item \textbf{Theoretical guarantees via the GIB framework.}\;
We establish that the joint GIB objective decomposes into structural and message paths via the mutual-information chain rule (Prop.~\ref{prop:dual-path}); that the block-diagonal prior family always contains the flat prior as a special case, providing a no-regret guarantee for group-aligned priors (Remark~\ref{rem:no-regret}); that the structural penalty decomposes additively across group blocks (Prop.~\ref{prop:group-decomp}); and that under standard value-decomposition assumptions, reducing the TD loss increases a lower bound on the IB relevance term, removing the need for a separate mutual-information estimator (Prop.~\ref{prop:relevance}).

\item \textbf{Broad empirical validation.}\;
HIBCG adds only $3.8\%$ wall-clock overhead. Experiments on SMACv1, SMACv2, and MAgent Battle (up to 100 agents) show state-of-the-art performance and scalability, with pronounced gains on heterogeneous multi-role tasks. 
Comprehensive ablations and theory-aligned diagnostics validate the group-aligned prior and dual-path design, confirming the group-aligned prior as the primary source of gains on heterogeneous tasks.
\end{enumerate}

The remainder of this paper is organised as follows.
Section~\ref{sec:related} reviews related work.
Section~\ref{sec:prelim} presents preliminaries.
Section~\ref{sec:method} develops the HIBCG framework---method and theoretical analysis---following the three-stage architecture in Figure~\ref{fig:architecture}.
Section~\ref{sec:experiments} reports experiments.
Section~\ref{sec:method-discussion} discusses the relation to the closest prior methods.
Section~\ref{sec:conclusion} concludes.
All proofs and additional analyses are provided in the Appendix.

\section{Related Work}
\label{sec:related}

\subsection{Graph Learning for MARL}
\label{sec:related-graph-ib}

Graph neural networks (GNNs)~\cite{DBLP:conf/iclr/KipfW17,DBLP:conf/aaai/DuanXQ022,duan2024layerdiverse,10255371,DBLP:journals/eswa/YaoHLDQYS25} provide a natural substrate for MARL coordination, where agents are nodes and edges define information flow. Early graph-based MARL methods such as DGN~\cite{DGN}, NerveNet~\cite{NerveNet}, DCG~\cite{DCG}, and DICG~\cite{DICG} showed that graph message passing can improve credit assignment and coordination. Later work focuses on learning more scalable or adaptive topologies: attention-based routing~\cite{TarMAC,MAGIC,GA2NET}, sparse coordination graphs~\cite{SOPCG,CASEC,LTSCG,CommFormer,DBLP:conf/icml/LinL24}, recurrent graph message passing for generalisation~\cite{DBLP:conf/atal/WeilBAM24}, hand-designed scalable topologies~\cite{HAMA,ExpoComm}, and dynamic factor-graph generation~\cite{DDFG}.

These methods improve \emph{how} a graph is parameterised or updated, but edge existence is usually determined by homogeneous scores such as attention weights, thresholds, variance payoffs, or learned generation policies. 
They do not provide a theoretical criterion for deciding whether a particular edge should exist under different structural relationships. 
Role- and group-aware MARL methods such as ROMA~\cite{ROMA}, VAST~\cite{VAST}, REFIL~\cite{REFIL}, SOG~\cite{SOG}, and GoMARL~\cite{GoMARL} recognise that agents may form functional sub-teams, but they do not learn a coordination-graph topology.
Our earlier GACG~\cite{GACG} is, to our knowledge, the first method that jointly learns a sparse coordination graph and an agent group partition; yet it still decides edge existence by hard thresholds, without a principled compression cost tied to each group block.

\subsection{Communication-Efficient MARL}

Communication is a second major axis of cooperative MARL. Classic differentiable communication methods~\cite{foerster2016learning,CommFormer,TarMAC} and scheduling or pruning methods~\cite{kim2019schednet,DBLP:conf/aaai/MaoZXGN20,DBLP:conf/ifaamas/DolanNAB25} reduce when or with whom agents communicate, while information-theoretic methods such as NDQ~\cite{NDQ}, MASIA~\cite{MASIA}, and MAGI~\cite{MAGI} encourage compact or robust messages. The information bottleneck principle~\cite{IB_Tishby,GIB_Wu,alemi2017fixing,igl2019generalization} provides a natural language for this goal, and our earlier BVME~\cite{BVME} applies variational message compression on top of GACG's sparse graph.
These approaches, however, apply a global or topology-agnostic regulariser and do not couple message capacity with edge existence across structurally different group blocks---the gap that HIBCG addresses via per-block topology learning and per-agent message control.

\section{Preliminaries}
\label{sec:prelim}

\subsection{Cooperative Dec-POMDPs}
We consider cooperative multi-agent tasks modeled as a Decentralized Partially Observable Markov
Decision Process (Dec-POMDP)~\cite{DBLP:series/sbis/OliehoekA16} with tuple
$\langle \mathcal{A}, \mathcal{S}, \{\mathcal{U}_i\}_{i=1}^n, P, \{\mathcal{O}_i\}_{i=1}^n, \{\pi_i\}_{i=1}^n, R, \gamma \rangle$.
At time $t$, each agent $i\!\in\!\mathcal{A}$ receives $o_i^t\!\in\!\mathcal{O}_i$ and selects
$u_i^t\!\in\!\mathcal{U}_i$ via a local policy $\pi_i(u_i^t\mid\tau_i^t)$ where
$\tau_i^t=(o_i^0,u_i^0,\dots,o_i^t)$.
The joint action $\mathbf{u}^t=(u_1^t,\dots,u_n^t)$ yields the next state via
$P(s^{t+1}\!\mid s^t,\mathbf{u}^t)$ and a shared reward $R(s^t,\mathbf{u}^t)$.
Over an episode of length $T$ with discount $\gamma\in[0,1)$, the goal is to maximize the discounted return; equivalently, learn a joint action-value $Q_{\mathrm{tot}}(s,\mathbf{u})=\mathbb{E}[\sum_{t=0}^{T}\gamma^t R(s^t,\mathbf{u}^t)\mid s^0\!=\!s,\mathbf{u}^0\!=\!\mathbf{u}]$, typically factored from per-agent utilities via monotonic mixing~\cite{DBLP:conf/icml/RashidSWFFW18}.

\subsection{Group-Aware Coordination Graphs (GACG)}
\label{sec:GACG}

GACG learns a \emph{Gaussian} distribution over all $n^2$ candidate edges,
\begin{equation}
\label{eq:gacg_gauss_final}
\mathrm{vec}(\mathcal{E}^{t}) \;\sim\; \mathcal{N}\!\big(\boldsymbol{\mu}^{t},\,\widehat{\mathbf{M}}^{t}\big),
\end{equation}
where the mean $\boldsymbol{\mu}^{t}\!\in\!\mathbb{R}^{n^2}$ encodes agent-pair importance: embeddings $\hat{o}_i^{\,t}=f_{oe}(o_i^t)$ are scored as $\mu_{ij}^{t}=f_{ap}(\hat{o}_i^{\,t},\hat{o}_j^{\,t})$ and vectorized into $\boldsymbol{\mu}^{t}=\mathrm{vec}([\mu_{ij}^{t}]_{i,j})$; and the covariance $\widehat{\mathbf{M}}^{t}\!\in\!\mathbb{R}^{n^2\times n^2}$ encodes group-level dependence: a time-varying partition $\mathcal{G}^{t}=\{g_1,\dots,g_m\}$ (we drop the time superscript when clear from context), produced as $\mathcal{G}^{t}=f_g(\mathcal{O}^{t-k:t})$, induces a block mask $M_{ij}^{t}=\mathbb{I}[\text{agents }i,j\text{ in same group}]$ and $\widehat{\mathbf{M}}^{t}=\mathrm{vec}(M^{t})\mathrm{vec}(M^{t})^{\top}$, so co-group edges share higher correlated strength.
Further details are given in GACG~\cite{GACG}.
In HIBCG, GACG supplies the initial topology and partition (Stage~1); Stages~2--3 then refine layer-wise adjacency and regulate per-agent message bandwidth (\S\ref{sec:method})---adding the differentiated capacity GACG lacks.

\begin{figure*}[t]
       \centering
       \includegraphics[width=\textwidth]{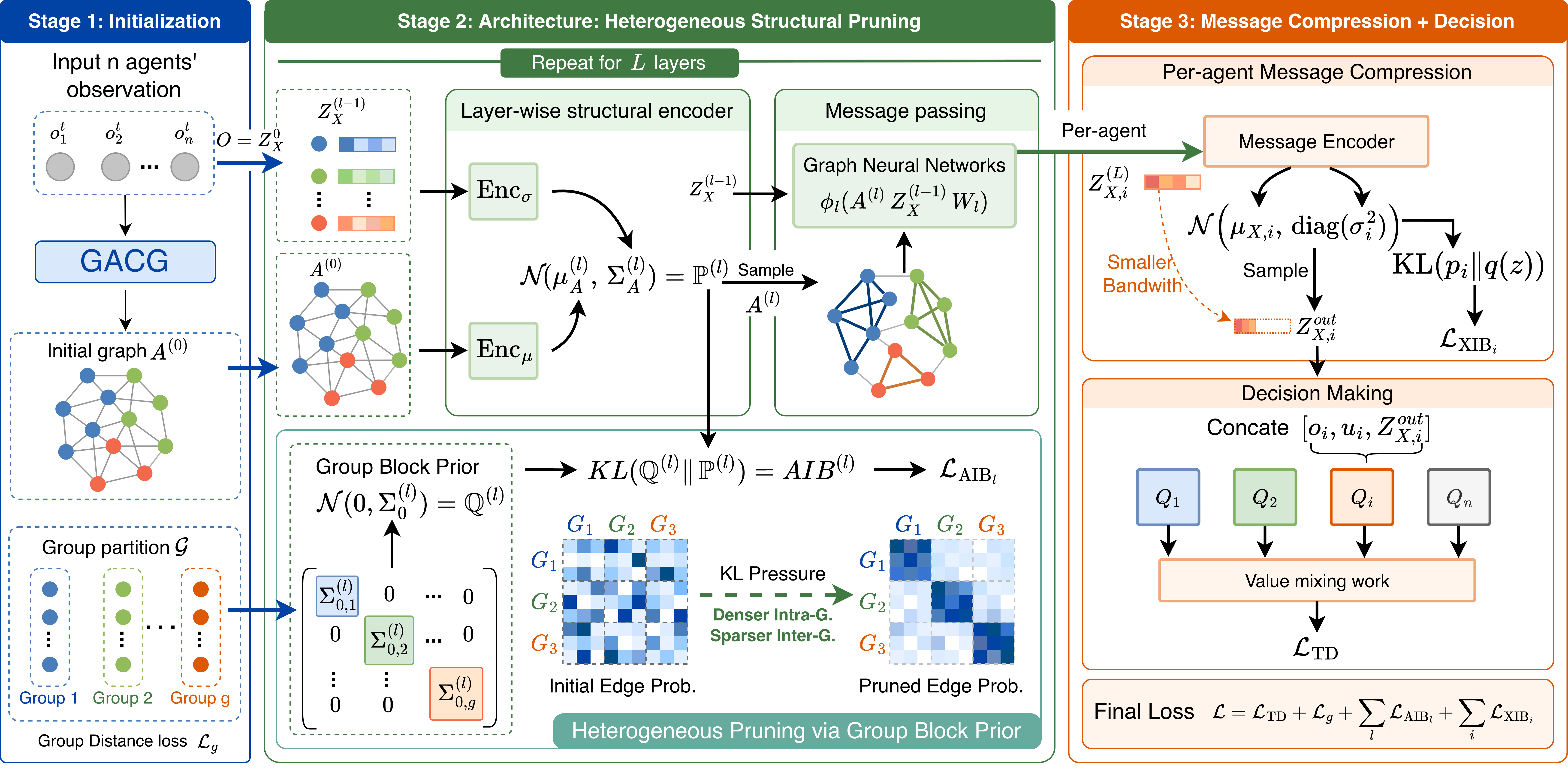}
       \caption{%
         \textbf{HIBCG architecture overview.}
         \textbf{Stage~1 (Initialisation):} GACG produces the initial graph
         $A^{(0)}$ and group partition~$\mathcal{G}$ from agent observations.
         \textbf{Stage~2 (Group-Differentiated Structural Pruning):} At each GNN
         layer~$l$, a layer-wise structural encoder maps $(A^{(0)},\,Z_X^{(l-1)})$
         to Gaussian parameters $\mathbb{P}^{(l)}$; the sampled-and-gated adjacency
         $\widetilde{A}^{(l)}$ gates message passing.
         A group-aligned block-diagonal prior $\mathbb{Q}^{(l)}$ applies
         asymmetric KL pressure: a larger $\sigma_{\mathrm{intra}}^2$ preserves
         dense intra-group edges while a smaller $\sigma_{\mathrm{cross}}^2$
         aggressively prunes inter-group edges (bottom: initial vs.\ pruned edge
         probability).
         \textbf{Stage~3 (Message Compression $+$ Decision):} After all $L$ layers,
         a per-agent message encoder compresses the aggregated representation
         $Z_{X,i}^{(L)}$ into a lower-bandwidth code $Z_{X,i}^{\mathrm{out}}$,
         which is concatenated with local inputs for $Q$-value computation and
         value mixing.
         The final loss combines the TD error, the GACG group-distance loss, and
         the closed-form structural and message KL penalties.
       }
       \label{fig:architecture}
\end{figure*}

\subsection{The Graph Information Bottleneck}
\label{sec:ib-principle}

The \emph{graph information bottleneck} (GIB)~\cite{GIB_Wu} extends the information bottleneck (IB) principle~\cite{IB_Tishby} to graph-structured data. For a forward pass with data $\mathcal{D}=(A,X)$---where $X=[x_1;\dots;x_n]\!\in\!\mathbb{R}^{n\times d_{\mathrm{in}}}$ stacks per-agent input features (each $x_i$ concatenates the local observation $o_i$ with optional auxiliary inputs such as the last action and an agent identifier) and $A$ is any explicit adjacency available at input time---processed by an $L$-layer GNN with task target $Y$ and learned representation $Z$, the GIB objective trades off \emph{relevance}~vs.\ \emph{compression}:
\begin{equation}
\label{eq:gib-objective}
\min_Z\;
-\,I(Y;\,Z)
\;+\;\beta\,I(\mathcal{D};\,Z),
\qquad \beta>0,
\end{equation}
where $I(\cdot\,;\cdot)$ denotes mutual information, the first term encourages $Z$ to be predictive of $Y$, and the second penalises the total information that $Z$ retains about the raw input.
The GNN builds $Z=Z_X^{(L)}$ through successive message-passing layers that produce intermediate latents for both graph structure $\{Z_A^{(l)}\}$ and node features $\{Z_X^{(l)}\}$, and the compression term $I(\mathcal{D};Z_X^{(L)})$ can be upper-bounded via layer-wise variational bounds, yielding practical, closed-form regularisers when the encoder distributions are Gaussian~\cite{GIB_Wu}.

Although we adopt GIB as our theoretical tool, effectively instantiating it for heterogeneous coordination-graph learning is not straightforward. Three design questions remain:
\emph{(Q1)~How should connectivity compression (which edges survive, and with what block-wise density) be separated from message-bandwidth compression (per-agent capacity on the learned graph)?
(Q2)~How should functional group structure inform the structural prior, rather than applying the same isotropic penalty to every edge block?
(Q3)~How should topology learning and message control be coupled during training, rather than optimised independently?}
Section~\ref{sec:method} answers all three:
Q1 via the dual-path architecture (\S\ref{sec:framework}),
Q2 via the group-aligned block-diagonal prior (\S\ref{sec:aib}),
and Q3 via the coupled training objective (\S\ref{sec:xib}).

\section{Method: HIBCG}
\label{sec:method}

This section introduces HIBCG in two parts.
\textbf{Framework and architecture (\S\ref{sec:framework}):} we formulate the heterogeneous graph-learning objective and describe the three-stage architecture previewed in Figure~\ref{fig:architecture}---(Stage~1) GACG produces an initial group-aware graph $A^{(0)}$ and partition $\mathcal{G}$; (Stage~2) Gaussian structural encoders refine the topology with a group-aligned prior; (Stage~3) per-agent encoders compress messages for decentralised decision making. The full training procedure is summarised in Algorithm~\ref{alg:hibcg}.
\textbf{Theoretical Analysis (\S\ref{sec:theory}):} we establish the guarantees that underpin the framework, including the dual-path decomposition, the group-aligned prior, and the relevance bound.

\subsection{Framework and Three-Stage Architecture}
\label{sec:framework}
\label{sec:overview}

Existing coordination-graph learners operate in the \emph{edge-uniform} regime characterised below.

\begin{definition}[Edge-uniform IB coordination graph]
\label{def:flat-ib}
An \emph{edge-uniform IB coordination graph} is defined by a graph-learning loss of the form
\begin{equation}
\label{eq:flat-ib}
\mathcal{L}
= \mathcal{L}_{\mathrm{task}}
+ \beta\,\mathcal{R}(\theta),
\qquad \beta\ge 0,
\end{equation}
where $\mathcal{L}_{\mathrm{task}}$ is the primary training objectives (e.g., TD error and any non-IB graph-learning losses), $\mathcal{R}(\theta)$ is a single, global compression penalty (e.g., a KL on the entire GNN output), and $\beta\!\ge\!0$ is a \emph{uniform} trade-off weight applied identically to all agents and all edges.
\end{definition}

\noindent
Definition~\ref{def:flat-ib} covers both explicit ($\beta>0$, e.g.\ MAGI~\cite{MAGI}) and implicit ($\beta=0$,
 e.g.\ GACG~\cite{GACG}, DCG~\cite{DCG}) edge-uniform graph learners.
We propose instead a \emph{heterogeneous} graph learner that jointly learns heterogeneous connectivity for edges and allocates differentiated message capacity per agent:

\begin{definition}[Heterogeneous IB coordination graph]
\label{def:ibcg}
A \emph{heterogeneous IB coordination graph} is defined by a graph-learning loss of the form
\begin{equation}
\label{eq:hibcg-objective}
\mathcal{L}
= \mathcal{L}_{\mathrm{task}}
+ \underbrace{\sum_{g}\lambda_A^{(g)}\,\mathcal{R}_A^{(g)}(\theta)}_{\text{heterogeneous connectivity}}
+ \underbrace{\sum_{i}\lambda_X^{(i)}\,\mathcal{R}_X^{(i)}(\theta)}_{\text{differentiated message capacity}},
\end{equation}
where the agent partition $\mathcal{G}=\{g_1,\dots,g_m\}$ induces $m^2$ edge blocks $\mathcal{G}'\!\triangleq\!\{g_a\!\times\!g_b: g_a,g_b\!\in\!\mathcal{G}\}$; $\mathcal{R}_A^{(g)}$ is a \emph{per-group} structural compression penalty on edge block $g\!\in\!\mathcal{G}'$; $\mathcal{R}_X^{(i)}$ is a \emph{per-agent} feature compression penalty; and $\lambda_A^{(g)},\lambda_X^{(i)}\!\ge\!0$ are \emph{independent} capacity weights.
An edge-uniform IB graph (Def.~\ref{def:flat-ib}) is recovered by setting $|\mathcal{G}|=1$ and $\lambda_A^{(g)}=\lambda_X^{(i)}=\beta$.
\end{definition}

\noindent
The $\sum_g$ term makes connectivity heterogeneous: each edge block $g\!\in\!\mathcal{G}'$ carries its own structural penalty $\mathcal{R}_A^{(g)}$ and capacity weight $\lambda_A^{(g)}$, so different blocks can be regularised with different compression patterns rather than a single global constraint.
The $\sum_i$ term makes message capacity heterogeneous: each agent $i$ carries its own feature penalty $\mathcal{R}_X^{(i)}$ and capacity weight $\lambda_X^{(i)}$, so agents can be compressed to different bandwidths independently.

The three stages below describe how Eq.~\eqref{eq:hibcg-objective} is realised as concrete network components; the theoretical guarantees underpinning this objective are given in \S\ref{sec:theory}.

\subsubsection{Stage 1: Group-Aware Initial Graph}
\label{sec:init}

GACG (\S\ref{sec:GACG}) takes $O^t=[o_1^t;\dots;o_n^t]\!\in\!\mathbb{R}^{n\times d_{\mathrm{obs}}}$ and produces the initial graph $A^{(0)}$ together with the group partition $\mathcal{G}^t$ by sampling from its learned Gaussian (Eq.~\eqref{eq:gacg_gauss_final}):
\begin{equation}
\label{eq:base-struct}
z_A^{(0)} \sim \mathcal{N}\!\big(\boldsymbol{\mu}^{t},\,\widehat{\mathbf{M}}^{t}\big),
\qquad
A^{(0)}=\mathrm{reshape}(z_A^{(0)}).
\end{equation}
In parallel, the same observations are used to form the initial node features $Z_X^{(0)}=f_{\mathrm{in}}(O^t)\!\in\!\mathbb{R}^{n\times d_0}$, where $f_{\mathrm{in}}$ is a lightweight MLP that maps each agent's observation $o_i^t$ (optionally concatenated with the last action $u_i^{t-1}$ and an agent identifier) into the GNN input space.
Stage~1 therefore yields a group-aware topology $A^{(0)}$ and per-agent features $Z_X^{(0)}$, but a \emph{single} static graph with no per-layer control over structural information flow. Stages~2 and 3 below close that gap by instantiating the two compression terms of Eq.~\eqref{eq:hibcg-objective}.

\subsubsection{Stage 2: Group-Differentiated Structural Pruning}
\label{sec:aib}

Stage~2 refines the static initial graph $A^{(0)}$ and observation-derived features $Z_X^{(0)}$ into a sequence of learned post-gating adjacencies $\{\widetilde{A}^{(l)}\}_{l=1}^{L}$ that gate message passing at each depth.
It comprises three components: a per-layer Gaussian structural encoder, a group-aligned block-diagonal prior, and closed-form per-block KL penalties---collectively the \emph{adjacency information bottleneck} (AIB)---with group-differentiated weights.

\noindent\textbf{Layer-wise structural encoder.}\;
Because each message-passing layer transforms agent representations $Z_X^{(l-1)}\!\mapsto\!Z_X^{(l)}$, the importance of an edge for message passing changes with depth: an edge needed for early exchange may become redundant once its content has been absorbed by neighbours, while new inter-agent dependencies can emerge after multi-hop aggregation. A single shared adjacency cannot capture this depth-varying pattern, so the AIB re-evaluates the structural latent at every layer from $A^{(0)}$ and the current agent features $Z_X^{(l-1)}$ ($Z_X^{(0)}=f_{\mathrm{in}}(O^t)$ from Stage~1):
\begin{equation}
\label{eq:a-encoder}
\mathbb{P}(Z_A^{(l)}\!\mid\!A^{(0)},Z_X^{(l-1)})=\mathcal{N}(\mu_A^{(l)},\Sigma_A^{(l)}),\qquad
\widetilde{A}^{(l)}=g_l(Z_A^{(l)}),
\end{equation}
where $g_l$ is a gating function (e.g.\ sigmoid). Sampling uses the reparameterisation trick $\tilde{z}_A^{(l)}\!=\!\mu_A^{(l)}+\delta\,\sigma_A^{(l)} \odot \varepsilon$ with $\varepsilon\!\sim\!\mathcal{N}(0,I)$ and a noise scale $\delta\!\in\!(0,1]$ that controls the amount of structural exploration. After gating, layer-$l$ node features are computed by graph message passing, $Z_X^{(l)}=\phi_l(\widetilde{A}^{(l)}\,Z_X^{(l-1)}\,W_l)$, where $W_l\!\in\!\mathbb{R}^{d_{l-1}\times d_l}$ is the layer-$l$ trainable linear transform and $\phi_l$ is an element-wise nonlinearity.

\noindent\textbf{Group-aware blocking and prior.}\;
At each layer~$l$, we partition the $n^2$ edge latents into blocks $B_{l,g}\subseteq\{1,\dots,n\}^2$ ($k_{l,g}=|B_{l,g}|$) over $g\!\in\!\mathcal{G}'$ (Def.~\ref{def:ibcg}) and place a zero-mean block-diagonal prior over the structural latent, assigning each block its own covariance $\Sigma_{0,g}^{(l)}$:
\begin{equation}
\label{eq:prior}
\mathbb{Q}(Z_A^{(l)})=\mathcal{N}(0,\,\mathrm{blkdiag}(\Sigma_{0,g}^{(l)})_{g\in\mathcal{G}'}).
\end{equation}
In the default configuration we set $\Sigma_{0,g}^{(l)}=\sigma_{0,g}^2 I$ with $\sigma_{0,g}=\sigma_{\mathrm{intra}}$ for intra-group blocks ($g_a\!=\!g_b$) and $\sigma_{0,g}=\sigma_{\mathrm{cross}}$ for inter-group blocks ($g_a\!\neq\!g_b$), encoding the prior expectation that within-team edges are denser than cross-team links.
Remark~\ref{rem:no-regret} shows that aligning the prior with $\mathcal{G}'$ never loosens the variational bound relative to a flat isotropic prior.

\noindent\textbf{Closed-form group-differentiated pruning.}\;
The block-diagonal structure of $\mathbb{Q}$ (Eq.~\eqref{eq:prior}), together with a diagonal (hence factorising) block encoder, makes each block's KL independent (formalised by Prop.~\ref{prop:group-decomp}). With $\Sigma_{A,g}^{(l)}=\mathrm{diag}(\sigma_1^2,\dots,\sigma_{k_{l,g}}^2)$ and isotropic block prior $\Sigma_{0,g}^{(l)}=\sigma_{0,g}^2 I$ (the setting used in practice):
\begin{equation}
\label{eq:gauss-kl}
\begin{aligned}
\mathrm{KL}\big(
\mathcal{N}(\boldsymbol{\mu}_{A,g}^{(l)},\Sigma_{A,g}^{(l)})
&\,\Vert\,
\mathcal{N}(\mathbf{0},\sigma_{0,g}^2 I)
\big) \\
&\quad= \frac{1}{2}\sum_{d=1}^{k_{l,g}}\!\left(
\frac{\sigma_d^2+\mu_d^2}{\sigma_{0,g}^2}-1+\log\frac{\sigma_{0,g}^2}{\sigma_d^2}
\right),
\end{aligned}
\end{equation}
where $\mu_d^2/\sigma_{0,g}^2$ penalises high-magnitude edges, $\sigma_d^2/\sigma_{0,g}^2$ penalises diffuse encoding, and the log term encourages matching the prior scale. The KL is in closed form and requires no Monte-Carlo estimation; the structural latent itself is still sampled via the reparameterisation trick of Eq.~\eqref{eq:a-encoder}.
The per-block AIB penalty and its per-layer aggregation are:
\begin{align}
\label{eq:aib-hat}
\widehat{\mathrm{AIB}}^{(l,g)}&=\mathbb{E}_{\mathcal{D}}\!\left[\mathrm{KL}\!\left(\mathcal{N}(\boldsymbol{\mu}_{A,g}^{(l)},\Sigma_{A,g}^{(l)})\,\Vert\,\mathcal{N}(0,\Sigma_{0,g}^{(l)})\right)\right], \notag\\
\widehat{\mathrm{AIB}}^{(l)}&=\textstyle\sum_{g\in\mathcal{G}'}\widehat{\mathrm{AIB}}^{(l,g)}.
\end{align}
This is the concrete instantiation of the $\sum_g\mathcal{R}_A^{(g)}$ term (heterogeneous connectivity) in Eq.~\eqref{eq:hibcg-objective}, with $\widehat{\mathrm{AIB}}^{(l,g)}$ estimating the layer-$l$ block penalty $\mathcal{R}_A^{(g)}$.
In the default HIBCG configuration we set $\lambda_A^{(l,g)}=\lambda_A$ for every block, so that group-differentiated compression is supplied entirely by the per-block prior scales $(\sigma_{0,g})_{g\in\mathcal{G}'}$ (i.e.\ $\sigma_{\mathrm{intra}}$ vs.\ $\sigma_{\mathrm{cross}}$). Per-block $\lambda_A^{(l,g)}$ remain available as a higher-flexibility variant; a constrained dual-ascent formulation is given in Appendix~\ref{app:dual-ascent}.

\subsubsection{Stage 3: Per-Agent Message Compression and Decision}
\label{sec:xib}

After the $L$-layer GCN stack produces $Z_X^{(L)}\!\in\!\mathbb{R}^{n\times d_L}$, a per-agent variational encoder---the \emph{feature information bottleneck} (XIB)---compresses it into a low-bandwidth code for decentralised decision making:
\begin{align}
\label{eq:x-encoder}
Z_X^{\mathrm{out}} &= \mu_X + \sigma_X\!\odot\!\varepsilon,\quad\varepsilon\!\sim\!\mathcal{N}(0,I), \notag\\
\mu_X &= h_\mu(Z_X^{(L)}),\quad \log\sigma_X^2 = h_\sigma(Z_X^{(L)}).
\end{align}
Each agent~$i$ concatenates its observation $o_i^t$ and last action $u_i^{t-1}$ with the compressed message $\tanh(Z_{X,i}^{\mathrm{out}})$ and feeds $[o_i^t,\,u_i^{t-1},\,\tanh(Z_{X,i}^{\mathrm{out}})]$ through its $Q$-network; utilities are mixed via QMIX into $Q_{\mathrm{tot}}$.

\noindent\textbf{Per-agent XIB penalty.}\;
The per-agent XIB term uses the same diagonal-Gaussian KL as the AIB (Eq.~\eqref{eq:gauss-kl}), with $\sigma_{0,g}$ replaced by a feature-level prior scale $\sigma_{X,0}$:
\begin{align}
\label{eq:xib-hat}
\widehat{\mathrm{XIB}}^{(i)}&=\mathbb{E}_{\mathcal{D}}\!\left[\mathrm{KL}\!\left(\mathcal{N}(\mu_{X,i},\mathrm{diag}(\sigma^2_{X,i}))\,\Vert\,\mathcal{N}(0,\sigma_{X,0}^2 I)\right)\right], \notag\\
\widehat{\mathrm{XIB}}&=\textstyle\sum_{i=1}^n\widehat{\mathrm{XIB}}^{(i)}.
\end{align}
This is the concrete instantiation of the $\sum_i\mathcal{R}_X^{(i)}$ term (differentiated message capacity) in Eq.~\eqref{eq:hibcg-objective}.
The key difference from AIB is granularity: AIB operates over edge blocks at every layer, while XIB is applied per agent after the GCN stack. This is consistent with Prop.~\ref{prop:dual-path}: message passing $Z_X^{(l)}=\phi_l(\widetilde{A}^{(l)}\,Z_X^{(l-1)}\,W_l)$ is deterministic given $(\widetilde{A}^{(l)},Z_X^{(l-1)})$, so $\mathrm{XIB}^{(l)}=0$ for all $l\!\in\!\{1,\dots,L\}$, and only the stochastic decision-facing code $Z_X^{\mathrm{out}}$ contributes a nonzero message penalty.

\noindent\textbf{Full training loss.}\;
The complete training loss instantiates Eq.~\eqref{eq:hibcg-objective}:
\begin{equation}
\label{eq:td-loss}
\begin{aligned}
\mathcal{L}= \underbrace{\mathbb{E}[(Q_{\mathrm{tot}}-y)^2]+\lambda_g\mathcal{L}_g}_{\mathcal{L}_{\mathrm{task}}}
& +\lambda_A\!\sum_l\widehat{\mathrm{AIB}}^{(l)}+\lambda_X\widehat{\mathrm{XIB}},
\end{aligned}
\end{equation}
where $\mathbb{E}[(Q_{\mathrm{tot}}-y)^2]$ is the TD loss with target $y=r+\gamma\max_{\mathbf{u}'}Q_{\mathrm{tot}}'(s',\mathbf{u}';\theta^-)$, $\mathcal{L}_g$ is the group distance loss of GACG (Eq.~(8))~\cite{GACG}, and $\lambda_A,\lambda_X\!\ge\!0$ are scalar capacity weights on the structural and message penalties (default: $\lambda_A^{(l,g)}=\lambda_A$ and $\lambda_X^{(i)}=\lambda_X$ for all blocks and agents).

\subsection{Theoretical Analysis}
\label{sec:theory}
\label{sec:ib_gacg_obj}

We now establish theoretical guarantees for the HIBCG objective in Eq.~\eqref{eq:hibcg-objective}.
The analysis proceeds in four parts:
(i)~a dual-path decomposition that justifies the AIB/XIB split;
(ii)~a no-regret guarantee for group-aligned structural priors;
(iii)~an additive per-block KL decomposition that enables group-differentiated control;
and (iv)~a relevance bound showing that the TD loss implicitly maximises $I(Y;Z_X^{\mathrm{out}})$.
All proofs are provided in the Appendix.

\subsubsection{Dual-Path Decomposition}
The chain rule of mutual information, applied to $(\mathcal{D};Z_A,Z_X)$ and combined with a layer-wise variational bound, gives an \emph{exact} split of the total information flow.

\begin{proposition}[Dual-Path Decomposition~\cite{GIB_Wu}]
\label{prop:dual-path}
Under two conditional independencies induced by the message-passing recursion---(CI-A): $Z_A^{(l)}\!\perp\!(\mathcal{D},Z_A^{(1:l-1)},Z_X^{(1:l-2)})\!\mid\!(A^{(0)},Z_X^{(l-1)})$; and (CI-X): $Z_X^{(l)}\!\perp\!(\mathcal{D},Z_X^{(1:l-2)},Z_A^{(1:l-1)})\!\mid\!(Z_X^{(l-1)},Z_A^{(l)})$---the joint information flow factorises as
\begin{equation}
\label{eq:chain-rule-decomp}
I(\mathcal{D};Z_A,Z_X)
\;=\;\underbrace{I(\mathcal{D};Z_A)}_{\text{AIB}}
\;+\;\underbrace{I(\mathcal{D};Z_X\mid Z_A)}_{\text{XIB}},
\end{equation}
and each term admits a per-layer variational upper bound:
\begin{equation}
\label{eq:aibxib-vub}
I(\mathcal{D};Z_A)\le\!\sum_{l=1}^{L}\!\mathrm{AIB}^{(l)},\qquad I(\mathcal{D};Z_X\!\mid\!Z_A)\le\!\sum_{l=1}^{L}\!\mathrm{XIB}^{(l)},
\end{equation}
with $\mathrm{AIB}^{(l)}\!\triangleq\!\mathbb{E}[\mathrm{KL}(\mathbb{P}(Z_A^{(l)}\!\mid\!\mathcal{D}_A^{(l)})\Vert\mathbb{Q}(Z_A^{(l)}))]$, where $\mathcal{D}_A^{(l)}\!\triangleq\!(A^{(0)},Z_X^{(l-1)})$ denotes the structural-encoder inputs, and $\mathrm{XIB}^{(l)}$ is defined analogously for a stochastic message encoder at layer~$l$ (in HIBCG, GCN message passing is deterministic, so $\mathrm{XIB}^{(l)}=0$ and only the post-stack code $Z_X^{\mathrm{out}}$ is regularised; see \S\ref{sec:xib}).
\end{proposition}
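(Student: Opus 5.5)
The decomposition in Eq.~\eqref{eq:chain-rule-decomp} needs no assumption at all: it is the chain rule of mutual information. Writing $Z_A=(Z_A^{(1)},\dots,Z_A^{(L)})$ and $Z_X=(Z_X^{(1)},\dots,Z_X^{(L)})$, I will expand $I(\mathcal{D};Z_A,Z_X)=H(Z_A,Z_X)-H(Z_A,Z_X\mid\mathcal{D})$. Splitting each joint entropy as $H(Z_A)+H(Z_X\mid Z_A)$ regroups the terms into $I(\mathcal{D};Z_A)+I(\mathcal{D};Z_X\mid Z_A)$. Conditions (CI-A) and (CI-X) are used only for the per-layer bounds in Eq.~\eqref{eq:aibxib-vub}.

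For the structural bound, I will apply the chain rule layer by layer:
\[
I(\mathcal{D};Z_A)=\sum_{l=1}^L I(\mathcal{D};Z_A^{(l)}\mid Z_A^{(1:l-1)}).
\]
The step I expect to be the main obstacle is bounding each summand by a quantity that depends only on the encoder inputs $\mathcal{D}_A^{(l)}=(A^{(0)},Z_X^{(l-1)})$. I would argue via data processing in a Markov chain. Enlarging the conditioned-on variable gives
\[
I(\mathcal{D};Z_A^{(l)}\mid Z_A^{(1:l-1)})\le I(\mathcal{D},Z_A^{(1:l-1)},Z_X^{(1:l-2)};Z_A^{(l)}).
\]
This holds because $I(\mathcal{D},Z_A^{(1:l-1)};Z_A^{(l)})=I(Z_A^{(1:l-1)};Z_A^{(l)})+I(\mathcal{D};Z_A^{(l)}\mid Z_A^{(1:l-1)})$, and adding $Z_X^{(1:l-2)}$ can only increase the left-hand side. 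By (CI-A), $Z_A^{(l)}$ depends on everything upstream only through $\mathcal{D}_A^{(l)}$. Here $A^{(0)}$ is part of $\mathcal{D}$ (or a function of it through Stage~1), and $Z_X^{(l-1)}$ is itself upstream. Data processing therefore gives the bound $I(\mathcal{D}_A^{(l)};Z_A^{(l)})$, possibly after adding $Z_X^{(l-1)}$ to the conditioning set so that the chain is valid.

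Next I will use the standard variational inequality
\[
I(U;V)=\mathbb{E}\,\mathrm{KL}\big(\mathbb{P}(V\mid U)\,\Vert\,\mathbb{P}(V)\big)\le\mathbb{E}\,\mathrm{KL}\big(\mathbb{P}(V\mid U)\,\Vert\,\mathbb{Q}(V)\big),
\]
which holds for any $\mathbb{Q}$ because the gap equals $\mathrm{KL}(\mathbb{P}(V)\Vert\mathbb{Q}(V))\ge0$. Applied with $U=\mathcal{D}_A^{(l)}$ and $V=Z_A^{(l)}$, it gives exactly $\mathrm{AIB}^{(l)}$. Summing over $l$ proves the first inequality in Eq.~\eqref{eq:aibxib-vub}.

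The message path is symmetric. I will write
\[
I(\mathcal{D};Z_X\mid Z_A)=\sum_l I(\mathcal{D};Z_X^{(l)}\mid Z_A,Z_X^{(1:l-1)}).
\]
Under (CI-X) and the analogous data-processing argument, each term is bounded by $I(Z_X^{(l)};\mathcal{D}\mid Z_X^{(l-1)},Z_A^{(l)})$ or by $I(Z_X^{(l)};Z_X^{(l-1)},Z_A^{(l)})$. The same variational KL bound then yields $\mathrm{XIB}^{(l)}$. A subtlety to handle is that the full $Z_A$, including later layers $Z_A^{(l+1:L)}$, appears in the conditioning set. I would follow \cite{GIB_Wu} and drop the future structural latents using the causal ordering of the recursion: they are generated downstream of $Z_X^{(l)}$.

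Finally, I will note the HIBCG specialisation. When $Z_X^{(l)}$ is a deterministic function of $(\widetilde A^{(l)},Z_X^{(l-1)})$, it carries no information about $\mathcal{D}$ beyond its inputs, so the conditional terms vanish and $\mathrm{XIB}^{(l)}=0$. All message compression then rests on the stochastic code $Z_X^{\mathrm{out}}$. (The variational KL against a continuous prior is degenerate here, so I would state this via the conditional mutual information rather than the KL.)
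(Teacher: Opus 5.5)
Your chain-rule identity and your structural bound are correct and follow the paper's Steps~2--4: layer-wise chain rule, source augmentation, data processing under (CI-A), then the variational KL. No extra conditioning is needed there. For $W=(\mathcal{D},Z_A^{(1:l-1)},Z_X^{(1:l-2)})$, condition (CI-A) gives $I(W;Z_A^{(l)})\le I(W,\mathcal{D}_A^{(l)};Z_A^{(l)})=I(\mathcal{D}_A^{(l)};Z_A^{(l)})$ directly.

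The genuine gap is the message path. Each summand $I(\mathcal{D};Z_X^{(l)}\mid Z_A,Z_X^{(1:l-1)})$ conditions on $Z_A^{(l+1:L)}$. These are common children of $A^{(0)}$ (a descendant of $\mathcal{D}$) and of $Z_X^{(l)}$. Conditioning on such colliders can create dependence, and no monotonicity lets you drop them ``by causal ordering''.

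In fact the message-path inequality is false under (CI-A) and (CI-X). Take $L=2$ and $\mathcal{D}=A^{(0)}=D$ a fair bit. Let $Z_X^{(0)}$ and $Z_A^{(1)}$ be constant, let $Z_X^{(1)}=B$ and $Z_X^{(2)}=B'$ be independent fair bits, and set $Z_A^{(2)}=D\oplus B$. Both conditional independencies hold, and $\mathrm{XIB}^{(1)}=\mathrm{XIB}^{(2)}=0$ with $\mathrm{Bern}(1/2)$ priors. Yet $I(\mathcal{D};Z_X\mid Z_A)=I(D;B\mid D\oplus B)=\log 2$. Both of your candidate per-layer bounds equal $0$ at $l=1$ in this example.

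The causal (interleaved) chain rule yields only the joint bound $I(\mathcal{D};Z_A,Z_X)\le\sum_l(\mathrm{AIB}^{(l)}+\mathrm{XIB}^{(l)})$. There, the slack $\sum_l\mathrm{AIB}^{(l)}-I(\mathcal{D};Z_A)$ absorbs the message term; in the example $I(\mathcal{D};Z_A)=0$ while $\mathrm{AIB}^{(2)}\ge\log 2$. The paper's Step~5 has the same hole. Its source-augmented term contains $Z_A^{(l+1:L)}$, which (CI-X) does not cover, and its closing note concedes this and falls back on deterministic message passing.

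That fallback, which you also use, fails at $l=1$. For $l\ge 2$ the summand does vanish, because $Z_X^{(l-1)}$ and $Z_A^{(l)}$ both lie in the conditioning set. But the $l=1$ summand is $I(\mathcal{D};Z_X^{(1)}\mid Z_A)$, and $Z_X^{(0)}=f_{\mathrm{in}}(O^t)$ is \emph{not} conditioned on. Given $Z_A$, the feature $Z_X^{(1)}=\phi_1(\widetilde{A}^{(1)}Z_X^{(0)}W_1)$ still varies with $\mathcal{D}$. So this term is positive, and generically infinite for continuous observations. The claim $I(\mathcal{D};Z_X\mid Z_A)\le\sum_l\mathrm{XIB}^{(l)}=0$ is therefore false in HIBCG as well; the paper asserts the same vanishing.

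Your parenthetical about the degenerate KL is the right diagnosis. Against a continuous prior, a deterministic layer has $\mathrm{XIB}^{(l)}=+\infty$, so the inequality holds only vacuously. What does hold, and what the method actually penalises, concerns the output code. The Stage~3 noise is fresh, so $\mathcal{D}\to Z_X^{(L)}\to Z_X^{\mathrm{out}}$ is a Markov chain. Hence $I(\mathcal{D};Z_X^{\mathrm{out}})\le I(Z_X^{(L)};Z_X^{\mathrm{out}})\le\widehat{\mathrm{XIB}}$, with the per-agent KLs adding because the encoder and the prior factorise.
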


\noindent
Eq.~\eqref{eq:chain-rule-decomp} is the structural reason HIBCG splits into an AIB path and an XIB path: they are the two terms of a single decomposition, not two parallel regularisers. The conditional structure $I(\mathcal{D};Z_X\!\mid\!Z_A)$ means that message compression is defined \emph{relative to} edge pruning, so tighter AIB pruning forces XIB to do more work on fewer surviving channels.

\subsubsection{No-Regret Property of Group-Aligned Priors}
The AIB penalty is a variational upper bound on the structural information retained at each layer: the better the prior $\mathbb{Q}$ matches the encoder's edge distribution, the tighter this bound, and the more faithfully the penalty reflects true compression.
A single isotropic prior $\mathcal{N}(0,\sigma_0^2 I)$ cannot accommodate different edge scales across group blocks (such as denser within-team links than cross-team links), so any such mismatch leaves the bound unnecessarily loose.
HIBCG's block-diagonal prior (Eq.~\eqref{eq:prior}) removes this limitation without risk:

\begin{remark}[No-Regret Guarantee of Group-Aligned Priors]
\label{rem:no-regret}
Let $\mathcal{Q}_{\mathrm{blk}}
=\big\{\mathcal{N}\!\big(0,\mathrm{blkdiag}(\sigma_{0,g}^{2}\,I_{k_g})_{g\in\mathcal{G}'}\big):
\sigma_{0,g}>0\big\}$
be the family of zero-mean block-diagonal Gaussian priors aligned with $\mathcal{G}'$ (Def.~\ref{def:ibcg}). Since the uniform prior $\mathbb{Q}_{\mathrm{flat}}=\mathcal{N}(0,\sigma_0^2 I)$ is a special case ($\sigma_{0,g}=\sigma_0$ for all $g$), we have $\mathbb{Q}_{\mathrm{flat}}\in\mathcal{Q}_{\mathrm{blk}}$. Therefore any optimally matched group-aligned prior $\mathbb{Q}_{\mathrm{group}}^{\star}\triangleq\arg\min_{\mathbb{Q}\in\mathcal{Q}_{\mathrm{blk}}}\mathrm{KL}(\bar{\mathbb{P}}\Vert\mathbb{Q})$ satisfies
\begin{equation}
\label{eq:tighter-bound}
\begin{aligned}
\mathbb{E}\!\Big[
\mathrm{KL}\!\big(
\mathbb{P}(Z_A^{(l)}\!\mid\!\mathcal{D}_A^{(l)})
&\,\Vert\, \mathbb{Q}_{\mathrm{group}}^{\star}
\big)\Big] \\
&\;\le\;
\mathbb{E}\!\Big[
\mathrm{KL}\!\big(
\mathbb{P}(Z_A^{(l)}\!\mid\!\mathcal{D}_A^{(l)})
\,\Vert\, \mathbb{Q}_{\mathrm{flat}}
\big)\Big],
\end{aligned}
\end{equation}
with equality iff $\sigma_0^2=(\mathrm{tr}(\Sigma_g)+\lVert\bar{\mu}_g\rVert^2)/k_g$ for every block $g$, where $\Sigma_g$ and $\bar{\mu}_g$ are the covariance and mean of the $g$-th block of the aggregate posterior and $k_g$ its dimension. The improvement is strict whenever the per-block second moments differ across groups (proof in Appendix~\ref{app:group-prior-proof}).
\end{remark}

\noindent
This no-regret property holds \emph{unconditionally} for any group partition: the block-diagonal family always contains the uniform prior, so injecting group information adds upside without theoretical risk. Well-separated groups yield a large gap; a homogeneous setting recovers the uniform bound automatically. Our practical fixed-scale instantiation ($\sigma_{\mathrm{intra}}\!\gg\!\sigma_{\mathrm{cross}}$) is motivated by this guarantee; in experiments the bound gap shrinks by $2.2$--$6.5\times$ across heterogeneous benchmarks (Table~\ref{tab:bound-tightening}). A worked $n\!=\!10$ example showing a $>75\%$ reduction is given in Appendix~\ref{app:example-10agents}.

\subsubsection{Per-Block Additive Control}
The block-diagonal prior enables per-group capacity weights $\lambda_A^{(g)}$ because the KL decomposes additively:

\begin{proposition}[Group-Aligned Block Decomposition]
\label{prop:group-decomp}
If $\mathbb{Q}(Z_A^{(l)})=\mathcal{N}\!\big(0,\mathrm{blkdiag}(\Sigma_{0,g}^{(l)})_{g\in\mathcal{G}'}\big)$ and the encoder factorises over $\mathcal{G}'$ as $\mathbb{P}(Z_A^{(l)}\!\mid\!\mathcal{D}_A^{(l)})=\prod_{g\in\mathcal{G}'}\mathbb{P}(Z_{A,g}^{(l)}\!\mid\!\mathcal{D}_A^{(l)})$, then
\begin{align}
\label{eq:group-decomp}
\mathrm{AIB}^{(l)}&=\sum_{g\in\mathcal{G}'}\mathrm{AIB}^{(l,g)}, \notag\\
\mathrm{AIB}^{(l,g)}&\triangleq\mathbb{E}_{\mathcal{D}_A^{(l)}}\!\left[\mathrm{KL}\!\left(\mathbb{P}(Z_{A,g}^{(l)}\!\mid\!\mathcal{D}_A^{(l)})\,\Vert\,\mathbb{Q}(Z_{A,g}^{(l)})\right)\right],
\end{align}
each summand being a closed-form Gaussian KL (Eq.~\eqref{eq:gauss-kl}). Proof: KL additivity, Appendix~\ref{app:block-decomp-proof}.
\end{proposition}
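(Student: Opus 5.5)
The plan is to prove the decomposition pointwise, for each fixed value of the structural-encoder inputs $\mathcal{D}_A^{(l)}$, and then take the expectation over $\mathcal{D}_A^{(l)}$, using linearity.

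\textbf{Step 1: the prior factorises over blocks.} A zero-mean Gaussian with block-diagonal covariance $\mathrm{blkdiag}(\Sigma_{0,g}^{(l)})_{g\in\mathcal{G}'}$ has zero cross-covariance between distinct blocks. Since jointly Gaussian vectors with zero cross-covariance are independent, its density factorises as
\[
q(z)=\prod_{g\in\mathcal{G}'}q_g(z_g), \qquad q_g=\mathcal{N}(0,\Sigma_{0,g}^{(l)}).
\]
The factor $q_g$ is exactly the marginal $\mathbb{Q}(Z_{A,g}^{(l)})$ appearing in Eq.~\eqref{eq:group-decomp}. This also confirms that the marginal is well defined and Gaussian. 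Equivalently, one can see the factorisation directly from the fact that the determinant and the inverse of a block-diagonal matrix act blockwise.

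\textbf{Step 2: KL additivity for product measures.} For fixed $\mathcal{D}_A^{(l)}$, write $p(z)=\prod_g p_g(z_g)$ by the factorisation hypothesis. Then
\[
\log\frac{p(z)}{q(z)}=\sum_g\log\frac{p_g(z_g)}{q_g(z_g)}.
\]
Taking the expectation under $p$, each summand depends only on $z_g$, whose marginal under $p$ is $p_g$. Hence
\[
\mathrm{KL}(p\Vert q)=\sum_g\mathrm{KL}(p_g\Vert q_g).
\]

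\textbf{Step 3: expectation and closed form.} Taking $\mathbb{E}_{\mathcal{D}_A^{(l)}}$ of both sides and exchanging the finite sum with the expectation gives $\mathrm{AIB}^{(l)}=\sum_g\mathrm{AIB}^{(l,g)}$. For the closed-form claim, note that in HIBCG each encoder block is a diagonal Gaussian and each prior block is isotropic. The per-block KL is then the standard Gaussian KL formula, and it reduces coordinatewise to Eq.~\eqref{eq:gauss-kl}.

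\textbf{Main obstacle.} The argument is elementary, so the only real point of care is measure-theoretic bookkeeping. The decomposition can fail only if some block KL is infinite, in which case both sides are $+\infty$. Step 2 also needs the exchange of expectation and sum to be justified; this holds because each term is either integrable or nonnegative, and for Gaussians with positive-definite covariances every term is finite.

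It is also worth stressing that the factorisation of the encoder is essential, not cosmetic. If the blocks of $\mathbb{P}$ were correlated, $\mathrm{KL}(p\Vert q)$ would exceed $\sum_g\mathrm{KL}(p_g\Vert q_g)$ by the total correlation (multi-information) of $p$ across blocks. In that case only an inequality would hold, not an equality.
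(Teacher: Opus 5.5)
Your proposal is correct and follows essentially the same route as the paper's proof in Appendix~\ref{app:block-decomp-proof}. Both arguments factorise the block-diagonal prior into per-block Gaussians, apply KL additivity for product distributions under the factorised encoder, take the expectation over $\mathcal{D}_A^{(l)}$, and identify each block term as a closed-form Gaussian KL; your extra material (deriving additivity from the log-ratio, the extended-real bookkeeping, and the total-correlation excess $\mathrm{KL}(p\Vert q)=\mathrm{TC}(p)+\sum_g\mathrm{KL}(p_g\Vert q_g)$ when encoder blocks are correlated) is accurate refinement rather than a different argument.
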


\noindent
Setting $\sigma_{\mathrm{intra}}\!\gg\!\sigma_{\mathrm{cross}}$ therefore enforces sparse inter-team links while preserving dense intra-team connectivity---the group-differentiated edge pruning that defines HIBCG.

\subsubsection{Relevance via TD Loss}
The GIB objective Eq.~\eqref{eq:gib-objective} also has a relevance term $-I(Y;Z_X^{\mathrm{out}})$, which is in general intractable. The next proposition---the main theoretical result---shows this term is implicitly handled by the standard TD loss, so HIBCG requires no separate mutual-information estimator. In our architecture the decision-facing representation is the compressed code $Z_X^{\mathrm{out}}$ (Stage~3).

\begin{proposition}[Relevance via Q-Value Optimisation]
  \label{prop:relevance}
  Let $Y=(a_1^*,\dots,a_n^*)$ denote the optimal joint action,
  $K=\prod_i|\mathcal{U}_i|$ the joint action-space size, and assume:
  \begin{enumerate}
  \item[(A1)] QMIX-style monotonic value decomposition with greedy action
  selection $a_i=\arg\max_{u_i}Q_i(\tau_i,u_i,Z_{X,i}^{\mathrm{out}})$;
  \item[(A2)] the target network has converged, so that
  $\mathcal{L}_{\mathrm{TD}}=\mathbb{E}[e^2]$ with
  $e(s,\mathbf{u})=Q_{\mathrm{tot}}(s,\mathbf{u})-Q^*(s,\mathbf{u})$;
  \item[(A3)] the minimum action-value gap
  $\Delta_{\min}\triangleq\min_{s,\,\mathbf{u}\neq Y}\!
  (Q^*(s,Y)-Q^*(s,\mathbf{u}))>0$.
  \end{enumerate}
  Then, with $c\triangleq\Delta_{\min}^2/(4K)>0$,
  \begin{equation}
  \label{eq:relevance-bound}
  I(Y;\,Z_X^{\mathrm{out}})
  \;\ge\;
  H(Y) \;-\; h\!\Big(\tfrac{\mathcal{L}_{\mathrm{TD}}}{c}\Big)
       \;-\; \tfrac{\mathcal{L}_{\mathrm{TD}}}{c}\,\log(K\!-\!1),
  \end{equation}
  whenever $\mathcal{L}_{\mathrm{TD}}/c\!\le\!(K\!-\!1)/K$ (vacuous otherwise),
  where $h(p)=-p\log p-(1\!-\!p)\log(1\!-\!p)$ is the binary entropy.
\end{proposition}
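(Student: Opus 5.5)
The plan is to combine Fano's inequality with a bound showing that small TD error forces the greedy joint action to equal $Y$ with high probability. Let $\hat Y=(a_1,\dots,a_n)$ be the greedy joint action from (A1). Each $a_i$ is a deterministic function of $(\tau_i,Z_{X,i}^{\mathrm{out}})$, so $\hat Y$ is an estimator of $Y$ built from $Z_X^{\mathrm{out}}$ together with local histories. I will treat the local inputs as part of the conditioning, or equivalently regard $Z_X^{\mathrm{out}}$ as the decision-facing representation that determines $\hat Y$, and note this explicitly as an interpretive step. Fano's inequality over an alphabet of size $K$ gives
\[
H(Y\mid Z_X^{\mathrm{out}})\le h(P_e)+P_e\log(K-1),\qquad P_e=\Pr[\hat Y\neq Y].
\]
Hence $I(Y;Z_X^{\mathrm{out}})=H(Y)-H(Y\mid Z_X^{\mathrm{out}})\ge H(Y)-h(P_e)-P_e\log(K-1)$. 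The right-hand side is nonincreasing in $P_e$ on $[0,(K-1)/K]$, because $h(p)+p\log(K-1)$ increases there. So it suffices to show $P_e\le \mathcal{L}_{\mathrm{TD}}/c$ and substitute. This monotonicity requirement is exactly the origin of the condition $\mathcal{L}_{\mathrm{TD}}/c\le(K-1)/K$.

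The core step is bounding $P_e$ by the TD loss. Under (A1), QMIX monotonicity gives IGM: the greedy decentralised actions maximise $Q_{\mathrm{tot}}(s,\cdot)$. So at a state $s$ where $\hat Y\neq Y$, we have $Q_{\mathrm{tot}}(s,\hat Y)\ge Q_{\mathrm{tot}}(s,Y)$. Combining this with (A3), $Q^*(s,Y)-Q^*(s,\hat Y)\ge\Delta_{\min}$, yields
\[
e(s,\hat Y)-e(s,Y)\ge\Delta_{\min}.
\]
It follows that $\max(|e(s,\hat Y)|,|e(s,Y)|)\ge\Delta_{\min}/2$, and therefore $\sum_{\mathbf u}e(s,\mathbf u)^2\ge\Delta_{\min}^2/4$.

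Next I relate this pointwise sum to $\mathcal{L}_{\mathrm{TD}}=\mathbb{E}[e^2]$. I assume the expectation is taken over states and a joint-action distribution that is at least uniform, i.e.\ gives each of the $K$ joint actions mass $\ge 1/K$ (uniform exploration, or read $\mathbb{E}$ as state expectation of the action average). Then $\mathbb{E}_{\mathbf u}[e(s,\mathbf u)^2]\ge \frac{1}{K}\cdot\frac{\Delta_{\min}^2}{4}=c$ whenever $\hat Y(s)\neq Y(s)$. Markov's inequality over states then gives $P_e\le\mathcal{L}_{\mathrm{TD}}/c$, which is where the $1/K$ in $c$ comes from. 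Under (A2) the target equals $Q^*$, so $e$ is the genuine approximation error.

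The main obstacle is this action-coverage step. The paper's $\mathcal{L}_{\mathrm{TD}}$ is an expectation under the replay distribution, and converting a pointwise gap at the two specific actions $Y,\hat Y$ into an expected squared error needs a coverage assumption. I would state it as an implicit part of (A2): errors are measured under a distribution that puts mass $\ge1/K$ on each joint action, or equivalently the loss averaged uniformly over actions. A secondary subtlety is that $Y$ depends on $s$, so Fano applies to the conditional law. I would handle this by conditioning on the state distribution throughout, which the statement leaves implicit.
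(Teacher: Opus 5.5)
Your proposal is correct and follows the paper's proof essentially step for step. The shared steps are:
\begin{itemize}
\item Fano's inequality plus the DPI step $H(Y\mid Z_X^{\mathrm{out}})\le H(Y\mid\hat Y)$.
\item Monotonicity of $h(p)+p\log(K-1)$ on $[0,(K-1)/K]$.
\item $P_e\le\mathcal{L}_{\mathrm{TD}}/c$, obtained from the $\Delta_{\min}/2$ error threshold at an error state, the same uniform-action-coverage assumption the paper states explicitly, and Markov's inequality over states.
\end{itemize}
Your signed bound $e(s,\hat Y)-e(s,Y)\ge\Delta_{\min}$ via IGM is just a cleaner form of the paper's reverse-triangle-inequality step.

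One small correction: your two readings of the DPI step are not equivalent. Conditioning on the local histories as well only lower-bounds $I(Y;Z_X^{\mathrm{out}},\tau_1,\dots,\tau_n)$, which can exceed $I(Y;Z_X^{\mathrm{out}})$. So the stated inequality needs your second reading, that $\hat Y$ is determined by $Z_X^{\mathrm{out}}$, which is exactly the identification the paper makes.
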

\begin{proof}
Define $e(s,\mathbf{u})\triangleq Q_{\mathrm{tot}}(s,\mathbf{u})-Q^*(s,\mathbf{u})$
so that $\mathcal{L}_{\mathrm{TD}}=\mathbb{E}[e^2]$.
By Markov's inequality applied to the average state-level error, the
action prediction error satisfies
$P_e\le\mathcal{L}_{\mathrm{TD}}/c$.
Fano's inequality then gives
$H(Y\mid Z_X^{\mathrm{out}})\le h(P_e)+P_e\log(K\!-\!1)$.
Substituting the $P_e$ bound and using
$I(Y;Z_X^{\mathrm{out}})=H(Y)-H(Y\mid Z_X^{\mathrm{out}})$
yields Eq.~\eqref{eq:relevance-bound}. Full proof in Appendix~\ref{app:relevance-proof}.
\end{proof}

\noindent
Both penalty terms on the right-hand side of Eq.~\eqref{eq:relevance-bound} vanish as $\mathcal{L}_{\mathrm{TD}}\!\to\!0$, so minimising the TD loss drives the lower bound towards $H(Y)$ and guarantees that $Z_X^{\mathrm{out}}$ retains task-relevant information---no separate mutual-information estimator is needed.
The constant $c=\Delta_{\min}^2/(4K)$ makes the dependence on the environment explicit: larger joint action spaces (larger~$K$) shrink~$c$ and amplify both penalty terms, while a larger optimal-action gap $\Delta_{\min}$ enlarges~$c$ and tightens the bound.
For small TD error the gap to $H(Y)$ therefore scales roughly as $\mathcal{L}_{\mathrm{TD}}\!\cdot\!K\log K/\Delta_{\min}^2$ (Appendix~\ref{app:relevance-Kscaling}), so a fixed TD loss purchases progressively less relevance guarantee as the team grows---consistent with the widening HIBCG--QMIX gap on the MAgent $n$-sweep (\S\ref{sec:exp-scale}).
The role of AIB and XIB is therefore to distribute a limited compression budget across structural blocks and agent messages, while the standard TD loss handles relevance.

\begin{algorithm}[t]
\caption{HIBCG Training (one gradient step)}
\label{alg:hibcg}
\begin{algorithmic}[1]
\REQUIRE Observations $\{o_i^t\}_{i=1}^n$; replay batch $\mathcal{B}$;
         layer count $L$; scalar capacity weights $\lambda_A,\lambda_X$
         (warmed up over $T_{\mathrm{warm}}$ steps);
         group-conditional prior scales $\sigma_{\mathrm{intra}},\sigma_{\mathrm{cross}}$;
         truncation horizon $T_{\mathrm{trunc}}$; noise scale $\delta\!\in\!(0,1]$
\STATE \textbf{Encode observations:}
       $O^t \leftarrow [o_1^t;\dots;o_n^t]$
       \hfill\COMMENT{stack all agent observations}
\STATE \textbf{Base GACG sample} from $O^t$
       (Eq.~\eqref{eq:gacg_gauss_final}):
       compute $\boldsymbol{\mu}^{t}$ via attention on $O^t$;
       \textbf{if} $t\!\geq\!T_{\mathrm{trunc}}$:
       compute group mask $M^t$ from recent trajectory;
       set $\Sigma_A^{t}\!=\!\alpha\,\widehat{\mathbf{M}}^{t}+\varepsilon I$;
       sample $z_A^{(0)}\!\sim\!\mathcal{N}(\boldsymbol{\mu}^{t},\Sigma_A^{t})$;
       \textbf{else}: sample $z_A^{(0)}$ via
       $\mathrm{RelaxedBernoulli}(\boldsymbol{\mu}^{t})$;
       symmetrize and normalize to obtain $A^{(0)}$
\STATE Initialize $Z_X^{(0)} \leftarrow f_{\mathrm{in}}(O^t)$
       \hfill\COMMENT{MLP maps observations to GNN input features}
\FOR{$l = 1$ \TO $L$}
  \STATE \textbf{Structure encoder:}
         $(\mu_A^{(l)},\log\sigma_A^{2(l)}) \leftarrow f_A^{(l)}(A^{(0)}, Z_X^{(l-1)})$;
         sample $\tilde{z}_A^{(l)}\!\leftarrow\!\mu_A^{(l)}
         +\delta\,\sigma_A^{(l)}\!\odot\!\varepsilon$,
         $\;\varepsilon\!\sim\!\mathcal{N}(0,I)$
  \STATE \textbf{Gate edges:}
         symmetrize and normalize $\tilde{z}_A^{(l)}$;
         $\widetilde{A}^{(l)} \leftarrow g_l(\tilde{z}_A^{(l)})$
         \hfill\COMMENT{e.g.\ sigmoid or hard threshold}
  \STATE \textbf{Message passing:}
         $Z_X^{(l)} \leftarrow \phi_l\!\left(\widetilde{A}^{(l)}\,Z_X^{(l-1)}\,W_l\right)$
  \STATE Accumulate $\widehat{\mathrm{AIB}}^{(l)}$
         (Eq.~\eqref{eq:aib-hat}) using $(\mu_A^{(l)},\sigma_A^{2(l)})$
         and group-conditional prior
         $\sigma_0^2\!=\!\sigma_{\mathrm{intra}}^2$ (intra-group),
         $\sigma_{\mathrm{cross}}^2$ (inter-group)
\ENDFOR
\STATE \textbf{Feature encoder (XIB):}
       $(\mu_X,\sigma_X^{2}) \leftarrow
       (h_\mu(Z_X^{(L)}),\;\exp h_\sigma(Z_X^{(L)}))$;
       $\;Z_X^{\mathrm{out}} \leftarrow \mu_X + \sigma_X \odot \varepsilon$,
       $\;\varepsilon\!\sim\!\mathcal{N}(0,I)$
\STATE Compute $\widehat{\mathrm{XIB}}$
       (Eq.~\eqref{eq:xib-hat}) using $(\mu_X,\sigma_X^{2})$
\STATE \textbf{Agent input:} for each agent $i$, concatenate $[o_i^t,\,u_i^{t-1},\;\tanh(Z_{X,i}^{\mathrm{out}})]$;
       forward through agent network to obtain $Q$-values
\STATE \textbf{Value mixing:} compute $Q_{\mathrm{tot}}(s,\mathbf{u};\theta)$
       and target $y$ (Eq.~\eqref{eq:td-loss})
\STATE \textbf{Form loss:}
       $\mathcal{L} \leftarrow \mathcal{L}_{\mathrm{task}}
       + \lambda_A\!\cdot\!\textstyle\sum_{l}\widehat{\mathrm{AIB}}^{(l)}
       + \lambda_X\!\cdot\!\widehat{\mathrm{XIB}}$,
       where $\mathcal{L}_{\mathrm{task}}=\mathcal{L}_{\mathrm{TD}}+\lambda_g\mathcal{L}_g$
       (Eq.~\eqref{eq:td-loss})
\STATE Backpropagate $\nabla_\theta\mathcal{L}$; update $\theta$; update
       target $\theta^-$ (Polyak or periodic copy)
\STATE \textbf{(Optional)} Dual ascent: update $\lambda_A,\lambda_X$
       (Appendix~\ref{app:dual-ascent})
\end{algorithmic}
\end{algorithm}

\section{Experiments}
\label{sec:experiments}

We evaluate HIBCG on SMACv1, SMACv2, and MAgent against six external and three internal baselines, asking: (Q1)~does HIBCG learn better coordination graphs than existing sparse-graph and IB-based methods (\S\ref{sec:exp-overall})? (Q2)~which components contribute (\S\ref{sec:exp-ablation})? (Q3)~do the propositions hold empirically (\S\ref{sec:exp-theory})? (Q4)~does the method scale to 100 agents (\S\ref{sec:exp-scale})? (Q5)~what does the learned graph look like (\S\ref{sec:exp-info}--\ref{sec:exp-visualization})?

\subsection{Experimental Setup}
\label{sec:exp-setup}

\noindent\textbf{Environments.}\;
We test on three benchmarks that span fixed and stochastic team compositions and scale up to 100 agents (full map list in Appendix Table~\ref{tab:envs}). \emph{SMACv1}~\cite{SMAC} uses fixed team compositions; we select two heterogeneous maps (3s5z, 1c3s5z), one role-rich three-type map (MMM2: marines/marauders/medivacs), and one homogeneous map (25m) as a negative control. \emph{SMACv2}~\cite{DBLP:conf/nips/EllisCMSSMFW23} extends the benchmark with stochastic compositions sampled via \texttt{weighted\_teams}; we evaluate on protoss\_8v8 (2-type) and terran\_10v10 (3-type) to probe HIBCG under distribution shift. \emph{MAgent Battle}~\cite{MAgent} provides spatial battles at $n\in\{36,64,100\}$ (up to $n^2{=}10{,}000$ candidate edges).

\noindent\textbf{Baselines.}\;
We compare against six external methods spanning value decomposition (\textbf{QMIX}~\cite{DBLP:conf/icml/RashidSWFFW18}), learned communication graphs (\textbf{CommFormer}~\cite{CommFormer}, ICLR'24), fixed-topology communication (\textbf{ExpoComm}~\cite{ExpoComm}, ICLR'25), information-bottleneck communication (\textbf{MAGI}~\cite{MAGI}, TPAMI'24---our primary edge-uniform IB reference, \S\ref{sec:method-discussion}), and our two earlier works (\textbf{GACG}~\cite{GACG}, group-aware topology with hard thresholds; \textbf{BVME}~\cite{BVME}, message-only edge-uniform IB). All graph-based methods use the same QMIX mixing network. ExpoComm uses
its official hyperparameter recipe (the bimodal MMM2 outcome reported in
\S\ref{sec:exp-overall} is reproducible under that recipe; we did not
re-tune). We focus on the strongest published graph-learning and IB-based
comparators on each axis (sparse graph: CommFormer; fixed topology:
ExpoComm; IB messaging: MAGI; group-aware: GACG; message IB: BVME);
generic policy-gradient (HAPPO) and dense-attention / variance-based
graph methods (DICG, CASEC) are not included in these tables, with
results in their original papers~\cite{DICG,CASEC,HAPPO} for reference.

\noindent\textbf{HIBCG variants.}\;
To isolate each component, we evaluate:
\begin{itemize}
\item \textbf{AIB-only}: structural IB with learned Gumbel-softmax
      sparsification; XIB disabled ($\lambda_X{=}0$).
\item \textbf{XIB-only}: message IB on GACG's hard-threshold
      graph; AIB disabled ($\lambda_A{=}0$). This coincides with our
      published BVME baseline~\cite{BVME}; we report it as BVME in
      Overall Performance and as XIB-only in the ablation.
\item \textbf{HIB-flat}: both AIB and XIB enabled with a flat
      (isotropic) prior $\sigma_0{=}0.01$ on all edges.
\item \textbf{HIBCG} (full): AIB + XIB + group-conditional
      block-diagonal prior ($\sigma_{\mathrm{intra}}{>}\sigma_{\mathrm{cross}}$).
\end{itemize}

\noindent\textbf{Implementation details.}\;
\label{sec:impl-details}
\label{sec:interaction}
HIBCG is built on EPyMARL~\cite{papoudakis2021benchmarking}.
Capacity weights $\lambda_A$ and $\lambda_X$ are warmed up linearly over $T_{\mathrm{warm}}$ steps; group-differentiated allocation is realised via the prior ($\sigma_{\mathrm{intra}}^2\!>\!\sigma_{\mathrm{cross}}^2$).
IB regularisers add ${\approx}3.8\%$ wall-clock overhead (MAgent-64).
Empirically, AIB and XIB trade off under a roughly conserved total flow $\sum_l\mathrm{AIB}^{(l)}+\mathrm{XIB}$ at fixed task quality.
Hyperparameters and the centralised-graph / decentralised-action execution model are deferred to Appendix~\ref{app:impl-details}--\ref{app:execution-model}.
Each configuration uses at least $5$ random seeds ($6$--$7$ on high-variance MAgent); all runs are included, with per-cell counts in Appendix Table~\ref{tab:magent}.
Claimed gaps in \S\S\ref{sec:exp-overall}--\ref{sec:exp-ablation} are significant at $p{<}0.05$ under a paired bootstrap over seeds ($10^4$ resamples), unless flagged otherwise.

\begin{figure*}[t]
\centering
\includegraphics[width=\textwidth]{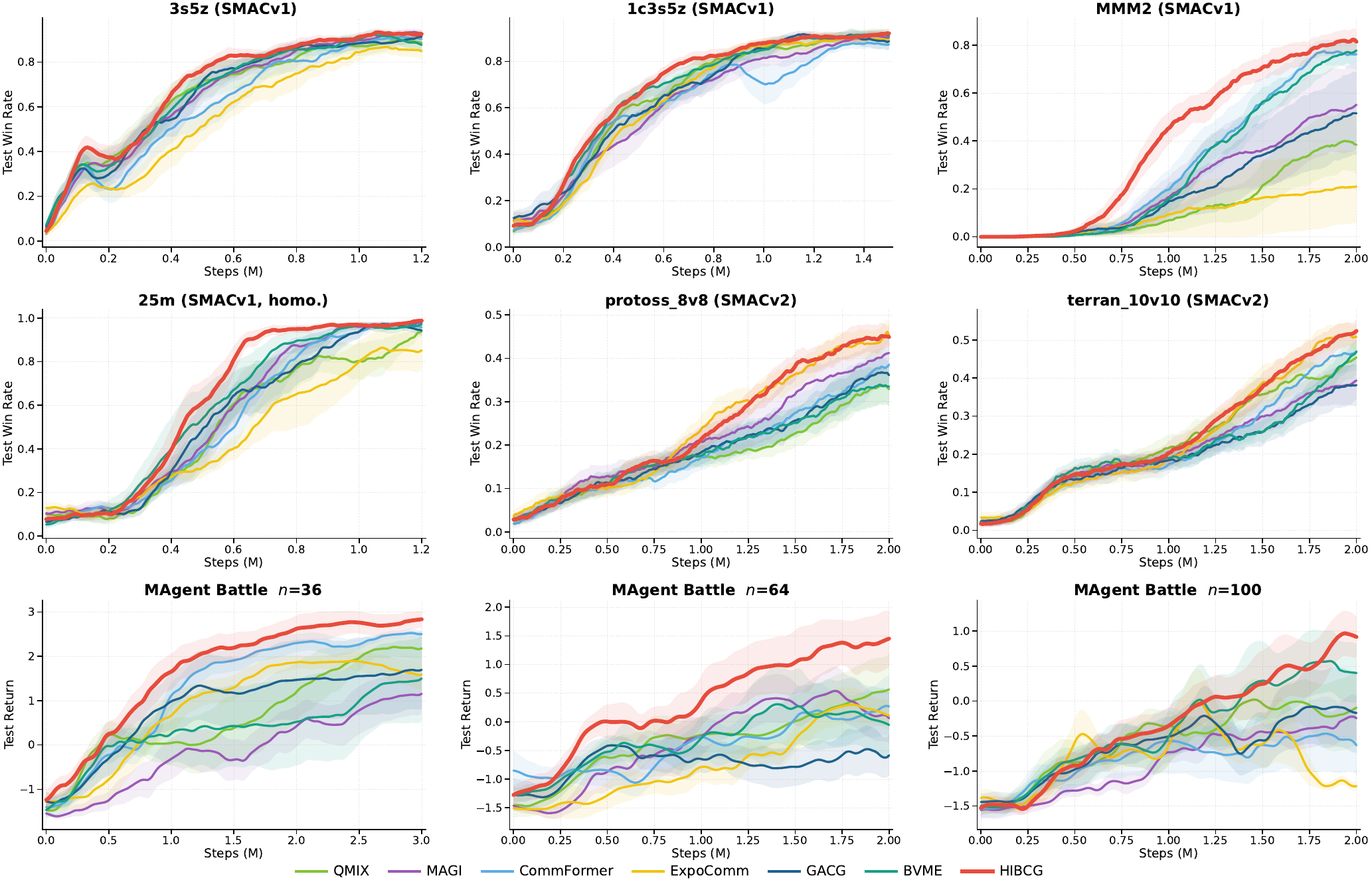}
\caption{\textbf{Overall performance comparison across nine scenarios}
spanning SMACv1 (3s5z, 1c3s5z, MMM2, 25m), SMACv2 (protoss\_8v8,
terran\_10v10) and MAgent Battle at three scales ($n\in\{36,64,100\}$).
Curves show mean over seeds; shaded bands are $\pm 0.5\sigma$.
We compare HIBCG against six external baselines (QMIX, MAGI,
CommFormer, ExpoComm, GACG, and BVME---our published message-IB prior
work), and include HIB-flat as an internal reference curve.
HIBCG (red) is either the clear winner or matches the ceiling on every
map, with the largest margins on heterogeneous (MMM2, protoss\_8v8,
terran\_10v10) and scale-stressed (MAgent $n{\ge}64$) scenarios.
25m is our homogeneous negative-control: HIBCG does not win, as
predicted by Remark~\ref{rem:no-regret}.
}
\label{fig:learning-curves}
\end{figure*}

\subsection{Overall Performance}
\label{sec:exp-overall}

All values are tail-10\% means (average over the final $10\%$ of training-time test evaluations) over $\ge\!5$ seeds; per-configuration seed counts are listed in the corresponding Appendix tables. Full per-map numbers are in Appendix Tables~\ref{tab:smacv1}--\ref{tab:magent}; learning curves in Figure~\ref{fig:learning-curves}.

\noindent\textbf{SMACv1.}\;
On the role-rich \textbf{MMM2} (3 unit types), HIBCG achieves \textbf{81.4\%} WR, $+4.0$\,pp over CommFormer (77.4\%) and $+27$\,pp over HIB-flat (54.2\%). Near-saturated maps (3s5z, 1c3s5z) show all methods within 2\,pp---the ceiling regime. On the \textbf{homogeneous 25m}, HIBCG ties HIB-flat ($97.2\%$ vs.\ $96.5\%$), confirming that the group prior adds no cost when Gdistance is trivial (Remark~\ref{rem:no-regret}). Notably, ExpoComm performs poorly on MMM2 ($0.20\pm0.31$; only $1/5$ seeds converge) under its official hyperparameter setting; this is consistent with the difficulty of using a fixed exponential topology in role-sensitive scenarios where specific cross-role interactions (e.g.\ medivac--marauder healing) are critical. HIBCG's learned graph converges on $5/5$ seeds.

\noindent\textbf{SMACv2.}\;
Under stochastic team compositions, HIBCG achieves the highest final WR on \textbf{protoss\_8v8} ($0.448\pm0.022$) and the best AUC on \textbf{terran\_10v10} ($0.246\pm0.017$). The realised cross/intra AIB-loss ratio collapses to ${\approx}1{\times}$--$9.5{\times}$ (vs.\ $40{\times}$--$906{\times}$ on fixed-composition benchmarks), showing that the learned posterior allocates information more evenly across group blocks when group assignments are stochastic (\S\ref{sec:exp-theory}).

\noindent\textbf{MAgent Battle ($n\!\in\!\{36,64,100\}$).}\;
\textbf{HIBCG leads on all metrics at every scale.} At $n{=}36$: $\mathbf{2.74}\pm0.43$ vs.\ CommFormer $2.50$, QMIX $2.19$. At $n{=}64$: peak return $2.13$, $49\%$ above CommFormer; GACG collapses. At $n{=}100$: HIBCG is the only method with positive mean ($0.83\pm0.54$) and peak above $1.5$. The cross/intra AIB ratio is ${\approx}906{\times}$ at all scales, enforcing heterogeneous compression as an architectural invariant (Table~\ref{tab:bound-tightening}).

\subsection{Ablation Study: Component Decomposition}
\label{sec:exp-ablation}

We systematically evaluate the contribution of each HIBCG component via
a five-level decomposition on three representative maps that jointly
span the coordination spectrum: 3s5z (SMACv1, heterogeneous 2-type,
near-saturated), MMM2 (SMACv1, heterogeneous 3-type, role-rich), and
MAgent-36 (spatial, scale-stressed).
All ablation variants share the same backbone, replay buffer, and
training schedule; only the IB components are toggled.

\noindent\textbf{Heterogeneous maps: full stack wins.}\;
Table~\ref{tab:ablation-component} reports per-component numbers; Figure~\ref{fig:ablation-component} shows learning curves.
On heterogeneous MMM2 and scale-stressed MAgent-36, the ordering is
$
\text{HIBCG} > \text{HIB-flat} \ge \text{AIB-only}> \text{XIB-only} > \text{GACG} > \text{QMIX},
$
as predicted by the dual-path chain rule (Prop.~\ref{prop:dual-path}).
On \textbf{MMM2} (3 unit types), HIBCG achieves a dominant
\textbf{81.4\%} win rate versus HIB-flat 54.2\%, because the
$g{=}3$ prior aligns with the marine/marauder/medivac role split,
reducing the AIB loss by \textbf{3.8$\times$} (33.3 $\to$ 8.7, see
Table~\ref{tab:bound-tightening}) and concentrating compression
pressure on \emph{cross-role} edges (cross/intra ratio $607\times$).
On \textbf{MAgent-36}, HIBCG achieves $2.74\pm 0.43$ return versus
HIB-flat $1.72\pm 1.36$---a $+$1.02 absolute gain with $3\times$
lower variance.

\noindent\textbf{Homogeneous negative control (25m).}\;
On the homogeneous 25m map (25 identical marines, Gdistance $\approx 1.6$--$2.1$), HIBCG scores $97.2\pm 0.8\%$---comparable to HIB-flat ($96.5\pm 2.4\%$) and above AIB-only ($94.3\pm 3.6\%$). This matches the behaviour Remark~\ref{rem:no-regret} predicts at the prior-family level: when the learned partition is trivial, HIBCG provides no meaningful advantage over HIB-flat in this homogeneous setting. Practitioners with no clear role structure may therefore prefer HIB-flat as a simpler variant. The reversed-prior stress test in \S\ref{sec:exp-theory} (Table~\ref{tab:sigma-ablation}) extends this to a two-sided falsifier: prior direction matters \emph{only} where the partition is non-trivial.

\begin{table}[t]
\centering\small
\caption{Component ablation on three representative maps.
Mean $\pm$ std over 5 seeds. Best in \textbf{bold}.}
\label{tab:ablation-component}
\renewcommand{\arraystretch}{1.15}
\setlength{\tabcolsep}{3pt}
\begin{tabular}{@{}lccc@{}}
\toprule
\textbf{Config}
  & \textbf{3s5z} & \textbf{MMM2} & \textbf{MAgent-36} \\
\midrule
QMIX (no graph)          & $0.897\pm0.062$ & $0.386\pm0.25$  & $2.19\pm0.37$ \\
GACG (hard thr.)         & $0.897\pm0.020$ & $0.488\pm0.33$  & $1.63\pm1.79$ \\
XIB-only (msg.\ IB)      & $0.896\pm0.033$ & $0.755\pm0.08$  & $1.43\pm1.85$ \\
AIB-only (struct.\ IB)   & $0.926\pm0.021$ & $0.502\pm0.18$  & $1.87\pm0.84$ \\
HIB-flat (AIB+XIB)       & $0.908\pm0.034$ & $0.542\pm0.38$  & $1.72\pm1.36$ \\
\textbf{HIBCG (full)}
  & $\mathbf{0.926\pm0.018}$
  & $\mathbf{0.814\pm0.06}$
  & $\mathbf{2.74\pm0.43}$ \\
\bottomrule
\end{tabular}
\end{table}

\begin{figure*}[t]
\centering
\includegraphics[width=\textwidth]{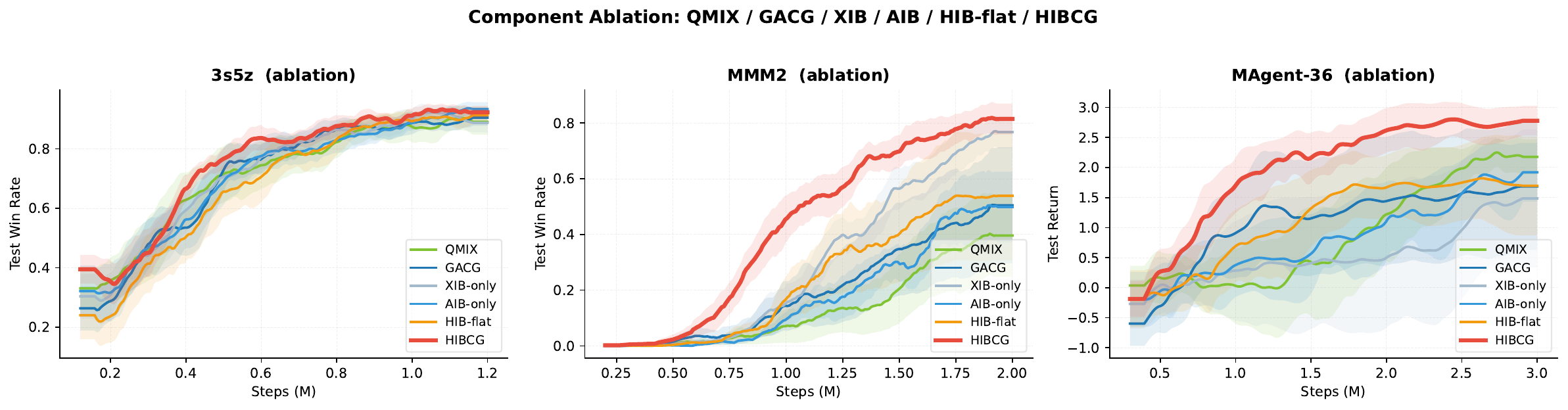}
\caption{\textbf{Component ablation learning curves} on 3s5z, MMM2,
and MAgent-36.  On heterogeneous maps the gap
between HIBCG and HIB-flat is large and sustained; on 2-type
near-saturated 3s5z all variants cluster at the ceiling.}
\label{fig:ablation-component}
\end{figure*}

\subsection{Theory-Aligned Empirical Diagnostics}
\label{sec:exp-theory}

We provide empirical evidence consistent with each theoretical proposition.

\noindent\textbf{Proposition~\ref{prop:dual-path} (Dual-path decomposition).}\;
Table~\ref{tab:ablation-component} supports the dual-path structure: on MMM2 $\mathrm{HIBCG}\,(0.814) > \text{XIB-only}\,(0.755) > \text{AIB-only}\,(0.502)$, and on MAgent-36 $\mathrm{HIBCG}\,(2.74) > \text{AIB-only}\,(1.87) > \text{XIB-only}\,(1.43)$.
Neither single-path variant matches HIBCG on any heterogeneous map,
confirming that structural and message compression provide
\emph{complementary}, not redundant, benefits.
The graph-density evolution in Appendix Figure~\ref{fig:mechanism}(a)
provides a mechanistic diagnostic: AIB-only exhibits a characteristic
``warm-then-prune'' trajectory, and HIBCG reaches an even lower asymptotic
density when XIB is active---direct evidence that message compression raises
per-edge information value and permits more aggressive structural pruning.

\noindent\textbf{Remark~\ref{rem:no-regret} (Group prior tightens bound).}\;
Table~\ref{tab:bound-tightening} compares the tail-10\% mean AIB loss
of HIB-flat vs.\ HIBCG across all scenarios.  The group-conditional
prior yields a \emph{strict} reduction of the variational bound on
every heterogeneous scenario, with tightening ratios ranging from
$2.9\times$ to $6.5\times$.

\begin{table}[t]
\centering\small
\caption{Bound tightening across scenarios: tail-10\% mean loss$_\text{AIB}$
for HIB-flat vs.\ HIBCG, and the learned cross/intra AIB-loss ratio
inside HIBCG, evidencing heterogeneous compression.
}
\label{tab:bound-tightening}
\renewcommand{\arraystretch}{1.15}
\setlength{\tabcolsep}{4pt}
\begin{tabular}{@{}lccccc@{}}
\toprule
\textbf{Scenario} & HIB-flat & HIBCG & Ratio
                  & \multicolumn{2}{c}{\textbf{Cross/Intra}} \\
\cmidrule(l){5-6}
                  & loss$_A$ & loss$_A$ &
                  & ratio & (group $g$) \\
\midrule
3s5z              & $26.09$ & $6.02$   & $4.3\times$  & $41\times$   & $g{=}2$ \\
MMM2              & $33.27$ & $8.74$   & $3.8\times$  & $607\times$  & $g{=}3$ \\
protoss\_8v8      & $23.07$ & $3.56$   & $6.5\times$  & $0.9\times$  & $g{=}2$ (var.) \\
terran\_10v10     & $26.03$ & $4.09$   & $6.4\times$  & $9.5\times$  & $g{=}3$ (var.) \\
MAgent-36         & $8.06$  & $2.74$   & $2.9\times$  & $908\times$  & $g{=}2$ \\
MAgent-64         & $25.46$ & $8.87$   & $2.9\times$  & $905\times$  & $g{=}2$ \\
MAgent-100        & $62.17$ & $28.13$ & $2.2\times$   & $906\times$  & $g{=}2$ \\
\bottomrule
\end{tabular}
\end{table}

\noindent
Three structural findings emerge.
\textbf{(i)~Tightening is uniform and structurally graded.} The group prior strictly tightens the AIB loss on every heterogeneous scenario ($2.9$--$6.5\times$), and the tightening ratio grows with structural diversity (SMACv2 $\approx 6.5\times$ $>$ MMM2 $3.8\times$ $>$ MAgent $2.9\times$), exactly as Remark~\ref{rem:no-regret} predicts.
\textbf{(ii)~Architectural invariance.} The cross/intra AIB-loss ratio on MAgent is $908\!/\!905\!/\!906\times$ at $n\in\{36,64,100\}$---constant across scales \emph{and} across converging vs.\ failing seeds, so differential compression is a structural property of the prior, not a reward-driven artefact.
\textbf{(iii)~Adaptive posterior allocation on SMACv2.} The prior scales $(\sigma_{\mathrm{intra}},\sigma_{\mathrm{cross}})$ are fixed, but the \emph{realised} cross/intra AIB-loss ratio (a function of the learned posterior block scales) \emph{collapses} to $0.9\times$ (protoss\_8v8) and $9.5\times$ (terran\_10v10) on stochastic-composition SMACv2, versus $41$--$908\times$ on fixed-composition maps. When group assignments vary per episode, the learned posterior allocates information more evenly across group blocks.

\begin{figure*}[t]
\centering
\includegraphics[width=\textwidth]{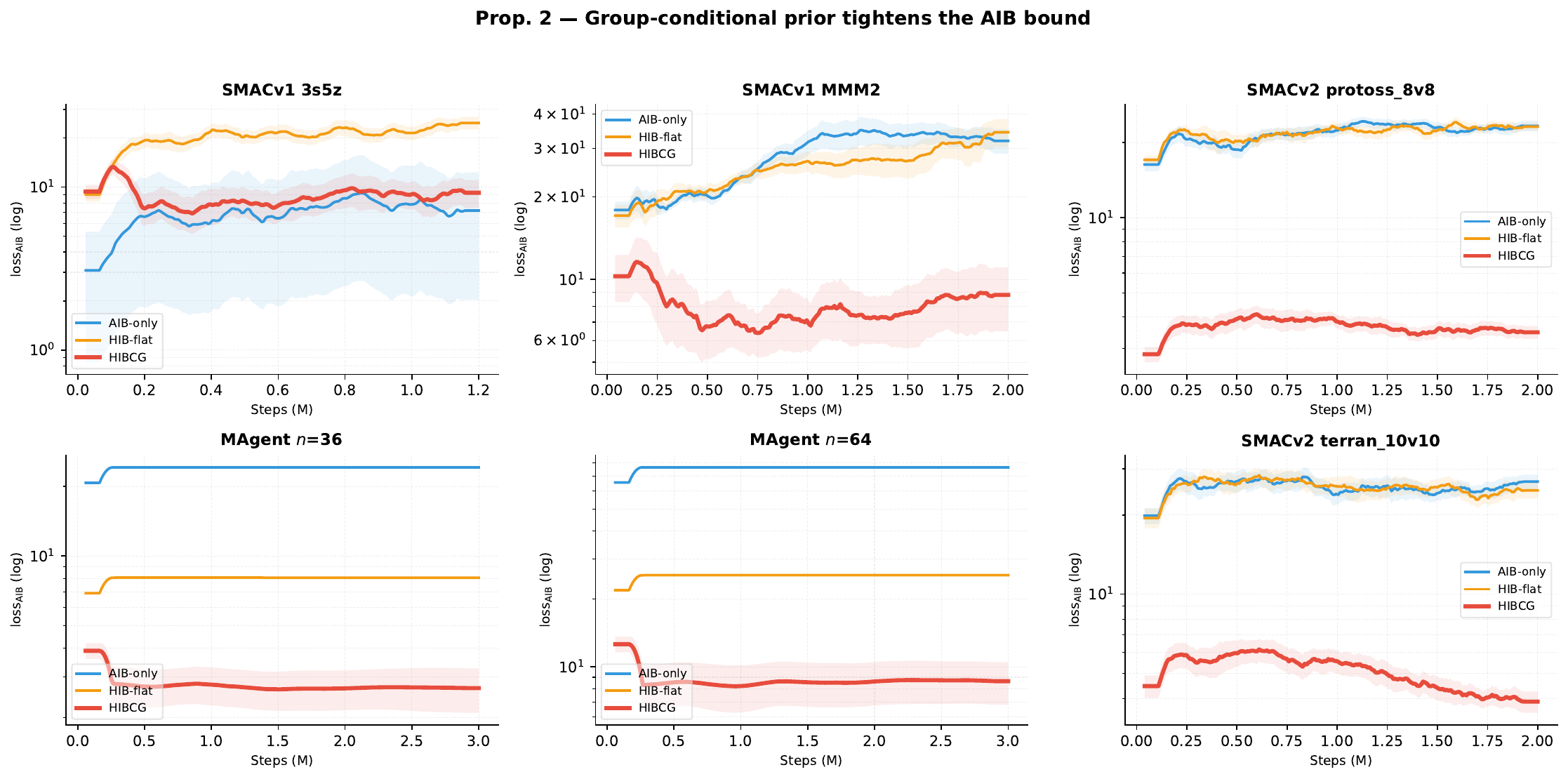}
\caption{\textbf{AIB bound tightening (Remark~\ref{rem:no-regret})
across six scenarios.}  Curves show the training trajectory of
loss$_\text{AIB}$ (log scale); lower is tighter.
HIBCG (red) attains a strictly lower AIB loss than HIB-flat (orange)
and AIB-only (blue) on every scenario.  Tightening ratios range from
$2.9\times$ (MAgent) to $6.5\times$ (SMACv2), matching Table~\ref{tab:bound-tightening}.}
\label{fig:aib-tightening}
\end{figure*}

\noindent\textbf{Prop.~\ref{prop:group-decomp} and Remark~\ref{rem:no-regret} (Per-group control and reversed-prior falsifier).}\;
A $\sigma$-sweep on 3s5z (Appendix) confirms that $\sigma_{\mathrm{intra}}/\sigma_{\mathrm{cross}}$ provides fine-grained, interpretable control over intra- vs.\ cross-group information allocation while WR remains robust ($0.917$--$0.929$). To test whether the gain comes from prior \emph{asymmetry} rather than any block-diagonal structure, we reverse the prior ($\sigma_{\mathrm{intra}}{<}\sigma_{\mathrm{cross}}$) on role-rich MMM2 and homogeneous 25m. Remark~\ref{rem:no-regret} predicts \emph{reversed\,$<$\,flat\,$<$\,default} on MMM2 and \emph{no separation} on 25m; both are confirmed (Table~\ref{tab:sigma-ablation}). On SMACv2 the learned cross/intra ratio settles near $1\times$ (vs.\ $906\times$ on MAgent; Table~\ref{tab:bound-tightening}), showing that stochastic group assignments push the realised posterior allocation toward symmetry.

\begin{table}[t]
\centering\small
\caption{Two-sided reversed-prior falsifier (Remark~\ref{rem:no-regret}). On role-rich MMM2: reversed\,$<$\,flat\,$<$\,default; on homogeneous 25m: no WR separation. The cross/intra KL ratio on 25m ($652\times$) reflects the architecturally imposed prior asymmetry $\sigma_{\mathrm{intra}}^2/\sigma_{\mathrm{cross}}^2\!=\!100$; because the partition is effectively trivial, this asymmetry has no downstream impact on WR. $\sigma$-sweep on 3s5z is in the Appendix.}
\label{tab:sigma-ablation}
\renewcommand{\arraystretch}{1.1}
\setlength{\tabcolsep}{3pt}
\begin{tabular}{@{}lccc@{}}
\toprule
\textbf{Config} & $\sigma_{\mathrm{in}}/\sigma_{\mathrm{cr}}$
  & \textbf{WR} & \textbf{KL ratio} \\
\midrule
\multicolumn{4}{l}{\emph{MMM2 ($g{=}3$, role-rich)}} \\
HIB-flat             & 0.01\,/\,0.01 & $0.542\pm0.38$ & --- \\
HIBCG ($\star$)      & 0.10\,/\,0.01 & $\mathbf{0.814\pm0.06}$ & $607\times$ \\
HIBCG-reversed       & 0.01\,/\,0.10 & $0.511\pm0.27$ & $0.001\times$ \\
\midrule
\multicolumn{4}{l}{\emph{25m (homogeneous; partition trivial in effect)}} \\
HIB-flat             & 0.01\,/\,0.01 & $0.965\pm0.02$  & --- \\
HIBCG ($\star$)      & 0.10\,/\,0.01 & $\mathbf{0.972\pm0.008}$ & $652\times$ \\
HIBCG-reversed       & 0.01\,/\,0.10 & $0.945\pm0.014$ & $0.001\times$ \\
\bottomrule
\end{tabular}
\end{table}

\noindent\textbf{Proposition~\ref{prop:relevance} (TD relevance guarantee).}\;
Across all environments and configurations, HIBCG's TD loss converges
to values comparable with QMIX and GACG (loss\_td within $\pm$10\%),
confirming that the KL penalties do not displace task-relevant
information from the learned representations.
The practical implication is that HIBCG can be added to existing
value-decomposition pipelines with no performance penalty from the
regularizers alone.
The bound is $K$-sensitive: substituting $c=\Delta_{\min}^2/(4K)$
into Eq.~\eqref{eq:relevance-bound} shows that the gap to $H(Y)$ scales as
$\mathcal{L}_{\mathrm{TD}}\!\cdot\!K\log K$, so the per-unit cost of TD
error is amplified by the joint action-space size. The MAgent sweep
$n\!\in\!\{36,64,100\}$ is the cleanest controlled test of this
dependence (per-agent action set and all hyperparameters held fixed;
only~$n$ changes), and the HIBCG--QMIX final-return gap grows
monotonically across the sweep while QMIX's convergence rate degrades
and HIBCG's stays steady (Appendix
Table~\ref{tab:magent}). The full $K$-amplifier analysis with
per-map $\log_2 K$ is in the Appendix
Section~\ref{app:relevance-Kscaling}.

\noindent\textbf{On the other hyperparameters.}\;
HIBCG inherits from GACG and EPyMARL a handful of additional knobs
(adjacency threshold, Gumbel-softmax temperature, group-prior warmup
$T_{\mathrm{warm}}$, group count $g$, and $\lambda_A$).
To keep the main text focused on the IB mechanism, we collect the full
sensitivity study---including a group-count off-by-one test
($g\!=\!m_{\mathrm{true}}\!\pm\!1$ on 1c3s5z / MMM2), a warmup ablation
on 3s5z and 8m\_vs\_9m, and $\lambda_A$ scans---in
the Appendix.  The two take-aways relevant here
are: (1)~HIBCG remains no worse than HIB-flat in our off-by-one
\emph{group count} $g$ tests, consistent with the prior-family robustness
suggested by Remark~\ref{rem:no-regret};
and (2)~the non-IB knobs inherited from GACG have at most second-order
effects on final WR compared with the IB prior shape.


\begin{figure*}[t]
\centering
\includegraphics[width=0.95\textwidth]{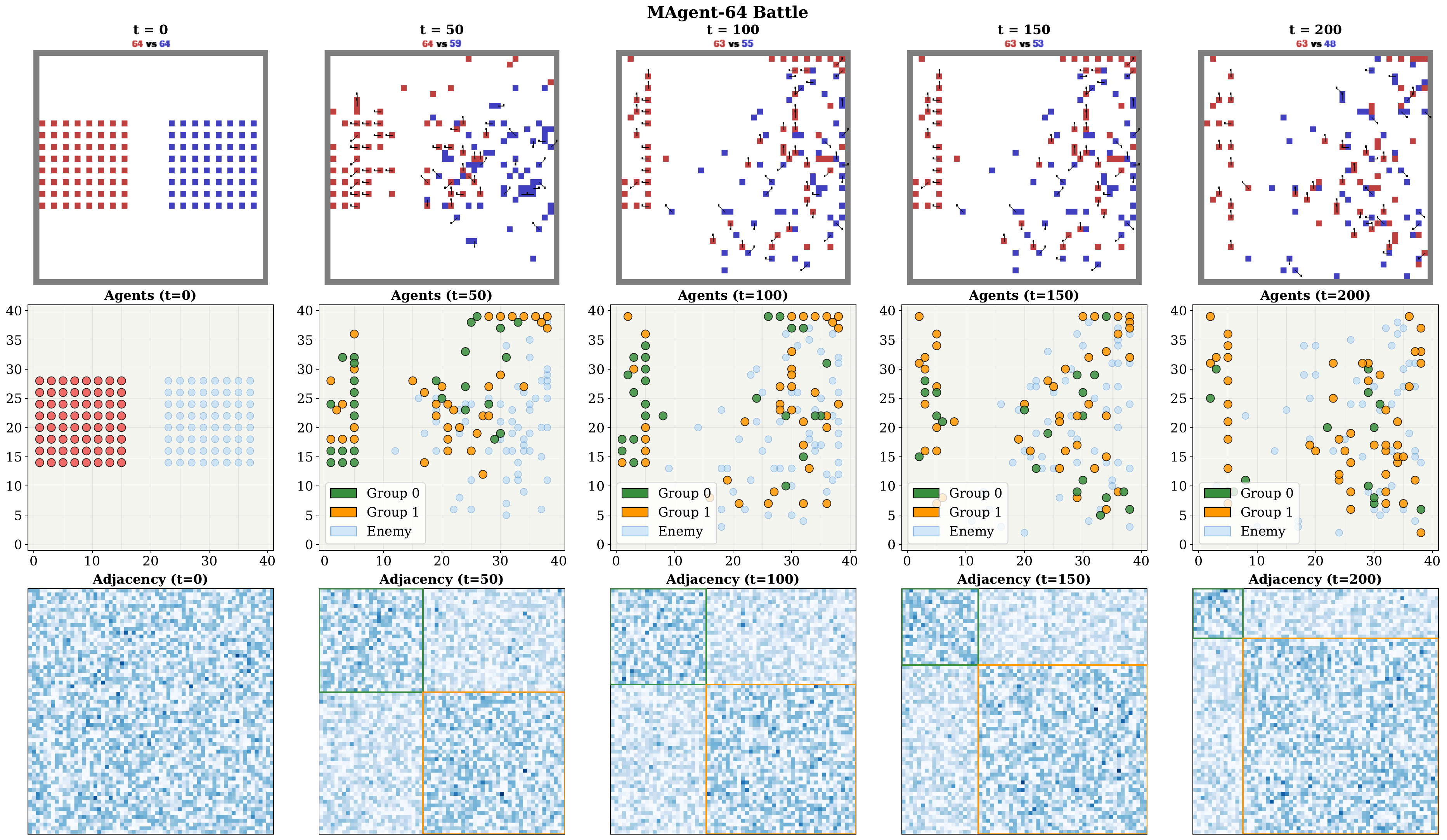}
\caption{%
  \textbf{HIBCG on MAgent-64 (Episode~3) across five timesteps.}
  \emph{Row~1}: battle render (red = our team, blue = enemy; 63 vs.\ 48 at $t{=}200$).
  \emph{Row~2}: agent positions colored by learned group (green = $G_0$, orange = $G_1$); groups track spatial flanks without supervision.
  \emph{Row~3}: $64{\times}64$ adjacency matrix; dense intra-group blocks and sparse inter-group blocks confirm the group-differentiated pruning driven by the group-aligned AIB prior.
}
\label{fig:temporal-snapshots}
\end{figure*}
\subsection{Scalability Analysis}
\label{sec:exp-scale}

The MAgent Battle platform provides a controlled scaling experiment:
the observation dimension is fixed ($d_{\mathrm{obs}}{=}338$) across
$n \in \{36, 64, 100\}$, so all HIBCG hyperparameters transfer
directly.  We report mean test return at the practical 2M-step
training budget together with convergence rate.

\noindent\textbf{Convergence rate.}\;
As $n$ increases, all methods degrade, but HIBCG degrades most
gracefully (Appendix Table~\ref{tab:magent}): $6/6$ seeds
converge at $n{=}36$, $6/6$ at $n{=}64$, and $5/6$ at $n{=}100$,
compared with $3/6$ for QMIX and $1/6$ for CommFormer at $n{=}100$.
GACG fails on the larger maps ($1/4$ seeds at $n{=}64$, $3/6$ at
$n{=}100$ but with negative mean return $-0.08\pm1.09$): the
$n^2{=}4096$--$10{,}000$ candidate edges make the fixed-threshold
graph too noisy for stable Q-learning.
HIBCG's AIB-learned sparsity retains only $13$--$14\%$ of edges at every scale, reducing per-agent reward noise and sustaining convergence.

\noindent\textbf{@3M advantage over QMIX.}\;
At the 3M-step comparison point, HIBCG leads QMIX by $+0.54$ at
$n{=}36$ (2.74 vs 2.19), $+0.18$ at $n{=}64$ (1.05 vs 0.87), and $+0.40$
at $n{=}100$ (0.45 vs 0.05).  HIBCG also dominates on peak return
($+$0.46, $+$0.48, $+$0.57 respectively) and reduces seed variance on
the converging maps ($\sigma{=}0.43$ vs QMIX $\sigma{=}0.37$ at $n{=}36$;
$\sigma{=}0.82$ vs QMIX $\sigma{=}0.56$ at $n{=}100$---comparable
dispersion despite converging $50\%$ more seeds).

\noindent\textbf{Architectural invariants across scale.}\;
The cross/intra AIB-loss ratio is $908{\times}$, $905{\times}$, and
$906{\times}$ at $n\in\{36,64,100\}$---the same value to within
$0.3\%$.
The order of magnitude is set by the fixed prior-scale ratio
$\sigma_{\mathrm{intra}}^2/\sigma_{\mathrm{cross}}^2\!=\!10^2$,
but the \emph{posterior} block scales (and hence the resulting
KL ratio) are themselves learned; varying the prior ratio in our
$\sigma$-sweep (Table~\ref{tab:sigma-ablation}) confirms that the
learned KL ratio tracks the prior, while WR is robust on
3s5z---so the cross/intra split is a controllable architectural knob,
not a hyperparameter artefact.
What is genuinely scale-invariant is that this same controllable knob
operates identically across $n\!\in\!\{36,64,100\}$ without
re-tuning---HIBCG's differential-compression pattern transfers
unchanged from $36$ to $100$ agents.
The AIB mean-KL scales as
$27376 \to 88696 \to 281326$ over $n\in\{36,64,100\}$,
or $3.24\times$ then $3.17\times$. The $36\!\to\!64$ ratio
closely matches the pairwise edge growth
$\tfrac{64\cdot 63}{36\cdot 35}\!\approx\!3.2$; the $64\!\to\!100$
ratio of $3.17\times$ exceeds the corresponding edge ratio
($\tfrac{100\cdot 99}{64\cdot 63}\!\approx\!2.45$),
indicating that per-edge AIB also grows modestly with $n$ as the
encoder allocates more capacity per surviving edge. The qualitative
trend is monotonic growth at roughly the order of the pairwise
edge count, consistent with AIB tracking the full $O(n^2)$
communication edge set.

\subsection{Diagnostics and Limitations}
\label{sec:exp-info}

\noindent\textbf{When does the group prior help?}\;
A practical diagnostic for whether HIBCG (group prior) or HIB-flat is the right variant is the \emph{Gdistance} metric inherited from GACG~\cite{GACG}: the average pairwise distance between the per-agent Q-value distributions of agents assigned to \emph{different} groups, normalised by within-group dispersion. Larger Gdistance therefore indicates that the learned partition separates agents whose contribution to $Q_{\mathrm{tot}}$ is qualitatively different. On maps where HIBCG delivers large gains (MMM2 Gdist $\gg 2$; MAgent-36 with spatial sub-teams), the block-diagonal prior aligns with genuine role structure. On the homogeneous \textbf{25m} (Gdist $\approx 1.6$--$2.1$), HIBCG ties HIB-flat---exactly as Remark~\ref{rem:no-regret} predicts when the partition is trivial. In our benchmarks, Gdistance $\lesssim\!2$ after $100$k training steps was a practical indicator that HIB-flat would perform comparably to HIBCG.

\noindent\textbf{XIB contribution depends on observation richness.}\;
Prop.~\ref{prop:dual-path} predicts that the AIB/XIB weighting is
benchmark-dependent. On SMACv1, XIB is the active path---XIB-only
lifts MMM2 WR by $+27$\,pp over GACG
(Table~\ref{tab:ablation-component}). On MAgent and SMACv2 the XIB
encoder collapses to its prior within $200$--$300$k steps (robust to
tighter priors and longer warmup), because the
$d_{\mathrm{obs}}\!=\!338$ spatial maps already carry enough local
information to make $I(\mathcal{D};Z_X\!\mid\!Z_A)$ in
Eq.~\eqref{eq:chain-rule-decomp} small; capacity is reallocated to
the AIB + group-prior path, consistent with the chain-rule split.
HIBCG degrades gracefully---the topology-side guarantees
(Remark~\ref{rem:no-regret}, Prop.~\ref{prop:group-decomp}) are
unaffected, and a high-$d_{\mathrm{obs}}$-resilient XIB prior is
natural future work (full diagnostics in the Appendix
Section~\ref{app:xib-collapse}).

\subsection{Case Study: Learned Graph on MAgent-64}
\label{sec:exp-visualization}

Figure~\ref{fig:temporal-snapshots} visualises three key properties on a representative MAgent-64 episode:
\textbf{(i)~Spatially meaningful groups:} the learned partition at $t{=}50$--$150$ clusters agents by battlefield flank without any spatial supervision, emerging purely from $\mathcal{L}_g$ and the block-diagonal prior.
\textbf{(ii)~Group-differentiated edge pruning:} from $t{\ge}50$ the adjacency shows clear block-diagonal structure---dense intra-group, sparse inter-group---consistent with the $906{\times}$ cross/intra KL ratio reported in \S\ref{sec:exp-scale}.
\textbf{(iii)~Dynamic adaptation:} group sizes re-balance as agents are eliminated, showing that HIBCG continuously adapts its coordination structure to the evolving tactical context.

\section{Discussion: Relation to MAGI, BVME, and GACG}
\label{sec:method-discussion}

In the taxonomy of Definitions~\ref{def:flat-ib}--\ref{def:ibcg}, the three closest prior works each occupy one corner of the design space (Table~\ref{tab:method-comparison}):
\emph{GACG}~\cite{GACG} answers \emph{who} should communicate via a group-aware sparse topology, but applies no explicit IB ($\beta\!=\!0$ edge-uniform regime);
\emph{BVME}~\cite{BVME}---our XIB-only ablation---adds final-layer message compression on GACG's graph, yet remains edge-uniform (no AIB, no group-aware blocking);
and \emph{MAGI}~\cite{MAGI} stacks a single categorical structural KL and a global Mixture-of-Gaussians message KL on a \emph{dense} attention graph, both with one global $\beta$ and a group-blind prior.

HIBCG occupies a distinct corner: a group-aware sparse graph with block-diagonal AIB, per-agent XIB, and closed-form per-block / per-agent control.
Each axis ties to a theoretical guarantee---no-regret group-aligned priors (Remark~\ref{rem:no-regret}), additive block decomposition (Prop.~\ref{prop:group-decomp}), and TD-driven relevance (Prop.~\ref{prop:relevance})---with empirical support in \S\ref{sec:exp-ablation} (dual-path), \S\ref{sec:exp-theory} (bound tightening), and the Appendix (hyperparameter sensitivity).

\noindent\textbf{Theory--experiment synthesis.}\;
Each guarantee leaves a distinct empirical signature.
Prop.~\ref{prop:dual-path} explains both the XIB-active SMACv1 regime and the XIB-dormant MAgent / SMACv2 regime (\S\ref{sec:exp-ablation},~\S\ref{sec:exp-info}).
Remark~\ref{rem:no-regret} matches the $2.2$--$6.5\times$ AIB-loss reduction on every heterogeneous map and the trivial-partition tie on 25m, with the reversed-prior ablation as falsifier (\S\ref{sec:exp-theory}).
Prop.~\ref{prop:relevance}'s $K$-amplifier is consistent with the monotonic HIBCG--QMIX gap on the MAgent $n$-sweep (\S\ref{sec:exp-scale}).
Group structure is the active ingredient: removing it (HIB-flat) or reversing it erodes the heterogeneous-map advantage.
A complete per-proposition synthesis is in Appendix~\ref{app:theory-exp-synthesis}.

\begin{table}[t]
\centering\small
\renewcommand{\arraystretch}{1.1}
\caption{Axis-by-axis comparison with the closest prior methods.}
\label{tab:method-comparison}
\setlength{\tabcolsep}{2pt}
\begin{tabular}{@{}lcccc@{}}
\toprule
& \textbf{GACG} & \textbf{BVME} & \textbf{MAGI} & \textbf{HIBCG} \\
\midrule
Graph type       & Sparse & Sparse & Dense attn. & Sparse \\
Structural IB    & ---    & ---    & Categ.\ KL  & Blk-diag.\ KL \\
Message IB       & ---    & Gauss. & MoG         & Gauss. \\
Group-aware      & \textbf{Yes} & No & No & \textbf{Yes} \\
Group-diff.\ pruning& ---    & No     & No          & \textbf{Yes} \\
Per-layer        & ---    & Last   & Selective   & \textbf{All} \\
Theory           & ---    & ---    & Var.\ bounds& \textbf{\shortstack{3 props\\+ 1 remark}} \\
\bottomrule
\end{tabular}
\end{table}

\section{Conclusion}
\label{sec:conclusion}

We proposed \textbf{HIBCG}, a heterogeneous coordination-graph learner that jointly learns heterogeneous connectivity and differentiated per-agent message capacity.
Using the graph information bottleneck, HIBCG decomposes graph learning into a topology path (AIB) and a message path (XIB), with a group-aligned block-diagonal prior that is never worse than a flat isotropic prior (Remark~\ref{rem:no-regret}), admits additive per-block penalties (Prop.~\ref{prop:group-decomp}), and relies on the standard TD loss as an IB relevance surrogate (Prop.~\ref{prop:relevance})---so no separate mutual-information estimator is required.
Across nine scenarios (SMACv1/v2, MAgent at $n\!\in\!\{36,64,100\}$), HIBCG wins on role-rich maps, scales to 100 agents where GACG and CommFormer collapse, and correctly yields no gain on the homogeneous 25m control, consistent with Remark~\ref{rem:no-regret}.
Natural follow-ups include high-$d_{\mathrm{obs}}$-resilient XIB priors (\S\ref{sec:exp-info}), data-driven group inference, the dual-ascent capacity variant (Appendix~\ref{app:dual-ascent}), and continuous-action benchmarks~\cite{DDFG}.

\clearpage
\appendix

\paragraph{Appendix guide.}
The appendix is organised into three groups: benchmarks
and full empirical results
(Sections~\ref{app:method-comparison}--\ref{app:info-allocation}),
proofs and derivations for the theoretical results plus a numerical worked example
(Sections~\ref{app:chain-rule-proof}--\ref{app:relevance-proof}),
and supporting material---theory--experiment synthesis, dual-ascent
variant, implementation details, and hyperparameter sensitivity
(Sections~\ref{app:theory-exp-synthesis}--\ref{app:hp-sensitivity}).
\begin{itemize}
\item \textbf{\S\ref{app:method-comparison}}---axis-by-axis
      method-comparison table referenced
      from~\S\ref{sec:method-discussion}.
\item \textbf{\S\ref{app:benchmarks}}---full list of benchmark maps
      and per-environment configurations used in our experiments.
\item \textbf{\S\ref{app:full-results}}---per-map SMACv1 and MAgent
      Battle result tables referenced
      from~\S\ref{sec:exp-overall}.
\item \textbf{\S\ref{app:info-allocation}}---mechanism /
      information-allocation figure referenced
      from~\S\S\ref{sec:exp-theory}--\ref{sec:exp-info},
      including XIB collapse diagnostics on MAgent and SMACv2
      (\S\ref{app:xib-collapse}).
\item \textbf{\S\ref{app:chain-rule-proof} (Proof)---dual-path
      decomposition} (Proposition~\ref{prop:dual-path}).
\item \textbf{\S\ref{app:group-prior-proof} (Proof)---group-prior
      no-regret property} (Remark~\ref{rem:no-regret}).
\item \textbf{\S\ref{app:block-decomp-proof} (Proof)---block
      decomposition} (Proposition~\ref{prop:group-decomp}).
\item \textbf{\S\ref{app:example-10agents}}---a complete numerical
      worked example with 10~agents illustrating the group-prior and
      block-decomposition mechanics.
\item \textbf{\S\ref{app:relevance-proof} (Proof)---relevance
      bound} (Proposition~\ref{prop:relevance}), including a
      $K$-scaling analysis across our benchmarks
      (\S\ref{app:relevance-Kscaling}).
\item \textbf{\S\ref{app:theory-exp-synthesis}}---per-proposition
      theory--experiment synthesis with prediction, falsifier, and
      headline number, referenced
      from~\S\ref{sec:method-discussion}.
\item \textbf{\S\ref{app:dual-ascent}}---dual-ascent variant for
      adaptive capacity control, an automatic alternative to fixing
      $\lambda_A^{(l,g)}$ by hand.
\item \textbf{\S\ref{app:impl-details}}---implementation details and
      training algorithm.
\item \textbf{\S\ref{app:execution-model}}---execution model: default
      regime (centralised graph construction, decentralised action
      selection) and the optional frozen-graph regime for fully
      decentralised deployment.
\item \textbf{\S\ref{app:hp-sensitivity}}---full hyperparameter
      sensitivity study (group count off-by-one, warmup, and
      $\lambda_A$ scans) referenced
      from~\S\ref{sec:exp-theory}.
\end{itemize}

\section{Axis-by-Axis Method Comparison}
\label{app:method-comparison}

For convenience, Table~\ref{tab:app-method-comparison} reproduces the axis-by-axis comparison from~\S\ref{sec:method-discussion} of the main body with additional detail. GACG and BVME are our earlier works; HIBCG is the only method that combines a sparse, group-aware coordination graph, a layer-wise block-diagonal structural IB, and a per-agent message IB with closed-form, per-block, per-agent capacity weights.

\begin{table}[h]
\centering\small
\renewcommand{\arraystretch}{1.15}
\caption{Axis-by-axis comparison of HIBCG with the closest existing coordination-graph and information-bottleneck methods for MARL communication. GACG and BVME are our earlier works.}
\label{tab:app-method-comparison}
\begin{tabular}{@{}lcccc@{}}
\toprule
& \textbf{GACG}~\cite{GACG}
& \textbf{BVME}~\cite{BVME}
& \textbf{MAGI}~\cite{MAGI}
& \textbf{HIBCG} (ours) \\
\midrule
Graph type
  & Sparse CG & Sparse CG & Dense attention & Sparse CG \\
Structural IB
  & --- & --- & Categorical KL & Block-diag.\ Gauss.\ KL \\
Message IB
  & --- & Diag.\ Gauss.\ KL & MoG KL & Diag.\ Gauss.\ KL \\
Group-aware
  & \textbf{Yes} & No & No & \textbf{Yes} \\
Group-diff.\ pruning
  & --- & No & No & \textbf{Yes} \\
Per-layer
  & --- & Last layer & Selective & \textbf{All layers} \\
Theory
  & --- & --- & Variational bounds & \textbf{3 props + 1 remark} \\
\bottomrule
\end{tabular}
\end{table}

\section{Benchmark Maps and Configurations}
\label{app:benchmarks}

Table~\ref{tab:envs} lists the full set of maps used in the experiments of~\S\ref{sec:experiments}. SMACv1 maps test fixed team compositions with one homogeneous map (25m) as a negative control; SMACv2 maps add stochastic team composition; MAgent Battle stresses scale up to 100 agents.

\begin{table}[h]
\centering\small
\caption{Environments and map configurations.
$|\mathcal{U}_i|$ is the per-agent action-space size
(SMAC: $6\!+\!n_{\mathrm{enemy}}$; MAgent Battle: $21$);
$\log_2 K\!=\!n\log_2|\mathcal{U}_i|$ is the log joint action-space size
referenced by Proposition~\ref{prop:relevance}.}
\label{tab:envs}
\renewcommand{\arraystretch}{1.10}
\setlength{\tabcolsep}{4pt}
\begin{tabular}{@{}llccccc@{}}
\toprule
\textbf{Platform} & \textbf{Map} & \textbf{$n$} & \textbf{Types}
  & $|\mathcal{U}_i|$ & $\log_2 K$ & \textbf{Role} \\
\midrule
\multirow{4}{*}{SMACv1~\cite{SMAC}}
  & 3s5z       & 8  & 2 (het) & 14 & $\approx 30.5$  & Primary ablation \\
  & 1c3s5z     & 9  & 3 (het) & 15 & $\approx 35.2$  & Het.\ validation \\
  & MMM2       & 10 & 3 (het) & 18 & $\approx 41.7$  & Role-matched GP \\
  & 25m        & 25 & 1 (homo)& 31 & $\approx 123.8$ & GP negative control \\
\midrule
\multirow{2}{*}{SMACv2~\cite{DBLP:conf/nips/EllisCMSSMFW23}}
  & protoss\_8v8    & 8  & 2 (var.) & 14 & $\approx 30.5$ & Variable comp.\ \\
  & terran\_10v10   & 10 & 3 (var.) & 16 & $\approx 40.0$ & 3-type var.\ comp.\ \\
\midrule
\multirow{3}{*}{MAgent~\cite{MAgent}}
  & Battle\_36  & 36  & spatial & 21 & $\approx 158.2$ & Scale benchmark \\
  & Battle\_64  & 64  & spatial & 21 & $\approx 281.2$ & Mid-scale \\
  & Battle\_100 & 100 & spatial & 21 & $\approx 439.3$ & Stress test \\
\bottomrule
\end{tabular}
\end{table}

\section{Full Per-Map Results and Component Ablation}
\label{app:full-results}

This section provides the full per-map result tables (SMACv1, SMACv2, and MAgent Battle) and the component ablation table referenced from the main body. The corresponding learning curves are shown in Figure~\ref{fig:learning-curves}.

\begin{table}[h]
\centering\small
\caption{Component ablation on three representative maps.
Values are final test WR on SMAC (3s5z at $1.2\mathrm{M}$ steps, MMM2 at $2\mathrm{M}$) and final test return on MAgent-36 ($3\mathrm{M}$). Mean $\pm$ std over 5 seeds. Best in \textbf{bold}.}
\label{tab:app-ablation-component}
\renewcommand{\arraystretch}{1.15}
\setlength{\tabcolsep}{4pt}
\begin{tabular}{@{}lccc@{}}
\toprule
\textbf{Component config}
  & \textbf{3s5z} (WR)  & \textbf{MMM2} (WR)  & \textbf{MAgent-36} (ret.) \\
\midrule
QMIX (no graph)            & $0.897\pm0.062$ & $0.386\pm 0.25$ & $2.19\pm 0.37$ \\
GACG (hard threshold)      & $0.897\pm0.020$ & $0.488\pm 0.33$ & $1.63\pm 1.79$ \\
BVME / XIB-only (msg.\ IB) & $0.896\pm0.033$ & $0.755\pm 0.08$ & $1.43\pm 1.85$ \\
AIB-only (structural IB)   & $0.926\pm0.021$ & $0.502\pm 0.18$ & $1.87\pm 0.84$ \\
HIB-flat (AIB+XIB, flat)   & $0.908\pm0.034$ & $0.542\pm 0.38$ & $1.72\pm 1.36$ \\
\textbf{HIBCG (AIB+XIB+GP)}
  & $\mathbf{0.926\pm0.018}$
  & $\mathbf{0.814\pm 0.06}$
  & $\mathbf{2.74\pm 0.43}$ \\
\bottomrule
\end{tabular}
\end{table}

\begin{table}[h]
\centering\small
\caption{SMACv2 test win rate and integrated WR-AUC (0--2M steps; tail-10\% mean $\pm$ std over 5 seeds). Best in \textbf{bold}, second \underline{underlined}.}
\label{tab:smacv2}
\renewcommand{\arraystretch}{1.12}
\setlength{\tabcolsep}{4pt}
\begin{tabular}{@{}lcccc@{}}
\toprule
& \multicolumn{2}{c}{\textbf{protoss\_8v8}}
& \multicolumn{2}{c}{\textbf{terran\_10v10}} \\
\cmidrule(lr){2-3}\cmidrule(lr){4-5}
\textbf{Method} & WR & AUC & WR & AUC \\
\midrule
QMIX          & $0.328\pm0.068$ & $0.174\pm0.020$ & $0.420\pm0.068$ & $0.231\pm0.034$ \\
MAGI          & $0.393\pm0.055$ & $0.214\pm0.036$ & $0.378\pm0.083$ & $0.208\pm0.033$ \\
CommFormer    & $0.363\pm0.078$ & $0.186\pm0.028$ & $0.459\pm0.083$ & $0.216\pm0.028$ \\
ExpoComm      & $\underline{0.433\pm0.020}$ & $\mathbf{0.238\pm0.013}$ & $\mathbf{0.511\pm0.041}$ & $\underline{0.237\pm0.029}$ \\
GACG          & $0.358\pm0.039$ & $0.188\pm0.019$ & $0.375\pm0.088$ & $0.196\pm0.027$ \\
\midrule
BVME          & $0.333\pm0.074$ & $0.186\pm0.029$ & $0.442\pm0.037$ & $0.217\pm0.026$ \\
AIB-only      & $0.374\pm0.062$ & $0.187\pm0.016$ & $0.380\pm0.074$ & $0.193\pm0.025$ \\
HIB-flat      & $0.355\pm0.063$ & $0.190\pm0.017$ & $0.394\pm0.063$ & $0.208\pm0.025$ \\
\textbf{HIBCG}
              & $\mathbf{0.448\pm0.022}$ & $\underline{0.237\pm0.014}$
              & $\underline{0.509\pm0.034}$ & $\mathbf{0.246\pm0.017}$ \\
\bottomrule
\end{tabular}
\end{table}

\begin{table}[h]
\centering\small
\caption{SMACv1 test win rate (tail-10\% mean $\pm$ std over 5 seeds).
Training budget is $1.2\mathrm{M}$ steps on 3s5z and $2\mathrm{M}$ steps on
1c3s5z, MMM2, and 25m.  Best in \textbf{bold}, second \underline{underlined}.
25m is a homogeneous negative control for the group prior.
\textsuperscript{$\ddagger$}\,ExpoComm on MMM2 exhibits bimodal
non-convergence ($1/5$ seeds converge; $3/5$ fully collapse to zero
reward); see discussion in the main body.
}
\label{tab:smacv1}
\renewcommand{\arraystretch}{1.12}
\begin{tabular}{@{}lcccc@{}}
\toprule
\textbf{Method}
  & \textbf{3s5z} (2-type, $1.2\mathrm{M}$) & \textbf{1c3s5z} (3-type) & \textbf{MMM2} (3-type)
  & \textbf{25m} (homo.\ control) \\
\midrule
QMIX        & $0.897\pm0.062$ & $\underline{0.914\pm0.034}$     & $0.386\pm 0.25$ & $\underline{0.981\pm 0.01}$ \\
MAGI        & $\underline{0.926\pm0.012}$  & $0.905\pm0.024$   & $0.528\pm 0.35$   & $0.975\pm0.012$ \\
CommFormer  & $0.912\pm0.026$   & $0.878\pm0.034$   & $\underline{0.774\pm 0.08}$ & $\mathbf{0.983\pm0.009}$ \\
ExpoComm    & $0.855\pm0.042$   & $0.895\pm0.015$   & $0.204\pm 0.31$\textsuperscript{$\ddagger$} & $0.915\pm0.092$ \\
GACG        & $0.897\pm0.020$   & $0.897\pm0.051$ & $0.488\pm 0.33$ & $0.960\pm 0.03$ \\
\midrule
BVME        & $0.896\pm0.033$     & $0.913\pm0.023$   & $0.755\pm 0.08$ & $0.959\pm 0.02$ \\
AIB-only    & $0.926\pm0.021$ & $\mathbf{0.914\pm0.022}$  & $0.502\pm 0.18$ & $0.943\pm 0.036$ \\
HIB-flat    & $0.908\pm0.034$   & $0.906\pm0.044$  & $0.542\pm 0.38$ & $0.965\pm 0.02$ \\
\textbf{HIBCG}
            & $\mathbf{0.927\pm0.018}$ & $0.912\pm0.010$   & $\mathbf{0.814\pm 0.06}$  & $0.972\pm 0.008$ \\
\bottomrule
\end{tabular}
\end{table}

\begin{table}[h]
\centering\small
\caption{MAgent Battle: tail-10\% \emph{mean} test return (final), best
\emph{per-seed} peak test return during training, and convergence rate
(fraction of seeds whose tail mean is positive).  Training budgets are
$3\mathrm{M}$ environment steps at $n{=}36$ and $2\mathrm{M}$ steps at
$n{\in}\{64,100\}$ (same budgets for final-return and peak columns).
Best in \textbf{bold}, second \underline{underlined}.
All entries are over 5 seeds unless noted otherwise.}
\label{tab:magent}
\renewcommand{\arraystretch}{1.12}
\setlength{\tabcolsep}{4pt}
\begin{tabular}{@{}lcccccccccc@{}}
\toprule
& \multicolumn{3}{c}{\textbf{Final return}}
& \multicolumn{3}{c}{\textbf{Peak return}}
& \multicolumn{3}{c}{\textbf{Conv.\ rate}} \\
\cmidrule(lr){2-4}\cmidrule(lr){5-7}\cmidrule(lr){8-10}
\textbf{Method} & $n{=}36$ & $n{=}64$ & $n{=}100$
                & $n{=}36$ & $n{=}64$ & $n{=}100$
                & $n{=}36$ & $n{=}64$ & $n{=}100$ \\
 & {\footnotesize$3\mathrm{M}$} & {\footnotesize$2\mathrm{M}$} & {\footnotesize$2\mathrm{M}$}
 & {\footnotesize$3\mathrm{M}$} & {\footnotesize$2\mathrm{M}$} & {\footnotesize$2\mathrm{M}$}
 & & & \\
\midrule
QMIX       & $2.19\pm0.37$  & $0.43\pm1.22$ & $-0.22\pm0.95$
           & $2.61$ & $1.27$ & $1.05$ & $5/5$ & $3/7$ & $3/6$ \\
MAGI       & $1.11\pm0.96$  & $0.15\pm0.77$ & $-0.40\pm0.36$
           & $1.67$ & $1.48$ & $0.59$ & $4/5$ & $2/5$ & $0/5$ \\
CommFormer & $\underline{2.50\pm0.13}$ & $0.11\pm0.63$  & $-0.52\pm1.04$
           & $2.79$ & $1.43$ & $0.32$ & $5/5$ & $3/7$ & $1/6$ \\
GACG       & $1.63\pm1.79$  & $-0.46\pm0.82$
           & $-0.08\pm1.09$
           & $2.16$ & $0.74$ & $1.29$
           & $4/5$ & $4/5$ & $3/6$ \\
ExpoComm   & $1.66\pm0.53$ & $0.22\pm0.46$ & $-0.98\pm0.33$
           & $2.35$ & $0.71$ & $0.66$
           & $7/7$ & $4/5$ & $0/6$ \\
\midrule
AIB-only   & $1.87\pm0.84$  & $\underline{0.62\pm0.57}$ & $-0.63\pm0.84$
           & $2.41$ & $1.32$ & $1.38$ & $5/5$ & $5/5$ & $2/6$ \\
BVME       & $1.43\pm1.85$  & $0.08\pm1.08$ & $0.59\pm0.76$
           & $2.19$ & $1.21$ & $1.57$ & $4/5$ & $4/5$ & $5/6$ \\
HIB-flat   & $1.72\pm1.36$  & $\underline{0.62\pm1.27}$  & $0.42\pm0.23$  & $2.21$ & $1.65$& $1.19$ & $4/5$ & $4/5$ & $5/6$ \\
\textbf{HIBCG}
           & $\mathbf{2.74\pm0.43}$
           & $\mathbf{1.38\pm0.84}$
           & $\mathbf{0.83\pm0.54}$
           & $\mathbf{3.07}$ & $\mathbf{2.13}$ & $\mathbf{1.72}$
           & $6/6$  & $6/6$  & $5/6$ \\
\bottomrule
\end{tabular}
\end{table}

\section{Mechanism and Information-Allocation Analysis}
\label{app:info-allocation}

Figure~\ref{fig:mechanism} provides three diagnostic panels referenced from \S\S\ref{sec:exp-theory}--\ref{sec:exp-info} of the main body: (a) graph density evolution on 3s5z, validating the dual-path coupling between AIB and XIB; (b) intra-/cross-group AIB loss on MAgent-36, showing the architectural cross/intra ratio of $\sim\!906\times$; and (c) AIB mean-KL across MAgent scales, confirming that the group-conditional prior tracks the $O(n^2)$ edge growth while the flat prior saturates.

\begin{figure}[h]
\centering
\includegraphics[width=\textwidth]{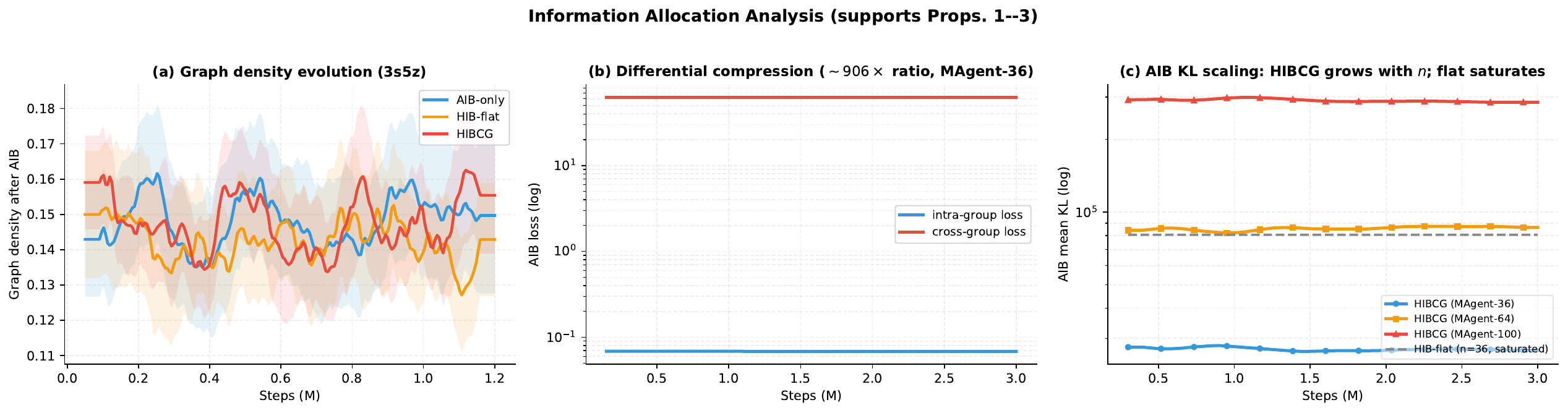}
\caption{\textbf{Mechanism and information-allocation analysis.}
(a) Graph density evolution on 3s5z: AIB-only follows a
``warm-then-prune'' trajectory; HIBCG reaches a comparable asymptote
with lower variance, confirming dual-path coupling.
(b) Differential compression inside HIBCG on MAgent-36: intra-group
AIB loss stays at $\sim\!0.07$ while cross-group loss stabilises at
$\sim\!62$---a $\sim\!906\times$ ratio that holds at $n\in\{36,64,100\}$
(structural architectural invariant).
(c) AIB mean-KL across scales: HIBCG's KL scales with agent count
(tracking the $O(n^2)$ edge count), whereas the flat prior saturates
at its structural ceiling---direct evidence that the group-conditional
prior exploits the block-diagonal edge structure.
}
\label{fig:mechanism}
\end{figure}

\subsection{XIB Collapse Diagnostics on MAgent and SMACv2}
\label{app:xib-collapse}

The dual-path decomposition (Prop.~\ref{prop:dual-path}) predicts
that AIB and XIB carry complementary information; whether either path
dominates is therefore benchmark-dependent. We summarise here the
quantitative diagnostics behind the short discussion in
\S\ref{sec:exp-info} of the main body.

\noindent\textbf{XIB-active regime (SMACv1).}\;
On role-rich, low-dimensional SMACv1 observations, the XIB path is the
primary lift: BVME (XIB-only on GACG's graph) reaches
$0.755$ WR on MMM2 versus GACG's $0.488$---a $+27$\,pp gain from
message compression alone (Table~\ref{tab:ablation-component}).

\noindent\textbf{XIB-dormant regime (MAgent, SMACv2).}\;
On MAgent and SMACv2, the XIB encoder collapses to its prior within
$200$--$300$k steps; concretely, $\widehat{\mathrm{XIB}}\!\to\!0$ and
$\mathrm{eval\_without\_messages}\!=\!\mathrm{eval\_original}$ to two
decimals. The collapse is robust to standard remediation knobs:
tighter priors ($\sigma_0\!\in\!\{0.005,0.01,0.02\}$), extended warmup
($500$k steps), and per-channel rate floors all fail to reactivate it.
The cause is structural: the $d_{\mathrm{obs}}\!=\!338$ MAgent spatial
maps already carry enough local information for decentralised action
selection, so $I(\mathcal{D};Z_X\!\mid\!Z_A)$ in
Eq.~\eqref{eq:chain-rule-decomp} is small. Capacity is then
reallocated to the AIB + group-prior path, consistent with the chain
rule.

\noindent\textbf{Implication.}\;
HIBCG degrades gracefully when XIB collapses: the topology-side
guarantees (Remark~\ref{rem:no-regret} and Prop.~\ref{prop:group-decomp})
are unaffected and HIBCG continues to deliver state-of-the-art
performance on both benchmark families.
Designing an XIB encoder that remains active under high-$d_{\mathrm{obs}}$
inputs---for instance, by expressing the prior in a coordinate system
that strips out already-observable redundancy---is a natural direction
for future work.

\clearpage

\section{Proof of Dual-Path Decomposition (Proposition~\ref{prop:dual-path})}
\label{app:chain-rule-proof}

For the reader's convenience, we restate the proposition before proving it.

\begin{proposition*}[\ref{prop:dual-path}, Dual-Path Decomposition via Chain Rule]
The total information flow decomposes into a structural component and a
conditional message component:
\[
I(\mathcal{D};\;Z_A,\,Z_X)
\;=\;
\underbrace{I(\mathcal{D};\;Z_A)}_{\text{structural bits (AIB)}}
\;+\;
\underbrace{I(\mathcal{D};\;Z_X\mid Z_A)}_{\text{message bits given topology (XIB)}}.
\]
Under the conditional independencies
\begin{enumerate}
\item[(CI-A)] $Z_A^{(l)}\!\perp\!(\mathcal{D},Z_A^{(1:l-1)},Z_X^{(1:l-2)})
\;\mid\;(A^{(0)},Z_X^{(l-1)})$,
\item[(CI-X)] $Z_X^{(l)}\!\perp\!(\mathcal{D},Z_X^{(1:l-2)},Z_A^{(1:l-1)})
\;\mid\;(Z_X^{(l-1)},Z_A^{(l)})$,
\end{enumerate}
each term admits a variational upper bound:
\begin{align*}
I(\mathcal{D};\;Z_A)
&\;\le\;
\sum_{l=1}^{L}
\mathbb{E}\!\Big[\mathrm{KL}\!\big(
\mathbb{P}(Z_A^{(l)}\!\mid\!A^{(0)},Z_X^{(l-1)})\,\Vert\,
\mathbb{Q}(Z_A^{(l)})\big)\Big]
\;\triangleq\;\sum_{l=1}^{L}\mathrm{AIB}^{(l)},\\[4pt]
I(\mathcal{D};\;Z_X\mid Z_A)
&\;\le\;
\sum_{l=1}^{L}
\mathbb{E}\!\Big[\mathrm{KL}\!\big(
\mathbb{P}(Z_X^{(l)}\!\mid\!Z_X^{(l-1)},Z_A^{(l)})\,\Vert\,
\mathbb{Q}(Z_X^{(l)})\big)\Big]
\;\triangleq\;\sum_{l=1}^{L}\mathrm{XIB}^{(l)}.
\end{align*}
\end{proposition*}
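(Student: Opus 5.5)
The decomposition itself is the chain rule of mutual information. I will write $Z_A=(Z_A^{(1)},\dots,Z_A^{(L)})$ and $Z_X=(Z_X^{(1)},\dots,Z_X^{(L)})$, and expand $I(\mathcal{D};Z_A,Z_X)=I(\mathcal{D};Z_A)+I(\mathcal{D};Z_X\mid Z_A)$ directly from $I(\mathcal{D};U,V)=I(\mathcal{D};U)+I(\mathcal{D};V\mid U)$. This step needs no assumptions. The conditional independencies are used only for the two variational bounds.

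\textbf{Structural bound.} I will proceed in two steps.

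First, apply the chain rule layer by layer:
\[
I(\mathcal{D};Z_A)=\sum_{l=1}^L I(\mathcal{D};Z_A^{(l)}\mid Z_A^{(1:l-1)}).
\]

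Second, bound each summand by an unconditional mutual information between the layer-$l$ latent and its encoder inputs $\mathcal{D}_A^{(l)}=(A^{(0)},Z_X^{(l-1)})$. Enlarge the conditioning and observed variables so the encoder inputs appear explicitly. By (CI-A), $Z_A^{(l)}$ depends on $(\mathcal{D},Z_A^{(1:l-1)},Z_X^{(1:l-2)})$ only through $\mathcal{D}_A^{(l)}$. This Markov structure gives
\[
I(\mathcal{D};Z_A^{(l)}\mid Z_A^{(1:l-1)})\le I(\mathcal{D}_A^{(l)},\mathcal{D},Z_A^{(1:l-1)};Z_A^{(l)})=I(\mathcal{D}_A^{(l)};Z_A^{(l)}).
\]
The inequality holds because conditional MI is at most the joint MI obtained by moving the conditioning variables into the observed side, and the joint MI only grows when further variables are added. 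The equality is the CI-A Markov property.

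With the summand reduced to $I(\mathcal{D}_A^{(l)};Z_A^{(l)})$, I apply the standard variational bound. Write
\[
I(\mathcal{D}_A^{(l)};Z_A^{(l)})=\mathbb{E}\big[\mathrm{KL}(\mathbb{P}(Z_A^{(l)}\mid\mathcal{D}_A^{(l)})\Vert \mathbb{P}(Z_A^{(l)}))\big].
\]
For any $\mathbb{Q}$, this equals $\mathbb{E}[\mathrm{KL}(\mathbb{P}(\cdot\mid\mathcal{D}_A^{(l)})\Vert\mathbb{Q})]-\mathrm{KL}(\mathbb{P}(Z_A^{(l)})\Vert\mathbb{Q})$. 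Nonnegativity of the second KL then gives $\mathrm{AIB}^{(l)}$ as an upper bound. Summing over $l$ yields the first inequality.

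\textbf{Message bound.} The argument runs the same way with conditioning on $Z_A$ throughout. First expand
\[
I(\mathcal{D};Z_X\mid Z_A)=\sum_l I(\mathcal{D};Z_X^{(l)}\mid Z_A,Z_X^{(1:l-1)}).
\]
Each summand is at most $I(\mathcal{D},Z_A,Z_X^{(1:l-1)};Z_X^{(l)})$. By (CI-X), I want to collapse this to $I(Z_X^{(l-1)},Z_A^{(l)};Z_X^{(l)})$, and then apply the same KL-to-marginal argument to get $\mathrm{XIB}^{(l)}$.

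\textbf{Main obstacle.} The delicate point is exactly which variables the stated CIs screen off. (CI-X) does not mention the \emph{future} structural latents $Z_A^{(l+1:L)}$, yet conditioning on the full $Z_A$ includes them. The same issue arises for the lower-level latents in (CI-A), which is stated only for the past. To handle this, I will first try to avoid conditioning on future variables: I will order the chain-rule expansion of the joint $(Z_A,Z_X)$ layer by layer, interleaving $Z_A^{(l)}$ and $Z_X^{(l)}$, as in GIB \cite{GIB_Wu}. This bounds $I(\mathcal{D};Z_A,Z_X)$ by $\sum_l(\mathrm{AIB}^{(l)}+\mathrm{XIB}^{(l)})$, which is the bound actually needed. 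If the separate per-term bounds are required as stated, I will add the natural Markov assumption implied by the recursion, namely that $Z_X^{(l)}$ is independent of $Z_A^{(l+1:L)}$ given the past. Under that assumption, conditioning on future structural latents only reduces the relevant MI via data processing.

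Finally, I will remark that when message passing is deterministic, the conditional MI terms vanish, so $\mathrm{XIB}^{(l)}=0$. Strictly, the KL against a continuous prior is then ill-defined, so I treat it as the degenerate case in which only $Z_X^{\mathrm{out}}$ is regularised.
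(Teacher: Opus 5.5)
Your chain-rule decomposition and structural bound match the paper's Steps~2--4, and that part is correct. Augmenting the source by $\mathcal{D}_A^{(l)}$ and using (CI-A) to drop $(\mathcal{D},Z_A^{(1:l-1)})$ is the paper's source augmentation followed by DPI along $(\mathcal{D},Z_A^{(1:l-1)})\to(A^{(0)},Z_X^{(l-1)})\to Z_A^{(l)}$. Your worry about (CI-A) does not apply there, since only past latents are conditioned on. The gap is the message bound, and it is more than a delicate point: the second inequality is false under (CI-A) and (CI-X) alone. Take $L=2$, $\mathcal{D}$ a uniform bit, $A^{(0)}=\mathcal{D}$, and $Z_X^{(0)},Z_A^{(1)},Z_X^{(2)}$ constant. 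Let $Z_X^{(1)}=N$ be an independent uniform bit and $Z_A^{(2)}=A^{(0)}\oplus Z_X^{(1)}$. Every latent is constant, fresh noise, or a function of its designated parents, so both CIs hold. Yet $I(\mathcal{D};Z_X\mid Z_A)=I(\mathcal{D};N\mid \mathcal{D}\oplus N)=1$ bit, while even the tightest form of the right-hand side, $\sum_l I(Z_X^{(l-1)},Z_A^{(l)};Z_X^{(l)})$ (i.e.\ $\mathbb{Q}$ equal to the marginals), is $0$. The cause is conditioning on the future collider $Z_A^{(2)}$. So no argument can deliver the stated per-term bound from these hypotheses. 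Your interleaved expansion is the right repair, but it proves only $I(\mathcal{D};Z_A,Z_X)\le\sum_l(\mathrm{AIB}^{(l)}+\mathrm{XIB}^{(l)})$. In the example the bit is absorbed by $\mathrm{AIB}^{(2)}\ge I(A^{(0)},Z_X^{(1)};Z_A^{(2)})=1$, while $I(\mathcal{D};Z_A)=0$. The paper's Step~5 has the same hole. It asserts $I(\mathcal{D};Z_X^{(l)}\mid Z_A,Z_X^{(1:l-1)})\le I(Z_X^{(l-1)},Z_A^{(l)};Z_X^{(l)})$ with $Z_A^{(l+1:L)}$ in the conditioning set while explicitly disclaiming independence from it. Its ``Note on conditioning'' then retreats to deterministic message passing and recommends conditioning on $Z_A^{(1:l)}$ only for stochastic encoders, which is your interleaving.

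Your fallback hypothesis is mislabelled: $Z_X^{(l)}\perp Z_A^{(l+1:L)}$ given the past is not implied by the recursion. The layer-$(l+1)$ structural encoder reads $Z_X^{(l)}$, so the hypothesis fails in the example above and for any stochastic message encoder whose output the next layer actually uses. It is an additional restriction, satisfied e.g.\ when $Z_X^{(l)}$ is a deterministic function of its parents. Under it, your collapse to $I(Z_X^{(l-1)},Z_A^{(l)};Z_X^{(l)})$ is valid. Finally, in the deterministic case the terms do not all vanish. For $l=1$, $I(\mathcal{D};Z_X^{(1)}\mid Z_A)\neq 0$ in general, because $Z_X^{(0)}=f_{\mathrm{in}}(O^t)$ is not in the conditioning set. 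Also, a Dirac encoder against a continuous prior gives $\mathrm{XIB}^{(l)}=+\infty$, which is well defined (not undefined, and not zero). The bound therefore holds there only vacuously; the paper's claim $\mathrm{XIB}^{(l)}=0$ has the same imprecision.
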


We now provide a detailed, step-by-step derivation.  All
information-theoretic identities used are standard
(see~\cite{cover_thomas}, Chapters~2 and~8).

\begin{proof}
We proceed in seven steps.

\paragraph{Step~1: The generative structure of message-passing GNNs.}
Before proving the bound, we make the generative structure of the GNN
explicit.  The base structure $A^{(0)}$ is a \emph{stochastic} function of
$\mathcal{D}$: it is sampled from the GACG Gaussian
$\mathcal{N}(\boldsymbol{\mu}^t,\Sigma_A^t)$
whose parameters are computed from $\mathcal{D}$ (Eq.~\eqref{eq:base-struct}).
The sampling noise is independent of all downstream layer-wise latents
$\{Z_A^{(l)},Z_X^{(l)}\}_{l\ge1}$, and the proof below always conditions on the
realized value of $A^{(0)}$, so this stochasticity does not affect any
conditional independence used in the argument.
At each layer~$l$, the computation proceeds as:
\begin{equation}
\label{eq:app-markov}
\mathcal{D}\;\to\;A^{(0)},\qquad
\text{and for each }l:\quad
(A^{(0)},Z_X^{(l-1)}) \;\to\; Z_A^{(l)} \;\to\; Z_X^{(l)},
\end{equation}
where each arrow denotes a stochastic dependence.
\emph{Note:} this is \textbf{not} a simple chain---$A^{(0)}$ (and hence
$\mathcal{D}$) feeds into every layer's structural encoder via a skip
connection.  The precise conditional dependencies are:
\begin{itemize}
\item $Z_A^{(l)}$ is drawn from
$\mathbb{P}(Z_A^{(l)}\!\mid\! A^{(0)},Z_X^{(l-1)})$, the structural
encoder (Eq.~\eqref{eq:a-encoder}).  It depends on $\mathcal{D}$ through
$A^{(0)}$ (derived from $\mathcal{D}$) and $Z_X^{(l-1)}$.
\item $Z_X^{(l)}$ is computed by deterministic graph message passing
$Z_X^{(l)}=\phi_l(\widetilde{A}^{(l)}\,Z_X^{(l-1)}\,W_l)$ given
$(Z_X^{(l-1)},Z_A^{(l)})$ (equivalently $\widetilde{A}^{(l)}=g_l(Z_A^{(l)})$).
It depends on $\mathcal{D}$ only through $Z_X^{(l-1)}$ and $Z_A^{(l)}$.
The stochastic message encoder of Eq.~\eqref{eq:x-encoder} is applied
\emph{after} the $L$-layer stack to produce $Z_X^{\mathrm{out}}$, not at
each intermediate layer.
\end{itemize}
The key conditional independence properties are:
\begin{enumerate}
\item[(CI-A)] $Z_A^{(l)}\!\perp\!(\mathcal{D},\,Z_A^{(1:l-1)},\,Z_X^{(1:l-2)})
\;\mid\;(A^{(0)},\,Z_X^{(l-1)})$.
\item[(CI-X)] $Z_X^{(l)}\!\perp\!(\mathcal{D},\,Z_A^{(1:l-1)},\,Z_X^{(1:l-2)})
\;\mid\;(Z_X^{(l-1)},\,Z_A^{(l)})$.
\end{enumerate}
That is, $(A^{(0)},Z_X^{(l-1)})$ is a sufficient statistic of
$(\mathcal{D},Z_A^{(1:l-1)})$ for $Z_A^{(l)}$, and
$(Z_X^{(l-1)},Z_A^{(l)})$ is a sufficient statistic for $Z_X^{(l)}$.

\paragraph{Step~2: Chain rule of mutual information (the key identity).}
For any random variables $X$, $Y$, $Z$, the chain rule states:
\begin{equation}
\label{eq:app-chain-rule}
I(X;\,Y,Z) = I(X;\,Y) + I(X;\,Z\mid Y).
\end{equation}
\emph{Proof:} From the definition of MI and conditional MI,
\begin{align}
I(X;Y,Z)
&= H(X)-H(X\mid Y,Z) \nonumber\\
&= \big[H(X)-H(X\mid Y)\big]+\big[H(X\mid Y)-H(X\mid Y,Z)\big] \nonumber\\
&= I(X;Y)+I(X;Z\mid Y). \tag*{$\diamond$}
\end{align}
Apply this with $X=\mathcal{D}$, $Y=Z_A=\{Z_A^{(l)}\}_{l=1}^L$,
$Z=Z_X=\{Z_X^{(l)}\}_{l=1}^L$:
\begin{equation}
\label{eq:app-chain-applied}
I(\mathcal{D};\;Z_A,\,Z_X)
\;=\;
\underbrace{I(\mathcal{D};\;Z_A)}_{\text{structural bits (AIB)}}
\;+\;
\underbrace{I(\mathcal{D};\;Z_X\mid Z_A)}_{\text{message bits given topology (XIB)}}.
\end{equation}
This is Eq.~\eqref{eq:chain-rule-decomp} in the main body.

\paragraph{Step~3: Layer-wise decomposition and upper bound of the structural term.}
We further decompose $I(\mathcal{D};Z_A)=I(\mathcal{D};Z_A^{(1:L)})$ using
the chain rule applied to layers sequentially:
\begin{align}
I(\mathcal{D};\,Z_A^{(1:L)})
&= I(\mathcal{D};\,Z_A^{(1)})
 + I(\mathcal{D};\,Z_A^{(2)}\mid Z_A^{(1)})
 + \cdots
 + I(\mathcal{D};\,Z_A^{(L)}\mid Z_A^{(1:L-1)}) \nonumber\\
&= \sum_{l=1}^{L} I(\mathcal{D};\,Z_A^{(l)}\mid Z_A^{(1:l-1)}).
\label{eq:app-layer-chain}
\end{align}
We now bound each conditional term.
Since adding variables to the ``source'' of a mutual information can only
increase it\footnote{This follows directly from the chain rule:
$I(X,W;\,Y)=I(W;\,Y)+I(X;\,Y\mid W)\ge I(X;\,Y\mid W)$ for any~$W$.},
we have
\begin{equation}
\label{eq:app-source-bound}
I(\mathcal{D};\,Z_A^{(l)}\mid Z_A^{(1:l-1)})
\;\le\;
I(\mathcal{D},\,Z_A^{(1:l-1)};\;Z_A^{(l)}).
\end{equation}
\emph{Remark.} The inequality
$I(\mathcal{D};Z_A^{(l)}\mid Z_A^{(1:l-1)})\le I(\mathcal{D};Z_A^{(l)})$
(dropping the conditioning) does \emph{not} hold in general, because the
interaction information can have either sign.
The correct route is~\eqref{eq:app-source-bound}, which is always valid.

\medskip\noindent
By the conditional independence (CI-A) from Step~1, the following
Markov chain holds---given the ``direct inputs''
$(A^{(0)},Z_X^{(l-1)})$, the current output $Z_A^{(l)}$ is independent
of all ``distant past'' variables $(\mathcal{D},Z_A^{(1:l-1)})$:
\[
\underbrace{(\mathcal{D},\, Z_A^{(1:l-1)})}_{\text{distant past}\;(X)}
\;\longrightarrow\;
\underbrace{(A^{(0)},\, Z_X^{(l-1)})}_{\text{direct inputs}\;(Y)}
\;\longrightarrow\;
\underbrace{Z_A^{(l)}}_{\text{current output}\;(Z)}.
\]
The data-processing inequality (DPI) states that for any Markov chain
$X\!\to\!Y\!\to\!Z$, the mutual information satisfies
$I(X;Z)\le I(Y;Z)$---information can only be lost, not created,
through intermediate processing.
Applying DPI to the chain above:
\begin{equation}
\label{eq:app-markov-ineq}
\underbrace{I(\mathcal{D},\,Z_A^{(1:l-1)};\;Z_A^{(l)})}_{\text{info shared with distant past}}
\;\le\;
\underbrace{I(A^{(0)},Z_X^{(l-1)};\;Z_A^{(l)})}_{\text{info shared with direct inputs}}.
\end{equation}
Note that strict equality does \emph{not} hold in general, because
$(A^{(0)},Z_X^{(l-1)})$ carries stochastic noise from the
structural reparameterization variables $\varepsilon_A^{(1:l-1)}$
injected by earlier AIB encoders (Eq.~\eqref{eq:a-encoder}),
which affect $Z_X^{(l-1)}$ through gated message passing.
This noise is independent of the distant past
$(\mathcal{D},Z_A^{(1:l-1)})$ yet informative about~$Z_A^{(l)}$.
The inequality suffices for the bound; any gap here only makes the overall
bound looser.
Combining \eqref{eq:app-layer-chain}, \eqref{eq:app-source-bound},
and~\eqref{eq:app-markov-ineq}, we chain the inequalities for each layer
and then sum:
\begin{align}
I(\mathcal{D};\,Z_A^{(1:L)})
&\;=\;\sum_{l=1}^{L}
  I\!\big(\mathcal{D};\,Z_A^{(l)}\,\big|\,Z_A^{(1:l-1)}\big)
  &&\text{(chain rule, Eq.~\eqref{eq:app-layer-chain})}
  \nonumber\\[4pt]
&\;\le\;\sum_{l=1}^{L}
  I\!\big(\mathcal{D},\,Z_A^{(1:l-1)};\;Z_A^{(l)}\big)
  &&\text{(source augmentation, Eq.~\eqref{eq:app-source-bound})}
  \nonumber\\[4pt]
&\;\le\;\sum_{l=1}^{L}
  I\!\big(A^{(0)},Z_X^{(l-1)};\;Z_A^{(l)}\big).
  &&\text{(data-processing inequality (DPI), Eq.~\eqref{eq:app-markov-ineq})}
  \label{eq:app-structural-sum}
\end{align}
The first line is an \emph{equality}; both subsequent lines are
\emph{inequalities} that hold for every~$l$ individually, so they are
preserved under summation.

\paragraph{Step~4: Variational upper bound on each structural layer.}
For any distribution $\mathbb{Q}(Z)$, the mutual information satisfies:
\begin{equation}
\label{eq:app-variational-identity}
I(X;Z)
= \mathbb{E}_{X}\!\big[\mathrm{KL}(\mathbb{P}(Z\mid X)\Vert\bar{\mathbb{P}}(Z))\big],
\end{equation}
where $\bar{\mathbb{P}}(Z)=\mathbb{E}_X[\mathbb{P}(Z\mid X)]$ is the
marginal (or ``aggregate posterior''), which is intractable because it requires integrating the posterior over the data distribution.
To bypass this, we introduce a tractable variational approximation (or ``prior'') $\mathbb{Q}(Z)$:
\begin{align}
I(X;Z)
&= \mathbb{E}_{X}\!\big[\mathrm{KL}(\mathbb{P}(Z\mid X)\Vert\mathbb{Q}(Z))\big]
  -\mathrm{KL}(\bar{\mathbb{P}}(Z)\Vert\mathbb{Q}(Z)) \nonumber\\
&\le \mathbb{E}_{X}\!\big[\mathrm{KL}(\mathbb{P}(Z\mid X)\Vert\mathbb{Q}(Z))\big],
\label{eq:app-vub-derivation}
\end{align}
since $\mathrm{KL}(\bar{\mathbb{P}}\Vert\mathbb{Q})\ge 0$.  The gap
$\mathrm{KL}(\bar{\mathbb{P}}\Vert\mathbb{Q})$ measures how well the
prior $\mathbb{Q}$ approximates the true marginal---this is precisely the
quantity that group-conditional priors reduce
(Remark~\ref{rem:no-regret}).

We now instantiate~\eqref{eq:app-vub-derivation} for each structural
layer.  The generic bound states $I(X;Z)\le
\mathbb{E}_{X}[\mathrm{KL}(\mathbb{P}(Z\!\mid\! X)\Vert\mathbb{Q}(Z))]$.
We make the following substitutions for layer~$l$:
\begin{center}
\renewcommand{\arraystretch}{1.3}
\begin{tabular}{lcl}
\textbf{Generic} & $\longrightarrow$ & \textbf{Structural layer~$l$} \\
\hline
$X$ & $\longrightarrow$ & $(A^{(0)},\,Z_X^{(l-1)})$
  \;\;(encoder input) \\
$Z$ & $\longrightarrow$ & $Z_A^{(l)}$
  \;\;(latent output) \\
$\mathbb{P}(Z\!\mid\! X)$ & $\longrightarrow$ &
  $\mathbb{P}(Z_A^{(l)}\!\mid\! A^{(0)},Z_X^{(l-1)})$
  \;\;(structural encoder, Eq.~\eqref{eq:a-encoder}) \\
$\mathbb{Q}(Z)$ & $\longrightarrow$ &
  $\mathbb{Q}(Z_A^{(l)})$
  \;\;(prior, group-aligned or flat) \\
\end{tabular}
\end{center}
The identification $\mathbb{P}(Z\!\mid\! X)=
\mathbb{P}(Z_A^{(l)}\!\mid\! A^{(0)},Z_X^{(l-1)})$ is justified by
(CI-A): given $(A^{(0)},Z_X^{(l-1)})$, the latent $Z_A^{(l)}$ is
independent of all other variables, so the structural encoder is indeed the
correct conditional distribution of $Z_A^{(l)}$ given $X$.
After substitution, the generic variational upper bound becomes:
\begin{equation}
\label{eq:app-aib-vub}
I(A^{(0)},Z_X^{(l-1)};\;Z_A^{(l)})
\;\le\;
\mathbb{E}_{A^{(0)},Z_X^{(l-1)}}\!\Big[
\mathrm{KL}\!\big(\mathbb{P}(Z_A^{(l)}\!\mid\!A^{(0)},Z_X^{(l-1)})\,\Vert\,
\mathbb{Q}(Z_A^{(l)})\big)\Big]
\;\triangleq\;\mathrm{AIB}^{(l)}.
\end{equation}
Combining~\eqref{eq:app-structural-sum}
and~\eqref{eq:app-aib-vub} and summing over all layers:
\begin{align}
I(\mathcal{D};\;Z_A)
&\;\le\; \sum_{l=1}^{L}
  I(A^{(0)},Z_X^{(l-1)};\;Z_A^{(l)})
  &&\text{(from~\eqref{eq:app-structural-sum})}
  \nonumber\\
&\;\le\; \sum_{l=1}^{L}\mathrm{AIB}^{(l)}.
  &&\text{(applying~\eqref{eq:app-aib-vub} to each term)}
  \label{eq:app-structural-bound}
\end{align}

\paragraph{Step~5: Conditional message term, layer by layer.}
We apply the same strategy to $I(\mathcal{D};Z_X\mid Z_A)$.  Conditioning
on $Z_A$ throughout, the chain rule over layers gives:
\begin{equation}
I(\mathcal{D};\,Z_X^{(1:L)}\mid Z_A)
= \sum_{l=1}^{L} I(\mathcal{D};\,Z_X^{(l)}\mid Z_A,Z_X^{(1:l-1)}).
\end{equation}
Adding variables to the source and applying the Data Processing Inequality (DPI) to the Markov chain (CI-X) from Step~1: conditioned on
$(Z_X^{(l-1)},Z_A^{(l)})$, $Z_X^{(l)}$ is independent of the past
$(\mathcal{D},\,Z_A^{(1:l-1)},\,Z_X^{(1:l-2)})$, which is all that is needed
for the layer-$l$ bound below (we do not condition on, or claim independence
from, the downstream structural latents $Z_A^{(l+1:L)}$, which are themselves
generated using $Z_X^{(l)}$):
\begin{equation}
I(\mathcal{D};\,Z_X^{(l)}\mid Z_A,Z_X^{(1:l-1)})
\;\le\; I(Z_X^{(l-1)},Z_A^{(l)};\;Z_X^{(l)}).
\end{equation}
Applying the variational bound~\eqref{eq:app-vub-derivation} with
$X=(Z_X^{(l-1)},Z_A^{(l)})$ and $Z=Z_X^{(l)}$:
\begin{equation}
I(Z_X^{(l-1)},Z_A^{(l)};\;Z_X^{(l)})
\;\le\;
\mathbb{E}\!\Big[\mathrm{KL}\!\big(
\mathbb{P}(Z_X^{(l)}\!\mid\!Z_X^{(l-1)},Z_A^{(l)})
\,\Vert\,\mathbb{Q}(Z_X^{(l)})\big)\Big]
\;\triangleq\;\mathrm{XIB}^{(l)}.
\end{equation}
Summing:
\begin{equation}
\label{eq:app-message-bound}
I(\mathcal{D};\;Z_X\mid Z_A)
\;\le\; \sum_{l=1}^{L}\mathrm{XIB}^{(l)}.
\end{equation}
\emph{Note on conditioning.}\;
In HIBCG, intermediate message passing is deterministic, so
$Z_X^{(l)}$ is a constant given $(Z_X^{(l-1)},Z_A^{(l)})$; every term
$I(\mathcal{D};\,Z_X^{(l)}\!\mid\!Z_A,Z_X^{(1:l-1)})=0$ and the bound
holds trivially.  For a future extension with stochastic message
encoders at every layer, the chain rule should instead condition on
$Z_A^{(1:l)}$ only (not all of $Z_A$), since the downstream latents
$Z_A^{(l+1:L)}$ are generated from~$Z_X^{(l)}$ and are therefore not
independent of~$Z_X^{(l)}$ given its parents.

\paragraph{Step~6: Combine to get the full bound.}
Substituting~\eqref{eq:app-structural-bound}
and~\eqref{eq:app-message-bound} into the chain
rule~\eqref{eq:app-chain-applied}:
\begin{equation}
\label{eq:app-final-bound}
I(\mathcal{D};\;Z_A,Z_X)
\;\le\;
\underbrace{\sum_{l=1}^{L}\mathrm{AIB}^{(l)}}_{\text{structural compression}}
\;+\;
\underbrace{\sum_{l=1}^{L}\mathrm{XIB}^{(l)}}_{\text{message compression}}.
\end{equation}

\paragraph{Step~7: Connect to the GNN output.}
The layer-$L$ representation $Z_X^{(L)}$ is a deterministic function of all
latents $(Z_A^{(1:L)},Z_X^{(1:L)})$.  By the data-processing inequality
(DPI), any deterministic post-processing cannot increase MI:
\begin{equation}
I(\mathcal{D};\;Z_X^{(L)})
\;\le\; I(\mathcal{D};\;Z_A,Z_X)
\;\le\;
\sum_{l=1}^{L}\mathrm{AIB}^{(l)}+\sum_{l=1}^{L}\mathrm{XIB}^{(l)}.
\end{equation}
In HIBCG the decision-facing code $Z_X^{\mathrm{out}}$ is a further
(stochastic) post-processing of $Z_X^{(L)}$, so
$I(\mathcal{D};Z_X^{\mathrm{out}})\le I(\mathcal{D};Z_X^{(L)})$ and the
same upper bound applies; moreover, because intermediate message passing is
deterministic, $\mathrm{XIB}^{(l)}=0$ for all $l$ and the only nonzero
message penalty is the KL on $Z_X^{\mathrm{out}}$ (Eq.~\eqref{eq:xib-hat}).
This shows that the AIB and XIB terms provide a valid upper
bound on the total information in the IB
objective~\eqref{eq:gib-objective}, completing the proof.
\end{proof}

\paragraph{Discussion: why this decomposition matters.}
The chain-rule identity~\eqref{eq:app-chain-applied} is exact, not an
approximation---it holds for any joint distribution.  The only
\emph{inequalities} are:
\begin{enumerate}
\item The source-augmentation step
  (Eq.~\eqref{eq:app-source-bound} in Steps~3 and~5):
  \[
  I(\mathcal{D};\,Z^{(l)}\!\mid\!Z^{(1:l-1)}) \;\le\; I(\mathcal{D},\,Z^{(1:l-1)};\;Z^{(l)}).
  \]
  This step is structurally necessary to apply the Data Processing Inequality (DPI).
  The Markov property (CI-A) applies to the \emph{joint} history $(\mathcal{D},Z^{(1:l-1)})$, not to $\mathcal{D}$ in isolation.
  By augmenting the source, we effectively ``pay'' the cost of the inter-layer correlations $I(Z^{(1:l-1)}; Z^{(l)})$ to reach a form where the DPI bound is valid.
  \emph{Note:} The na\"ive alternative of simply dropping the conditioning—$I(\mathcal{D};Z^{(l)}\!\mid\!Z^{(1:l-1)})\le I(\mathcal{D};Z^{(l)})$—does \emph{not} hold in general.
  Context can increase information (synergy), making the interaction information negative; the source-augmentation route is the only one that guarantees a valid upper bound.
\item The variational gap
$\mathrm{KL}(\bar{\mathbb{P}}\Vert\mathbb{Q})$ in each layer
(Step~4), which is tight when $\mathbb{Q}$ equals the true marginal.
This is where group-conditional priors help
(Remark~\ref{rem:no-regret}).
\item The DPI in Step~7, which is tight when $Z_X^{(L)}$ is a
sufficient statistic of $(Z_A,Z_X)$ for $\mathcal{D}$.
\end{enumerate}
The decomposition shows that AIB and XIB are not ad hoc---they are the two
terms that arise naturally from the chain rule applied to the structural and
message paths of a GNN.

\section{Proof of Group-Conditional Prior Tightness (Remark~\ref{rem:no-regret})}
\label{app:group-prior-proof}

\begin{remark}[No-Regret Guarantee of Group-Aligned Priors (Restated)]
Let $\mathcal{D}_A^{(l)}\!\triangleq\!(A^{(0)},Z_X^{(l-1)})$ denote the
structural-encoder inputs at layer~$l$,
$\mathbb{Q}_{\mathrm{flat}}(Z_A^{(l)})=\mathcal{N}(0,\sigma_0^2 I)$
be a flat isotropic prior, and define the optimally matched group-aligned
prior as
$\mathbb{Q}_{\mathrm{group}}^{\star}(Z_A^{(l)})
\triangleq
\arg\min_{\mathbb{Q}\in\mathcal{Q}_{\mathrm{blk}}}
\mathrm{KL}(\bar{\mathbb{P}}(Z_A^{(l)})\Vert\mathbb{Q})$,
where
$\bar{\mathbb{P}}\!\triangleq\!
\mathbb{E}_{\mathcal{D}_A^{(l)}}[
\mathbb{P}(Z_A^{(l)}\!\mid\!\mathcal{D}_A^{(l)})]$
is the aggregate posterior of the structural encoder and
$\mathcal{Q}_{\mathrm{blk}}
=\{\mathcal{N}(0,\mathrm{blkdiag}(\sigma_{0,g}^{2}\,I_{k_g})_{g\in\mathcal{G}'}):
\sigma_{0,g}>0\}$
is the family of zero-mean Gaussian priors with one isotropic scale per
group block aligned with the group-block index set~$\mathcal{G}'$
(Def.~\ref{def:ibcg}) -- the setting used by HIBCG in practice. The flat
prior is recovered by setting $\sigma_{0,g}\!=\!\sigma_0$ for all $g$,
so $\mathbb{Q}_{\mathrm{flat}}\!\in\!\mathcal{Q}_{\mathrm{blk}}$.
Then, unconditionally,
\[
\mathbb{E}\!\Big[
\mathrm{KL}\!\big(\mathbb{P}(Z_A^{(l)}\!\mid\!\mathcal{D}_A^{(l)})\,\Vert\,
\mathbb{Q}_{\mathrm{group}}^{\star}\big)\Big]
\;\le\;
\mathbb{E}\!\Big[
\mathrm{KL}\!\big(\mathbb{P}(Z_A^{(l)}\!\mid\!\mathcal{D}_A^{(l)})\,\Vert\,
\mathbb{Q}_{\mathrm{flat}}\big)\Big].
\]
The gap equals
$\mathrm{KL}(\bar{\mathbb{P}}\Vert\mathbb{Q}_{\mathrm{flat}})
-\mathrm{KL}(\bar{\mathbb{P}}\Vert\mathbb{Q}_{\mathrm{group}}^{\star})\ge 0$
and is strict whenever
$\sigma_0^2\!\neq\!(\mathrm{tr}(\Sigma_g)+\lVert\bar{\mu}_g\rVert^2)/k_g$
for at least one~$g$ -- in particular, whenever the per-block second
moments differ across groups.
\end{remark}

\begin{proof}
We show that the bound holds unconditionally in three steps.

\paragraph{Key identity.}
Write $Z\!\triangleq\!Z_A^{(l)}$.
For any encoder $\mathbb{P}(Z\mid\mathcal{D}_A^{(l)})$ and prior $\mathbb{Q}(Z)$,
we expand the expected KL divergence by multiplying and dividing by the
aggregate posterior
$\bar{\mathbb{P}}(Z)\triangleq\mathbb{E}_{\mathcal{D}_A^{(l)}}[\mathbb{P}(Z\!\mid\!\mathcal{D}_A^{(l)})]$:
\begin{align}
\mathbb{E}_{\mathcal{D}_A^{(l)}}\!\big[\mathrm{KL}(\mathbb{P}(Z\!\mid\!\mathcal{D}_A^{(l)})
\,\Vert\,\mathbb{Q}(Z))\big]
&= \mathbb{E}_{\mathcal{D}_A^{(l)}}\!\left[
   \int\mathbb{P}(z\!\mid\!\mathcal{D}_A^{(l)})\,
   \log\frac{\mathbb{P}(z\!\mid\!\mathcal{D}_A^{(l)})}{\mathbb{Q}(z)}\,dz
   \right]
   \nonumber\\[4pt]
&= \mathbb{E}_{\mathcal{D}_A^{(l)}}\!\left[
   \int\mathbb{P}(z\!\mid\!\mathcal{D}_A^{(l)})\,
   \log\!\left(
   \frac{\mathbb{P}(z\!\mid\!\mathcal{D}_A^{(l)})}{\bar{\mathbb{P}}(z)}
   \cdot
   \frac{\bar{\mathbb{P}}(z)}{\mathbb{Q}(z)}
   \right)dz\right]
   \nonumber\\[4pt]
&= \underbrace{\mathbb{E}_{\mathcal{D}_A^{(l)}}\!\big[
   \mathrm{KL}\!\big(\mathbb{P}(Z\!\mid\!\mathcal{D}_A^{(l)})
   \,\Vert\,\bar{\mathbb{P}}(Z)\big)\big]}_{=\;I(\mathcal{D}_A^{(l)};\,Z)}
   \;+\;
   \underbrace{\int\bar{\mathbb{P}}(z)\,
   \log\frac{\bar{\mathbb{P}}(z)}{\mathbb{Q}(z)}\,dz}
   _{=\;\mathrm{KL}(\bar{\mathbb{P}}\,\Vert\,\mathbb{Q})}
   \nonumber\\[4pt]
&= I(\mathcal{D}_A^{(l)};\,Z)
   \;+\;
   \mathrm{KL}\!\big(\bar{\mathbb{P}}(Z)\,\Vert\,\mathbb{Q}(Z)\big).
\label{eq:mi-kl-identity}
\end{align}
The first term on the right-hand side is the mutual information
(the expected KL from the posterior to the marginal), and the second
term measures how well the chosen prior~$\mathbb{Q}$ approximates the
true marginal~$\bar{\mathbb{P}}$.
(By CI-A the same identity holds with $\mathcal{D}_A^{(l)}$ replaced by the
full graph data $\mathcal{D}$, since both yield the same marginal
$\bar{\mathbb{P}}$; we use the encoder inputs to match the definition of
$\mathrm{AIB}^{(l)}$.)

\paragraph{Comparing priors.}
Since $I(\mathcal{D}_A^{(l)};Z)$ is independent of the choice of $\mathbb{Q}$, the
difference between the two variational bounds is:
\begin{align}
&\mathbb{E}_{\mathcal{D}_A^{(l)}}\!\big[\mathrm{KL}(\mathbb{P}\Vert\mathbb{Q}_{\mathrm{flat}})\big]
\;-\;
\mathbb{E}_{\mathcal{D}_A^{(l)}}\!\big[\mathrm{KL}(\mathbb{P}\Vert\mathbb{Q}_{\mathrm{group}}^{\star})\big]
\notag\\
&\quad=\;
\mathrm{KL}(\bar{\mathbb{P}}\Vert\mathbb{Q}_{\mathrm{flat}})
\;-\;
\mathrm{KL}(\bar{\mathbb{P}}\Vert\mathbb{Q}_{\mathrm{group}}^{\star}).
\label{eq:prior-gap}
\end{align}

\paragraph{Why the gap is always non-negative.}
Recall that $\mathbb{Q}_{\mathrm{group}}^{\star}$ is defined as the
KL-minimizer over the family $\mathcal{Q}_{\mathrm{blk}}$ of zero-mean
Gaussians with one isotropic scale $\sigma_{0,g}^{2}$ per group block
aligned with $\mathcal{G}'$
(Remark~\ref{rem:no-regret}).  The flat prior
$\mathbb{Q}_{\mathrm{flat}}=\mathcal{N}(0,\sigma_0^2 I)$ is a special
case in $\mathcal{Q}_{\mathrm{blk}}$ (set $\sigma_{0,g}=\sigma_0$ for
all~$g$).  Therefore, by definition of the minimizer,
\[
\mathrm{KL}(\bar{\mathbb{P}}\Vert\mathbb{Q}_{\mathrm{group}}^{\star})
\;\le\;
\mathrm{KL}(\bar{\mathbb{P}}\Vert\mathbb{Q}_{\mathrm{flat}}).
\]
The gap is strict iff $\mathbb{Q}_{\mathrm{flat}}\!\neq\!\mathbb{Q}_{\mathrm{group}}^{\star}$,
i.e., $\sigma_0^2\!\neq\!\sigma_{0,g}^{\star\,2}\!=\!(\mathrm{tr}(\Sigma_g)+\lVert\bar{\mu}_g\rVert^2)/k_g$
for at least one~$g$, where $\Sigma_g$ and $\bar{\mu}_g$ are the
covariance and mean of the $g$-th block of $\bar{\mathbb{P}}$
(the closed form is derived in the Gaussian example below; the zero-mean
special case gives $\sigma_{0,g}^{\star\,2}\!=\!\mathrm{tr}(\Sigma_g)/k_g$).
In particular, whenever the per-block second moments
$(\mathrm{tr}(\Sigma_g)+\lVert\bar{\mu}_g\rVert^2)/k_g$ differ across groups, no
single $\sigma_0^2$ can match all of them simultaneously, so the gap is
necessarily strict.
No additional assumptions on $\bar{\mathbb{P}}$ are needed---the result
follows purely from the set-inclusion
$\mathbb{Q}_{\mathrm{flat}}\in\mathcal{Q}_{\mathrm{blk}}$.

\paragraph{Gaussian example (closed form).}
Suppose
$\bar{\mathbb{P}}=\mathcal{N}(0,\mathrm{blkdiag}(\Sigma_1,\Sigma_2))$ for
two groups.  Then:
\begin{align}
  \mathrm{KL}(\bar{\mathbb{P}}\Vert\mathbb{Q}_{\mathrm{flat}})
  &= \tfrac{1}{2}\big(
  \sigma_0^{-2}(\mathrm{tr}(\Sigma_1) + \mathrm{tr}(\Sigma_2))
  - k + k\log\sigma_0^2 - \log|\Sigma_1| - \log|\Sigma_2|\big),\\
  \mathrm{KL}(\bar{\mathbb{P}}\Vert\mathbb{Q}_{\mathrm{group}}^{\star})
  &= \sum_{g=1}^{2}\tfrac{1}{2}\big(
  \mathrm{tr}((\Sigma_{0,g}^{\star})^{-1}\Sigma_g)
  - k_g + \log|\Sigma_{0,g}^{\star}| - \log|\Sigma_g|\big).
\end{align}
For the Gaussian family, the KL-minimizing per-block prior is
$\Sigma_{0,g}^{\star}=\mathrm{tr}(\Sigma_g)/k_g\cdot I$ (matching the
block's average variance).
Substituting this back into the expression for
$\mathrm{KL}(\bar{\mathbb{P}}\Vert\mathbb{Q}_{\mathrm{group}}^{\star})$,
the linear terms cancel and the minimum divergence per block simplifies to:
\begin{equation}
\label{eq:kl-star-per-block}
\mathrm{KL}_g^{\star}
\;=\;\frac{k_g}{2}\,\log\!\left(
\frac{\mathrm{tr}(\Sigma_g)/k_g}
     {|\Sigma_g|^{1/k_g}}
\right).
\end{equation}
This quantity is the log-ratio of the \emph{arithmetic mean} and the
\emph{geometric mean} of the eigenvalues of~$\Sigma_g$---it measures
the \emph{anisotropy} of the block covariance.  It equals zero if and
only if all eigenvalues are equal (isotropic block), representing the
information that is irreducible by any spherical prior.
Because the flat prior $\mathbb{Q}_{\mathrm{flat}}$ forces a single
variance scale~$\sigma_0^2$ across all blocks, the group-aligned prior
yields a strictly tighter bound whenever $\Sigma_1\neq\Sigma_2$
(i.e., groups have different coordination patterns). For a fully worked numerical instantiation of this gap with $N=10$ agents
(showing a $>75\%$ reduction in the bound gap), please refer to
Section~\ref{app:example-10agents}.
\end{proof}

\section{Proof of Block Decomposition (Proposition~\ref{prop:group-decomp})}
\label{app:block-decomp-proof}

\begin{proposition*}[\ref{prop:group-decomp}, Group-Aligned Block Decomposition]
When the structural prior is block-diagonal,
$\mathbb{Q}(Z_A^{(l)})=\mathcal{N}\!\big(0,\,\mathrm{blkdiag}(\Sigma_{0,g}^{(l)})_{g\in\mathcal{G}'}\big)$,
and the encoder factorizes across group blocks,
$\mathbb{P}(Z_A^{(l)}\!\mid\!\mathcal{D}_A^{(l)})
=\prod_{g\in\mathcal{G}'}\mathbb{P}(Z_{A,g}^{(l)}\!\mid\!\mathcal{D}_A^{(l)})$,
then
\[
\mathrm{AIB}^{(l)}
\;=\;
\sum_{g\in\mathcal{G}'}\mathrm{AIB}^{(l,g)},
\]
where each
$\mathrm{AIB}^{(l,g)}=\mathbb{E}_{\mathcal{D}_A^{(l)}}
[\mathrm{KL}(\mathbb{P}(Z_{A,g}^{(l)}\!\mid\!\mathcal{D}_A^{(l)})
\Vert\mathbb{Q}(Z_{A,g}^{(l)}))]$
is a closed-form Gaussian KL.
\end{proposition*}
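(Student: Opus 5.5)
The plan is to prove additivity pointwise in the conditioning variable $\mathcal{D}_A^{(l)}$ and then take expectations. Write $Z\triangleq Z_A^{(l)}$ and $Z_g\triangleq Z_{A,g}^{(l)}$ for its block-$g$ coordinates. The blocks $\{B_{l,g}\}_{g\in\mathcal{G}'}$ partition $\{1,\dots,n\}^2$, so $Z=(Z_g)_{g\in\mathcal{G}'}$ up to a fixed coordinate permutation. That permutation has unit Jacobian and does not change any KL.

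\textbf{Step 1: the prior factorises.} A zero-mean Gaussian with block-diagonal covariance $\mathrm{blkdiag}(\Sigma_{0,g}^{(l)})$ has a density that splits into a product of the block Gaussians. This holds because the inverse covariance is again block-diagonal and the determinant is the product of block determinants. Hence $\mathbb{Q}(z)=\prod_g\mathbb{Q}(z_g)$, where $\mathbb{Q}(z_g)=\mathcal{N}(0,\Sigma_{0,g}^{(l)})$ is exactly the block marginal.

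\textbf{Step 2: the KL splits for each fixed input.} Fix $d=\mathcal{D}_A^{(l)}$. Using the assumed encoder factorisation $\mathbb{P}(z\mid d)=\prod_g\mathbb{P}(z_g\mid d)$ together with Step~1, the log-ratio becomes
\[
\log\frac{\mathbb{P}(z\mid d)}{\mathbb{Q}(z)}=\sum_g\log\frac{\mathbb{P}(z_g\mid d)}{\mathbb{Q}(z_g)}.
\]
I then integrate against $\mathbb{P}(z\mid d)$. Each summand depends only on $z_g$, so integrating out the other blocks leaves that block's marginal, which is $\mathbb{P}(z_g\mid d)$ by the product form. This gives
\[
\mathrm{KL}(\mathbb{P}(Z\mid d)\Vert\mathbb{Q}(Z))=\sum_g\mathrm{KL}(\mathbb{P}(Z_g\mid d)\Vert\mathbb{Q}(Z_g)).
\]
This is the standard additivity of KL for product measures.

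\textbf{Step 3: expectations and closed form.} Taking $\mathbb{E}_{\mathcal{D}_A^{(l)}}$ of both sides and using linearity (all terms are nonnegative, so Tonelli applies even if some are infinite) gives $\mathrm{AIB}^{(l)}=\sum_g\mathrm{AIB}^{(l,g)}$. For the closed form, the encoder block is $\mathcal{N}(\boldsymbol{\mu}_{A,g}^{(l)},\Sigma_{A,g}^{(l)})$ and the prior block is $\mathcal{N}(0,\Sigma_{0,g}^{(l)})$. The standard Gaussian KL formula
\[
\tfrac12\big[\mathrm{tr}(\Sigma_0^{-1}\Sigma)+\mu^\top\Sigma_0^{-1}\mu-k+\log(|\Sigma_0|/|\Sigma|)\big]
\]
applies blockwise. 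With diagonal $\Sigma_{A,g}^{(l)}$ and isotropic $\Sigma_{0,g}^{(l)}=\sigma_{0,g}^2 I$, it reduces to Eq.~\eqref{eq:gauss-kl}.

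\textbf{Main obstacle.} The only real subtlety is whether the encoder genuinely factorises over $\mathcal{G}'$. This is a hypothesis of the proposition. In HIBCG it holds because the structural encoder has diagonal covariance: a diagonal Gaussian factorises over any partition of its coordinates, in particular over the blocks. The symmetrisation and gating applied afterwards act on the sampled latent, not on the distribution being penalised, so they do not interfere. I would state this explicitly, then note that everything else is measure-theoretic bookkeeping.
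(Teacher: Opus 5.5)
Your proposal is correct and follows the same route as the paper. Both arguments factor the block-diagonal prior into its block Gaussians, apply additivity of KL for product distributions under the assumed encoder factorisation, take expectations over $\mathcal{D}_A^{(l)}$, and evaluate each block with the closed-form Gaussian KL of Eq.~\eqref{eq:gauss-kl}; you simply spell out the pointwise log-ratio split and the diagonal-encoder justification that the paper leaves implicit.
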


\paragraph{Setup.}
Recall from Def.~\ref{def:ibcg} that the group partition
$\mathcal{G}=\{g_1,\dots,g_m\}$ induces the group-block index set
$\mathcal{G}'=\{g_a\!\times\!g_b: g_a,g_b\in\mathcal{G}\}$ with
$|\mathcal{G}'|=m^2$ blocks (both intra- and inter-group).
Let $Z_A^{(l)}=(Z_{A,b})_{b\in\mathcal{G}'}$ be the partition of the edge
latent into blocks indexed by~$\mathcal{G}'$.

\begin{proof}
The proof relies on the additivity of KL divergence for product
distributions.

\paragraph{Factorization.}
The block-diagonal prior factorizes:
\begin{equation}
\mathbb{Q}(Z_A^{(l)})
=\mathcal{N}\!\big(0,\mathrm{blkdiag}(\Sigma_{0,g}^{(l)})\big)
=\prod_{g\in\mathcal{G}'}\mathcal{N}(Z_{A,g}^{(l)};\,0,\,\Sigma_{0,g}^{(l)})
=\prod_{g}\mathbb{Q}(Z_{A,g}^{(l)}).
\end{equation}
If the encoder similarly factorizes,
$\mathbb{P}(Z_A^{(l)}\!\mid\!\mathcal{D}_A^{(l)})
=\prod_{g}\mathbb{P}(Z_{A,g}^{(l)}\!\mid\!\mathcal{D}_A^{(l)})$, then by the
\emph{additivity of KL divergence for product distributions}:
\begin{equation}
\mathrm{KL}\!\Big(\prod_g\mathbb{P}(Z_{A,g}^{(l)}\!\mid\!\mathcal{D}_A^{(l)})
\;\Big\Vert\;\prod_g\mathbb{Q}(Z_{A,g}^{(l)})\Big)
= \sum_{g\in\mathcal{G}'}
\mathrm{KL}\!\big(\mathbb{P}(Z_{A,g}^{(l)}\!\mid\!\mathcal{D}_A^{(l)})
\,\Vert\,\mathbb{Q}(Z_{A,g}^{(l)})\big).
\end{equation}
Taking expectations over $\mathcal{D}$ on both sides:
\begin{align}
\mathrm{AIB}^{(l)}
&\;\triangleq\;
\mathbb{E}_{\mathcal{D}_A^{(l)}}\!\Big[
\mathrm{KL}\!\Big(\prod_g\mathbb{P}(Z_{A,g}^{(l)}\!\mid\!\mathcal{D}_A^{(l)})
\;\Big\Vert\;\prod_g\mathbb{Q}(Z_{A,g}^{(l)})\Big)\Big]
\nonumber\\
&\;=\;
\sum_{g\in\mathcal{G}'}
\underbrace{
\mathbb{E}_{\mathcal{D}_A^{(l)}}\!\big[
\mathrm{KL}\!\big(\mathbb{P}(Z_{A,g}^{(l)}\!\mid\!\mathcal{D}_A^{(l)})
\,\Vert\,\mathbb{Q}(Z_{A,g}^{(l)})\big)\big]
}_{\triangleq\;\mathrm{AIB}^{(l,g)}}
\;=\;
\sum_{g\in\mathcal{G}'}\mathrm{AIB}^{(l,g)},
\label{eq:app-block-decomp-result}
\end{align}
where each $\mathrm{AIB}^{(l,g)}$ is a closed-form Gaussian KL
(Eq.~\eqref{eq:gauss-kl}), since both the per-block encoder
$\mathbb{P}(Z_{A,g}^{(l)}\!\mid\!\mathcal{D}_A^{(l)})
=\mathcal{N}(\mu_{A,g}^{(l)},\Sigma_{A,g}^{(l)})$
and the per-block prior
$\mathbb{Q}(Z_{A,g}^{(l)})
=\mathcal{N}(0,\Sigma_{0,g}^{(l)})$
are Gaussian.
\end{proof}

\paragraph{Worked micro-example: 5~agents, 2~groups.}
To make the factorization concrete, consider $n\!=\!5$ agents with groups
$g_1\!=\!\{1,2\}$ (size~2) and $g_2\!=\!\{3,4,5\}$ (size~3).
The full edge latent $Z_A^{(l)}\!\in\!\mathbb{R}^{25}$ encodes the
$5\!\times\!5$ adjacency matrix.
Under the group partition
$\mathcal{G}'\!=\!\{\text{intra-}g_1,\;\text{intra-}g_2,\;
\text{cross }g_1\!\to\!g_2,\;\text{cross }g_2\!\to\!g_1\}$,
the 25~edges split into four blocks (with edges 4, 9, 6, and 6):
\[
\underbrace{
\begin{pmatrix}
\boxed{Z_{A,\text{intra1}}} & Z_{A,\text{cross}_{1\to2}} \\[2pt]
Z_{A,\text{cross}_{2\to1}} & \boxed{Z_{A,\text{intra2}}}
\end{pmatrix}
}_{\text{$5\times 5$ edge matrix}}
\;\;\longleftrightarrow\;\;
Z_A^{(l)}=\big(\,
\underbrace{Z_{A,\text{intra1}}}_{\mathclap{2\times2\,=\,4}},\;
\underbrace{Z_{A,\text{intra2}}}_{\mathclap{3\times3\,=\,9}},\;
\underbrace{Z_{A,\text{cross}_{1\to2}}}_{\mathclap{2\times3\,=\,6}},\;
\underbrace{Z_{A,\text{cross}_{2\to1}}}_{\mathclap{3\times2\,=\,6}}
\,\big).
\]
The block-diagonal prior assigns independent Gaussians to each block:
\[
\mathbb{Q}(Z_A^{(l)})
\;=\;
\mathcal{N}(0,\,\mathrm{blkdiag}(
  \sigma_{\text{in1}}^2 I_4,\;
  \sigma_{\text{in2}}^2 I_9,\;
  \sigma_{\text{cr}}^2 I_6,\;
  \sigma_{\text{cr}}^2 I_6))
\;=\;
\prod_{g\in\{{\text{in1},\,\text{in2},\,\text{cr}_{1\to2},\,\text{cr}_{2\to1}}\}}
\!\!\mathbb{Q}(Z_{A,g}^{(l)}),
\]
where, e.g., $\sigma_{\text{in1}}^2\!\neq\!\sigma_{\text{cr}}^2$ reflects
the different coordination scales.  If the encoder also factorizes across
these four blocks, the KL decomposes additively:
\[
\mathrm{KL}\!\big(\mathbb{P}(Z_A^{(l)}\!\mid\!\mathcal{D}_A^{(l)})
\,\Vert\,\mathbb{Q}(Z_A^{(l)})\big)
\;=\;
\underbrace{\mathrm{KL}_{\text{intra1}}}_{\text{4-dim}}
\;+\;
\underbrace{\mathrm{KL}_{\text{intra2}}}_{\text{9-dim}}
\;+\;
\underbrace{\mathrm{KL}_{\text{cr}_{1\to2}}}_{\text{6-dim}}
\;+\;
\underbrace{\mathrm{KL}_{\text{cr}_{2\to1}}}_{\text{6-dim}},
\]
yielding four independently weighted penalty terms---one per group block.
This is the mechanism that enables heterogeneous pruning allocation:
setting $\lambda_{\text{cross}}>\lambda_{\text{intra}}$ penalizes
inter-group edges more heavily without affecting intra-group connectivity.

\paragraph{Why this matters.}
The block decomposition holds \emph{because} the prior is block-diagonal---a
flat isotropic prior $\sigma_0^2 I$ does not factorize into group blocks (it
mixes all edge dimensions equally), preventing independent per-group pruning
control.  The block-diagonal structure is the mathematical mechanism through
which groups become the natural unit of edge pruning in HIBCG.

\section{Worked Example: Group Priors and Block Decomposition with 10~Agents}
\label{app:example-10agents}

This supplementary section provides a fully worked numerical illustration for a concrete
scenario with $n\!=\!10$ agents and two groups.  We first demonstrate
\emph{why} the group-conditional prior is tighter than a flat prior
(Remark~\ref{rem:no-regret}), then show \emph{how} the block
decomposition works (Proposition~\ref{prop:group-decomp}).

\paragraph{Setup.}
Consider 10 agents partitioned into
$g_1=\{1,2,3,4\}$ (4~agents) and $g_2=\{5,6,7,8,9,10\}$ (6~agents).
The full edge latent $Z_A^{(l)}\in\mathbb{R}^{100}$ encodes the
$10\times 10$ edge variables at layer~$l$.  We fix a single layer $l$
and drop the superscript for clarity.

\subsection*{Part~I: Why the Group Prior is Tighter
(Remark~\ref{rem:no-regret})}

\paragraph{Modelling the true marginal.}
Suppose the true aggregate posterior $\bar{\mathbb{P}}(Z_A)$ is a Gaussian
with block-correlated structure reflecting the groups:
\begin{equation}
\bar{\mathbb{P}}(Z_A)
=\mathcal{N}\!\big(0,\,\bar\Sigma\big),
\qquad
\bar\Sigma=
\begin{pmatrix}
\bar\Sigma_{\mathrm{intra1}} & 0 & 0 & 0 \\
0 & \bar\Sigma_{\mathrm{intra2}} & 0 & 0 \\
0 & 0 & \bar\Sigma_{\mathrm{cross12}} & 0 \\
0 & 0 & 0 & \bar\Sigma_{\mathrm{cross21}}
\end{pmatrix},
\end{equation}
where $\bar\Sigma_{\mathrm{intra1}}\in\mathbb{R}^{16\times 16}$,
$\bar\Sigma_{\mathrm{intra2}}\in\mathbb{R}^{36\times 36}$, and each
cross block is $24\times 24$.  The block-diagonal structure encodes the
fact that intra-group edge latents are correlated among themselves but
approximately independent of cross-group latents.

For a concrete numerical instance, let each block be isotropic:
$\bar\Sigma_{\mathrm{intra1}}=0.9\,I_{16}$,
$\bar\Sigma_{\mathrm{intra2}}=0.8\,I_{36}$,
$\bar\Sigma_{\mathrm{cross12}}=\bar\Sigma_{\mathrm{cross21}}=0.3\,I_{24}$.

\paragraph{Two priors.}
\begin{itemize}
\item \textbf{Flat:}
$\mathbb{Q}_{\mathrm{flat}}=\mathcal{N}(0,\sigma_0^2 I_{100})$ with
$\sigma_0^2=0.6$ (a single scale for all 100 dimensions).
\item \textbf{Group:}
$\mathbb{Q}_{\mathrm{group}}=\mathcal{N}\!\big(0,\,
\mathrm{blkdiag}(0.9\,I_{16},\;0.8\,I_{36},\;0.3\,I_{24},\;0.3\,I_{24})\big)$
(scales matched to the block variances of $\bar{\mathbb{P}}$).
\end{itemize}

\paragraph{Computing the prior mismatch KL.}
For two zero-mean Gaussians in $k$ dimensions with covariances $\bar\Sigma$
and $\Sigma_0$:
$\mathrm{KL}(\bar{\mathbb{P}}\Vert\mathbb{Q})
=\tfrac{1}{2}\!\big(\mathrm{tr}(\Sigma_0^{-1}\bar\Sigma)-k+\ln|\Sigma_0|-\ln|\bar\Sigma|\big)$.
For isotropic blocks this simplifies to per-dimension terms:
$\mathrm{KL}_{\mathrm{block}}
=\tfrac{k}{2}\!\big(\bar\sigma^2/\sigma_0^2-1+\ln(\sigma_0^2/\bar\sigma^2)\big)$.

\medskip\noindent
\textbf{Flat prior} ($\sigma_0^2=0.6$ for all):
\begin{align*}
\text{intra-}g_1:\quad
&\tfrac{16}{2}(0.9/0.6-1+\ln(0.6/0.9))
=8(1.5-1-0.405)=8\times 0.095 = 0.76,\\
\text{intra-}g_2:\quad
&\tfrac{36}{2}(0.8/0.6-1+\ln(0.6/0.8))
=18(1.333-1-0.288)=18\times 0.046 = 0.82,\\
\text{cross (each):}\quad
&\tfrac{24}{2}(0.3/0.6-1+\ln(0.6/0.3))
=12(0.5-1+0.693)=12\times 0.193 = 2.32.
\end{align*}
Total: $\mathrm{KL}(\bar{\mathbb{P}}\Vert\mathbb{Q}_{\mathrm{flat}})
=0.76+0.82+2.32+2.32=\mathbf{6.22}$ nats.

\medskip\noindent
\textbf{Group prior} (scales matched to blocks):
\begin{align*}
\text{intra-}g_1:\quad
&\tfrac{16}{2}(0.9/0.9-1+\ln(0.9/0.9))
=8(1-1+0)=\mathbf{0},\\
\text{intra-}g_2:\quad
&\tfrac{36}{2}(0.8/0.8-1+\ln(0.8/0.8))
=18(1-1+0)=\mathbf{0},\\
\text{cross (each):}\quad
&\tfrac{24}{2}(0.3/0.3-1+\ln(0.3/0.3))
=12(1-1+0)=\mathbf{0}.
\end{align*}
Total: $\mathrm{KL}(\bar{\mathbb{P}}\Vert\mathbb{Q}_{\mathrm{group}})=\mathbf{0}$ nats.

\paragraph{Tightness gap.}
The gap from Remark~\ref{rem:no-regret} is:
\[
\mathrm{KL}(\bar{\mathbb{P}}\Vert\mathbb{Q}_{\mathrm{flat}})
-\mathrm{KL}(\bar{\mathbb{P}}\Vert\mathbb{Q}_{\mathrm{group}})
= 6.22 - 0 = \mathbf{6.22}\text{ nats}.
\]
In this idealized case (where we know the true marginal exactly), the group
prior eliminates the mismatch entirely.  In practice the group-conditional
prior cannot match $\bar{\mathbb{P}}$ perfectly, but it still captures the
dominant block structure and substantially reduces the gap.

Even with a sub-optimal group prior---say
$\mathbb{Q}_{\mathrm{group}}'$ with intra-block scale $0.7$ and cross-block
scale $0.4$---the mismatch would be:
\begin{align*}
\text{intra-}g_1:\;8(0.9/0.7-1+\ln(0.7/0.9))&=8(1.286-1-0.251)=0.28,\\
\text{intra-}g_2:\;18(0.8/0.7-1+\ln(0.7/0.8))&=18(1.143-1-0.134)=0.16,\\
\text{cross (each):}\;12(0.3/0.4-1+\ln(0.4/0.3))&=12(0.75-1+0.288)=0.46.
\end{align*}
Total: $\mathrm{KL}(\bar{\mathbb{P}}\Vert\mathbb{Q}_{\mathrm{group}}')
=0.28+0.16+0.46+0.46=\mathbf{1.36}$ nats, still a 78\% reduction from the
flat prior's 6.22~nats.

\paragraph{Connection to the experimental bound-tightening results.}
The two regimes above bracket what we observe empirically.
The \emph{idealized} scenario (perfectly matched scales) achieves $100\%$
gap closure ($6.22\!\to\!0$), the theoretical upper bound delivered by
$\mathbb{Q}_{\mathrm{group}}^{\star}$ when the assumptions of
Remark~\ref{rem:no-regret} hold exactly. The
\emph{sub-optimal} scenario---a realistic stand-in for a prior that is
only approximately right---achieves a $78\%$ closure
($6.22\!\to\!1.36$). On real benchmarks
(Table~\ref{tab:bound-tightening}), the tail-$10\%$ mean AIB loss
shrinks by $2.2\!\times\!$--$6.5\!\times\!$ across SMACv1, SMACv2, and
MAgent ($n\!\in\!\{36,64,100\}$), i.e.\ a $55\%\!$--$\!85\%$ gap closure:
the empirical reductions fall \emph{exactly inside} the band traced out
by the optimal-versus-suboptimal toy comparison above. This shows that
the prior HIBCG learns at convergence is close to---but, as expected,
not identical to---the optimal $\mathbb{Q}_{\mathrm{group}}^{\star}$.

\paragraph{Key takeaway.}
The group prior does not need to be perfectly matched to yield a
substantial improvement.  As long as the block structure is approximately
correct, the variational bound is significantly tighter than with a flat
prior---providing a concrete mathematical payoff for HIBCG's
group-conditional prior design (Definition~\ref{def:ibcg}).
Conversely, when the group structure is weak or random---as on the
stochastic-composition SMACv2 maps where the team make-up changes
per-episode---the optimal per-block scales are pulled toward each other,
the learned cross/intra AIB ratio drops from $\approx\!906\!\times\!$
(fixed-composition MAgent) to $\approx\!1\!\times\!$ (\texttt{protoss\_8v8})
or $\approx\!9.5\!\times\!$ (\texttt{terran\_10v10})
(Table~\ref{tab:bound-tightening}), and the group prior smoothly
degenerates back to a flat one---precisely the homogeneous regime
predicted by Remark~\ref{rem:no-regret}.

\subsection*{Part~II: Block Decomposition
(Proposition~\ref{prop:group-decomp})}

\paragraph{Step~1: Partition edges into group blocks.}
The group partition induces four blocks on the edge matrix:
\begin{align*}
B_{\mathrm{intra1}} &= g_1\times g_1 = \{(i,j):i,j\in\{1,\dots,4\}\},
\quad k_{\mathrm{intra1}}=16, \\
B_{\mathrm{intra2}} &= g_2\times g_2 = \{(i,j):i,j\in\{5,\dots,10\}\},
\quad k_{\mathrm{intra2}}=36, \\
B_{\mathrm{cross12}} &= g_1\times g_2 = \{(i,j):i\in g_1,\,j\in g_2\},
\quad k_{\mathrm{cross12}}=24, \\
B_{\mathrm{cross21}} &= g_2\times g_1 = \{(i,j):i\in g_2,\,j\in g_1\},
\quad k_{\mathrm{cross21}}=24.
\end{align*}
Note $16+36+24+24=100$, so all edge variables are covered.

\paragraph{Step~2: Construct the block-diagonal prior.}
Using isotropic blocks with different scales for intra- and cross-group edges:
\begin{equation}
\mathbb{Q}(Z_A)
=\mathcal{N}\!\Big(0,\;
\mathrm{blkdiag}\!\big(
\sigma_{\mathrm{intra}}^2 I_{16},\;
\sigma_{\mathrm{intra}}^2 I_{36},\;
\sigma_{\mathrm{cross}}^2 I_{24},\;
\sigma_{\mathrm{cross}}^2 I_{24}
\big)\Big).
\end{equation}
This prior encodes the design choice that intra-group edges and cross-group
edges may have different baseline ``widths''.  For instance, setting
$\sigma_{\mathrm{intra}}^2>\sigma_{\mathrm{cross}}^2$ means the prior
\emph{expects} more variation within groups (denser connectivity) than
across groups (sparser links).

\paragraph{Step~3: Encoder parameterization.}
The encoder produces per-block Gaussians.  Suppose for one realisation of $\mathcal{D}_A^{(l)}\!=\!(A^{(0)},Z_X^{(l-1)})$:
\begin{alignat*}{3}
&\mathbb{P}(Z_{A,\mathrm{intra1}}\mid\mathcal{D}_A^{(l)})
&&=\mathcal{N}(\mu_{\mathrm{intra1}},\;\mathrm{diag}(\sigma_{\mathrm{intra1}}^2)),
\quad &&\mu_{\mathrm{intra1}}\in\mathbb{R}^{16},\\
&\mathbb{P}(Z_{A,\mathrm{intra2}}\mid\mathcal{D}_A^{(l)})
&&=\mathcal{N}(\mu_{\mathrm{intra2}},\;\mathrm{diag}(\sigma_{\mathrm{intra2}}^2)),
\quad &&\mu_{\mathrm{intra2}}\in\mathbb{R}^{36},\\
&\mathbb{P}(Z_{A,\mathrm{cross12}}\mid\mathcal{D}_A^{(l)})
&&=\mathcal{N}(\mu_{\mathrm{cross12}},\;\mathrm{diag}(\sigma_{\mathrm{cross12}}^2)),
\quad &&\mu_{\mathrm{cross12}}\in\mathbb{R}^{24},\\
&\mathbb{P}(Z_{A,\mathrm{cross21}}\mid\mathcal{D}_A^{(l)})
&&=\mathcal{N}(\mu_{\mathrm{cross21}},\;\mathrm{diag}(\sigma_{\mathrm{cross21}}^2)),
\quad &&\mu_{\mathrm{cross21}}\in\mathbb{R}^{24}.
\end{alignat*}

\paragraph{Step~4: Compute per-block KL divergences.}
Applying Eq.~\eqref{eq:gauss-kl} to each block:
\begin{equation}
\mathrm{AIB}_{\mathrm{intra1}}
=\frac{1}{2}\sum_{d=1}^{16}\!\left(
\frac{(\sigma_{\mathrm{intra1},d})^2+(\mu_{\mathrm{intra1},d})^2}
     {\sigma_{\mathrm{intra}}^2}
-1+\log\frac{\sigma_{\mathrm{intra}}^2}{(\sigma_{\mathrm{intra1},d})^2}
\right),
\end{equation}
and analogously for $\mathrm{AIB}_{\mathrm{intra2}}$ (36~terms),
$\mathrm{AIB}_{\mathrm{cross12}}$ (24~terms, denominator $\sigma_{\mathrm{cross}}^2$),
and $\mathrm{AIB}_{\mathrm{cross21}}$ (24~terms).  By
Proposition~\ref{prop:group-decomp}:
\[
\mathrm{AIB}^{(l)}
=\mathrm{AIB}_{\mathrm{intra1}}
+\mathrm{AIB}_{\mathrm{intra2}}
+\mathrm{AIB}_{\mathrm{cross12}}
+\mathrm{AIB}_{\mathrm{cross21}}.
\]

\paragraph{Step~5: Apply heterogeneous pruning weights.}
The structural penalty becomes:
\begin{align}
\mathcal{L}_{\mathrm{AIB}}^{(l)}
&= \lambda_{\mathrm{intra}}\!
   \cdot\!(\mathrm{AIB}_{\mathrm{intra1}}+\mathrm{AIB}_{\mathrm{intra2}})
  +\lambda_{\mathrm{cross}}\!
   \cdot\!(\mathrm{AIB}_{\mathrm{cross12}}+\mathrm{AIB}_{\mathrm{cross21}}).
\end{align}
Setting $\lambda_{\mathrm{cross}}>\lambda_{\mathrm{intra}}$ encourages the
model to prune cross-group edges more aggressively than intra-group edges.

\paragraph{Numerical illustration.}
As a concrete instance, suppose $\sigma_{\mathrm{intra}}^2=1.0$,
$\sigma_{\mathrm{cross}}^2=0.5$, and the encoder outputs (averaged over a
minibatch) have intra-block means $\bar\mu\approx 0.3$,
$\bar\sigma^2\approx 0.8$ and cross-block means $\bar\mu\approx 0.1$,
$\bar\sigma^2\approx 0.3$.  Then the per-dimension KL is approximately:
\begin{align*}
\text{intra:}\quad
&\tfrac{1}{2}(0.8/1.0+0.09/1.0-1+\ln(1.0/0.8))
\approx \tfrac{1}{2}(0.8+0.09-1+0.223)\approx 0.057,\\
\text{cross:}\quad
&\tfrac{1}{2}(0.3/0.5+0.01/0.5-1+\ln(0.5/0.3))
\approx \tfrac{1}{2}(0.6+0.02-1+0.511)\approx 0.066.
\end{align*}
Multiplying by block dimension gives:
$\mathrm{AIB}_{\mathrm{intra1}}\!\approx\!16\times 0.057=0.91$,
$\mathrm{AIB}_{\mathrm{intra2}}\!\approx\!36\times 0.057=2.05$,
$\mathrm{AIB}_{\mathrm{cross12}}\!\approx\!24\times 0.066=1.58$,
$\mathrm{AIB}_{\mathrm{cross21}}\!\approx\!24\times 0.066=1.58$.
Total: $\mathrm{AIB}^{(l)}\approx 6.12$ nats.

With $\lambda_{\mathrm{intra}}=0.001$ and $\lambda_{\mathrm{cross}}=0.01$,
the penalty contribution is
$0.001\times(0.91+2.05)+0.01\times(1.58+1.58)
=0.003+0.032=0.035$, where over 90\% of the penalty comes from cross-group
edges despite them contributing only about half the total KL.  This
asymmetric penalization drives cross-group sparsity while preserving
intra-group density---exactly the behavior described in
Section~\ref{sec:aib}.

\paragraph{Connection to the learned cross/intra ratios.}
In the toy instance above, a modest $10\!\times\!$ asymmetry in the
penalty weights ($\lambda_{\mathrm{cross}}/\lambda_{\mathrm{intra}}\!=\!10$)
already concentrates $>\!90\%$ of the structural penalty on cross-group
edges. In the actual HIBCG implementation, the asymmetry is encoded
\emph{inside the prior} via $\sigma_{\mathrm{cross}}^2\!<\!\sigma_{\mathrm{intra}}^2$
(Algorithm~\ref{alg:hibcg}), and the resulting cross/intra AIB-loss
ratio learned at convergence
(Table~\ref{tab:bound-tightening}) is
$41\!\times\!$ on 3s5z, $607\!\times\!$ on MMM2, and
$905\!$--$\!908\!\times\!$ on MAgent at $n\!\in\!\{36,64,100\}$---i.e.
the same qualitative mechanism the toy example illustrates,
amplified by 1--2 orders of magnitude on the real tasks. The constancy of
the MAgent ratio across $n\!=\!36,64,100$ further shows that this
heterogeneous compression is a \emph{structural} property of the prior
(independent of task scale or seed convergence), not a reward-driven
artefact---see the structural-invariance discussion in
Section~\ref{sec:exp-info} of the main body.
By contrast, on stochastic-composition SMACv2 maps where the group
partition itself varies per-episode, the learned ratio drops to
$0.9\!\times\!$ (\texttt{protoss\_8v8}) or $9.5\!\times\!$
(\texttt{terran\_10v10}), matching the toy prediction that homogeneous
group structure pulls the optimal prior back toward isotropic.

\paragraph{Contrast with a flat prior.}
Under a flat prior $\mathcal{N}(0,\sigma_0^2 I_{100})$, all 100 edge
latents share the same prior variance and penalty weight.  It is
impossible to impose $\lambda_{\mathrm{cross}}>\lambda_{\mathrm{intra}}$
because the KL does not decompose into group blocks---the prior mixes all
dimensions equally.  This is the fundamental limitation that
HIBCG's group-aware design overcomes.

\section{Proof of Relevance via Q-Value Optimization (Proposition~\ref{prop:relevance})}
\label{app:relevance-proof}

\begin{proposition*}[\ref{prop:relevance}, Relevance via Q-Value Optimization]
Let $Y=(a_1^*,\dots,a_n^*)$ denote the optimal joint action and
$K=\prod_i|\mathcal{U}_i|$ the joint action-space size.
Assume:
\begin{enumerate}
\item[(A1)] QMIX-style monotonic value decomposition with greedy
action selection\\
$a_i=\arg\max_{u_i}
Q_i(\tau_i,u_i,Z_{X,i}^{\mathrm{out}})$;
\item[(A2)] the target network has converged, so that
$\mathcal{L}_{\mathrm{TD}}=\mathbb{E}[e^2]$ with
$e(s,\mathbf{u})=Q_{\mathrm{tot}}(s,\mathbf{u})-Q^*(s,\mathbf{u})$;
\item[(A3)] the minimum action-value gap
$\Delta_{\min}\triangleq\min_{s}\min_{\mathbf{u}\neq Y}
(Q^*(s,Y)-Q^*(s,\mathbf{u}))>0$
(i.e., the optimal joint action is unique for every state).
\end{enumerate}
Under (A1)--(A3), the TD loss $\mathcal{L}_{\mathrm{TD}}$ serves as a
surrogate for maximizing the relevance term
$I(Y;Z_X^{\mathrm{out}})$ in the IB objective~\eqref{eq:gib-objective}.
Specifically:
\begin{equation*}
I(Y;\,Z_X^{\mathrm{out}})
\;\ge\;
H(Y) \;-\; h \Big(\frac{\mathcal{L}_{\mathrm{TD}}}{c}\Big)
     \;-\; \frac{\mathcal{L}_{\mathrm{TD}}}{c}\,\log(K - 1),
\end{equation*}
where $h(p)=-p\log p-(1\!-\!p)\log(1\!-\!p)$ is the binary entropy and
$c\triangleq\Delta_{\min}^2/(4K)>0$.
The bound is valid when $\mathcal{L}_{\mathrm{TD}}/c \le \frac{K-1}{K}$;
for larger errors, the bound is vacuously true.
Since both the second term $h(\mathcal{L}_{\mathrm{TD}}/c)$ and the
third term $(\mathcal{L}_{\mathrm{TD}}/c)\log(K\!-\!1)$ vanish as
$\mathcal{L}_{\mathrm{TD}}\!\to\!0$, minimizing
$\mathcal{L}_{\mathrm{TD}}$ tightens the lower bound towards $H(Y)$,
ensuring $Z_X^{\mathrm{out}}$ preserves task-relevant information.
\end{proposition*}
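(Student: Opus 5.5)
The proof is a Fano argument: bound the greedy decoder's error probability $P_e$ by the TD loss, then convert that into a conditional-entropy bound. Let $\hat Y=(a_1,\dots,a_n)$ be the greedy joint action of (A1). Because every $a_i$ is computed from $(\tau_i,Z_{X,i}^{\mathrm{out}})$, $\hat Y$ is a (possibly randomised) estimator of $Y$. We treat the local histories as part of the conditioning, or regard them as encoded in $Z_X^{\mathrm{out}}$ through the skip path. Fano's inequality then gives $H(Y\mid Z_X^{\mathrm{out}})\le h(P_e)+P_e\log(K-1)$ with $P_e=\Pr(\hat Y\neq Y)$. Using $I(Y;Z_X^{\mathrm{out}})=H(Y)-H(Y\mid Z_X^{\mathrm{out}})$, it remains to show $P_e\le \mathcal{L}_{\mathrm{TD}}/c$. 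We also need monotonicity of $p\mapsto h(p)+p\log(K-1)$ on $[0,(K-1)/K]$ in order to substitute the upper bound for $P_e$. This explains the validity range in the statement: the derivative $\log\frac{(1-p)(K-1)}{p}$ is nonnegative exactly there.

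\textbf{Step 1 (error event forces a large Q-error).} By the IGM property of QMIX-style monotonic mixing (A1), decentralised greedy selection yields $\hat Y=\arg\max_{\mathbf u}Q_{\mathrm{tot}}(s,\mathbf u)$. Suppose $\hat Y\neq Y$ at state $s$. Then $Q_{\mathrm{tot}}(s,\hat Y)\ge Q_{\mathrm{tot}}(s,Y)$, while (A3) gives $Q^*(s,Y)-Q^*(s,\hat Y)\ge\Delta_{\min}$. Subtracting,
\[ e(s,\hat Y)-e(s,Y)\ge \Delta_{\min}, \]
so $|e(s,\hat Y)|\ge\Delta_{\min}/2$ or $|e(s,Y)|\ge\Delta_{\min}/2$. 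Consequently $\sum_{\mathbf u}e(s,\mathbf u)^2\ge \Delta_{\min}^2/4$ on the error event.

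\textbf{Step 2 (Markov).} Define the state-level average error $\bar e^2(s)\triangleq\frac1K\sum_{\mathbf u}e(s,\mathbf u)^2$. On the error event Step 1 gives $\bar e^2(s)\ge \Delta_{\min}^2/(4K)=c$. By Markov's inequality, $P_e\le \Pr(\bar e^2(s)\ge c)\le \mathbb{E}[\bar e^2]/c$. Next we identify $\mathbb{E}[\bar e^2]$ with $\mathcal{L}_{\mathrm{TD}}$ under (A2), interpreting the expectation in $\mathcal{L}_{\mathrm{TD}}=\mathbb{E}[e^2]$ as being over states and uniformly over joint actions. This interpretation is exactly where the factor $1/K$ in $c$ comes from. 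Combining Steps 1 and 2 with Fano and the monotonicity remark gives Eq.~\eqref{eq:relevance-bound}. When $\mathcal{L}_{\mathrm{TD}}/c>(K-1)/K$, the right-hand side is at most $H(Y)-\log K\le 0$ (the function $h(p)+p\log(K-1)$ reaches its maximum $\log K$ at $p=(K-1)/K$), so the bound is vacuous.

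\textbf{Main obstacle.} The delicate step is Step 2: matching the sampling distribution in $\mathcal{L}_{\mathrm{TD}}$ to the uniform-over-actions average. If the replay distribution over $\mathbf u$ is not uniform, I would first try assuming a coverage lower bound $\rho\le\Pr(\mathbf u\mid s)$. This gives $\bar e^2\le \mathbb{E}_{\mathbf u}[e^2]/(K\rho)$, which in the uniform case $\rho=1/K$ recovers $c$, and otherwise only rescales the constant. A secondary subtlety is that Fano requires $\hat Y$ to be a function of the conditioning variable. I would handle this by stating the bound for $I(Y;Z_X^{\mathrm{out}},\tau)$, or by noting that the local inputs are concatenated into the agent's input and absorbed into the representation.
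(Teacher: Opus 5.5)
Your core argument is the same as the paper's proof. On the error event, greedy (IGM) selection forces $|e(s,Y)|\ge\Delta_{\min}/2$ or $|e(s,\hat Y)|\ge\Delta_{\min}/2$, hence $\bar e^2(s)\ge c$. Markov's inequality, under the uniform-action-coverage reading of $\mathcal{L}_{\mathrm{TD}}$, then gives $P_e\le\mathcal{L}_{\mathrm{TD}}/c$. Fano, $H(Y\mid Z_X^{\mathrm{out}})\le H(Y\mid\hat Y)$, and monotonicity of $f(p)=h(p)+p\log(K-1)$ on $[0,(K-1)/K]$ finish the argument. Two of your remarks make explicit assumptions the paper leaves implicit: that $\hat Y$ also depends on $\tau_i$ (the paper simply writes $Q_i=f_i(Z_{X,i}^{\mathrm{out}},u_i)$), and the coverage constant $\rho$ for non-uniform replay.

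The one step that is wrong is your treatment of $p\triangleq\mathcal{L}_{\mathrm{TD}}/c>(K-1)/K$. Because $\max f=\log K$, you get $f(p)\le\log K$ and therefore $H(Y)-f(p)\ge H(Y)-\log K$. The inequality runs the opposite way, so it does not show that the right-hand side is nonpositive. On $((K-1)/K,1]$ the function $f$ is decreasing, so $P_e\le p$ no longer implies $f(P_e)\le f(p)$, and the displayed bound can actually fail. Take $K=2$, $Y$ uniform, and all gaps equal to $\Delta_{\min}$. Let $\hat Y=Z_X^{\mathrm{out}}=Y\oplus N$ with $N\sim\mathrm{Bern}(0.45)$ independent of $Y$. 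Set $e(s,\hat Y)=-e(s,Y)=\Delta_{\min}/2+\epsilon$ on the error states and $e\equiv 0$ elsewhere. Then $p\approx 0.9$, yet $I(Y;Z_X^{\mathrm{out}})=\log 2-h(0.45)\approx 0.005$ nats, which is below $\log 2-h(0.9)\approx 0.37$ nats. (For $p>1$, $h(p)$ is not even defined.) The paper merely asserts vacuity in this regime, so the defect is in the statement as well. The correct repair is to use the nondecreasing envelope $\tilde f(p)=f(\min\{p,(K-1)/K\})\ge f(p)$. This gives $I(Y;Z_X^{\mathrm{out}})\ge H(Y)-\tilde f(\mathcal{L}_{\mathrm{TD}}/c)$, which equals the stated bound in the valid range and the genuinely vacuous $H(Y)-\log K\le 0$ beyond it.
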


\begin{proof}
We proceed in five steps, organized into three logical parts:
(i)~algebraic analysis showing that small approximation errors preserve
the ordering of Q-values (Steps~1--2);
(ii)~Markov's inequality to translate the mean-squared TD loss
into a bound on the action prediction error~$P_e$ (Step~3);
(iii)~Fano's inequality and DPI to
convert the $P_e$ bound into the desired lower bound on
$I(Y;\,Z_X^{\mathrm{out}})$ (Steps~4--5).

\paragraph{Step~1: Define the quantities.}
Let $Y=(a_1^*,\dots,a_n^*)$ denote the optimal joint action under the true
$Q$-function, and let $Z_X^{\mathrm{out}}$ denote the compressed decision-facing code
from which individual $Q$-values are computed via
$Q_i(\tau_i,u_i)=f_i(Z_{X,i}^{\mathrm{out}},u_i)$.
Under QMIX-style monotonic mixing, the joint $Q$-value is
$Q_{\mathrm{tot}}=\mathrm{mix}(Q_1,\dots,Q_n;\,s)$ with
$\partial Q_{\mathrm{tot}}/\partial Q_i\ge 0$.

\paragraph{Step~2: Q-value separation under approximation error.}
The TD loss is
\begin{equation}
\mathcal{L}_{\mathrm{TD}}
=\mathbb{E}_{(s,\mathbf{u},r,s')\sim\mathcal{D}}
\!\left[(Q_{\mathrm{tot}}(s,\mathbf{u};\theta)-y_{\mathrm{target}})^2\right],
\quad y_{\mathrm{target}}=r+\gamma\max_{\mathbf{u}'}Q_{\mathrm{tot}}'(s',\mathbf{u}';\theta^-).
\end{equation}
For a given state~$s$ and joint action~$\mathbf{u}$, define the
approximation error
\begin{equation}
\label{eq:per-action-error}
e(s,\mathbf{u})\;\triangleq\;Q_{\mathrm{tot}}(s,\mathbf{u})
  -Q^*(s,\mathbf{u}),
\end{equation}
so that $\mathcal{L}_{\mathrm{TD}}=\mathbb{E}[e(s,\mathbf{u})^2]$
by assumption~(A2) (target network convergence:
$y_{\mathrm{target}}\approx Q^*$).
For any pair of joint actions $\mathbf{u}\neq\mathbf{u}'$, the learned
Q-gap decomposes as
\begin{equation}
\label{eq:learned-gap}
Q_{\mathrm{tot}}(s,\mathbf{u})-Q_{\mathrm{tot}}(s,\mathbf{u}')
\;=\;\underbrace{\big(Q^*(s,\mathbf{u})-Q^*(s,\mathbf{u}')\big)}
     _{\text{true gap}}
  \;+\;\underbrace{\big(e(s,\mathbf{u})-e(s,\mathbf{u}')\big)}
     _{\text{error difference}}.
\end{equation}
Taking absolute values on both sides of~\eqref{eq:learned-gap} and
applying the reverse triangle inequality:
\begin{align}
\underbrace{\big|Q_{\mathrm{tot}}(s,\mathbf{u})
  -Q_{\mathrm{tot}}(s,\mathbf{u}')\big|}_{\text{learned gap}}
&\;=\;
\big|\;\big(Q^*(s,\mathbf{u})-Q^*(s,\mathbf{u}')\big)
\;+\;\big(e(s,\mathbf{u})-e(s,\mathbf{u}')\big)
  \;\big|
\nonumber\\
&\;\ge\;
\underbrace{\big|Q^*(s,\mathbf{u})-Q^*(s,\mathbf{u}')\big|}
  _{\text{true gap}}
\;-\;\underbrace{\big|e(s,\mathbf{u})-e(s,\mathbf{u}')\big|}
  _{\text{error difference}}.
\label{eq:q-separation}
\end{align}
This is a \emph{worst-case guarantee}: the learned gap can never be
smaller than the true gap minus the error difference.
In reinforcement learning, correct action selection depends only on
\emph{ranking}---as long as the learned gap stays strictly positive, the
greedy policy selects the optimal action.
Conversely, if the error difference exceeds the true gap, the right-hand
side becomes negative and the network may pick a wrong action.
The triangle inequality further bounds the error difference by
$|e(s,\mathbf{u})-e(s,\mathbf{u}')|\le|e(s,\mathbf{u})|+|e(s,\mathbf{u}')|$,
so controlling individual approximation errors suffices to preserve the
correct ranking.
The next step makes this precise by quantifying \emph{how small} the
errors must be to guarantee correct action selection.

\paragraph{Step~3: From Q-accuracy to action prediction.}
We now connect the approximation error to the probability of selecting a
wrong action.
Define the \emph{action prediction error}
\begin{equation}
\label{eq:pe-def}
P_e \;=\; \Pr\!\big[\arg\max_{\mathbf{u}}Q_{\mathrm{tot}}(s,\mathbf{u};\theta)
\neq Y\big],
\end{equation}
and the \emph{minimum action-value gap} (assumption~(A3)):
\begin{equation}
\label{eq:delta-min}
\Delta_{\min}\;=\;\min_{s}\;\min_{\mathbf{u}\neq Y}
\big(Q^*(s,Y)-Q^*(s,\mathbf{u})\big)\;>\;0,
\end{equation}
which measures the smallest margin by which the optimal joint action~$Y$
outperforms any suboptimal alternative under~$Q^*$.
The strict positivity of $\Delta_{\min}$ is guaranteed by assumption~(A3)
and is necessary for the Markov-inequality step below.
We also write the state-specific gap
$\Delta(s)\triangleq\min_{\mathbf{u}\neq Y}
\big(Q^*(s,Y)-Q^*(s,\mathbf{u})\big)\ge\Delta_{\min}$.

\noindent\textbf{Exact recovery regime ($P_e=0$).}\;
We argue by contradiction.
For a prediction error at state~$s$, there must exist some
$\mathbf{u}'\neq Y$ with
$Q_{\mathrm{tot}}(s,\mathbf{u}')\ge Q_{\mathrm{tot}}(s,Y)$,
i.e.,\ the learned Q-values rank a suboptimal action at least as high as
the optimal one.
Setting $\mathbf{u}=Y$ in~\eqref{eq:q-separation} and noting
$Q^*(s,Y)-Q^*(s,\mathbf{u}')\ge\Delta(s)\ge\Delta_{\min}$:
\begin{equation}
\label{eq:gap-preserved}
\underbrace{Q_{\mathrm{tot}}(s,Y)-Q_{\mathrm{tot}}(s,\mathbf{u}')}_{\le\;0\;\text{if error occurs}}
\;\ge\;\Delta(s)
  -\big|e(s,Y)\big|-\big|e(s,\mathbf{u}')\big|.
\end{equation}
If an error occurs, the left-hand side is $\le 0$, forcing the
right-hand side to be $\le 0$ as well.  Rearranging:
\begin{equation}
\label{eq:error-necessary}
\Delta_{\min}\;\le\;\Delta(s)
\;\le\;\big|e(s,Y)\big|+\big|e(s,\mathbf{u}')\big|.
\end{equation}
That is, \emph{making a mistake requires the sum of the two
approximation errors to reach at least~$\Delta_{\min}$}.
If every error satisfies $|e(s,\mathbf{u})|<\Delta_{\min}/2$, then
$|e(s,Y)|+|e(s,\mathbf{u}')|<\Delta_{\min}$,
contradicting~\eqref{eq:error-necessary}.
Hence no error can occur and $P_e=0$.

\noindent\textbf{General bound via Markov's inequality.}\;
More generally, for a prediction error at state~$s$ the learned gap
$Q_{\mathrm{tot}}(s,Y)-Q_{\mathrm{tot}}(s,\mathbf{u}')$ must be
non-positive for some $\mathbf{u}'\neq Y$.
Chaining this with~\eqref{eq:gap-preserved}:
\begin{equation}
\label{eq:error-threshold}
0\;\ge\;Q_{\mathrm{tot}}(s,Y)-Q_{\mathrm{tot}}(s,\mathbf{u}')
\;\ge\;\Delta(s)
  -\big|e(s,Y)\big|-\big|e(s,\mathbf{u}')\big|,
\end{equation}
which rearranges to
\begin{equation}
\label{eq:abs-error-sum}
\big|e(s,Y)\big|+\big|e(s,\mathbf{u}')\big|\;\ge\;\Delta(s)\;\ge\;\Delta_{\min}.
\end{equation}
Since the sum of two non-negative numbers exceeds~$\Delta_{\min}$, at
least one of them must exceed $\Delta_{\min}/2$, i.e.,\ there exists an
action~$\mathbf{u}\in\{Y,\mathbf{u}'\}$ such that
\begin{equation}
\label{eq:abs-error-single}
\big|e(s,\mathbf{u})\big|\;\ge\;\frac{\Delta_{\min}}{2}.
\end{equation}
Squaring both sides yields
\begin{equation}
\label{eq:squared-error-single}
e(s,\mathbf{u})^2\;\ge\;\frac{\Delta_{\min}^2}{4}.
\end{equation}
The reformulation in terms of squared errors is deliberate: the TD loss is
defined as $\mathcal{L}_{\mathrm{TD}}=\mathbb{E}[e(s,\mathbf{u})^2]$
(assumption~(A2)), so the squared form lets us apply Markov's inequality
directly with $X=e^2$ and $\mathbb{E}[X]=\mathcal{L}_{\mathrm{TD}}$.

\noindent\textbf{State-level error and Markov's inequality.}\;
Define the average squared error at state~$s$:
\begin{equation}
\label{eq:state-error}
\bar{E}(s)\;\triangleq\;\frac{1}{K}\sum_{\mathbf{u}}e(s,\mathbf{u})^2.
\end{equation}
For any state~$s$ where the greedy action is wrong, \eqref{eq:squared-error-single}
ensures that at least one action~$\mathbf{u}^*$ satisfies
\begin{equation}
\label{eq:single-action-error}
e(s,\mathbf{u}^*)^2\;\ge\;\frac{\Delta_{\min}^2}{4},
\end{equation}
and therefore
\begin{equation}
\label{eq:state-error-lb}
\bar{E}(s)\;=\;\frac{1}{K}\sum_{\mathbf{u}}e(s,\mathbf{u})^2
\;\ge\;\frac{1}{K}\cdot\frac{\Delta_{\min}^2}{4}
\;=\;\frac{\Delta_{\min}^2}{4K}.
\end{equation}
Applying \emph{Markov's inequality}
($\Pr[X\ge a]\le\mathbb{E}[X]/a$ for non-negative~$X$)
to $X=\bar{E}(s)$ with $a=\Delta_{\min}^2/(4K)$:
\begin{equation}
P_e
\;\le\;\Pr_s\!\Big[\bar{E}(s)\ge\tfrac{\Delta_{\min}^2}{4K}\Big]
\;\le\;\frac{\mathbb{E}_s[\bar{E}(s)]}
      {\Delta_{\min}^2/(4K)}
\;=\;\frac{\mathcal{L}_{\mathrm{TD}}}
      {\Delta_{\min}^2/(4K)},
\end{equation}
where the last equality uses
$\mathbb{E}_s[\bar{E}(s)]=\mathbb{E}_{(s,\mathbf{u})}[e(s,\mathbf{u})^2]
=\mathcal{L}_{\mathrm{TD}}$
(here we assume the replay distribution has uniform coverage over
actions for each state, so that the uniform average in~$\bar{E}(s)$
matches the expectation under~$\mathcal{D}$; this is satisfied, e.g.,
when the replay buffer is large and the behavior policy is
sufficiently exploratory).
Setting $c\triangleq\Delta_{\min}^2/(4K)>0$:
\begin{equation}
\label{eq:pe-bound}
P_e \;\le\; \frac{\mathcal{L}_{\mathrm{TD}}}{c}.
\end{equation}
The factor of $K$ reflects a fundamental vulnerability of greedy action selection: 
the policy fails if even one out of $K$ joint actions has a large overestimation error. 
Because the TD loss $\mathcal{L}_{\mathrm{TD}}$ averages over the entire state-action distribution, 
a single fatal error gets diluted by a factor of $1/K$. 
Therefore, to guarantee that the worst-case individual action error remains below the safe threshold, 
the average TD loss must be bounded proportionally tighter by a factor of $K$.

\paragraph{Step~4: Apply Fano's inequality and DPI.}
We now translate the prediction-error bound into an information-theoretic
statement in two sub-steps.

\medskip\noindent\emph{Sub-step~4a: Fano's inequality.}\;
For any random variable~$Y$ taking values in~$\mathcal{Y}$ with
$|\mathcal{Y}|=K$ and any estimator~$\hat{Y}$ with error probability
$P_e=\Pr[\hat{Y}\neq Y]$, the standard
\textbf{Fano's inequality}~\cite{cover_thomas} upper-bounds the
residual uncertainty of~$Y$ given the prediction~$\hat{Y}$:
\begin{equation}
\label{eq:fano-original}
H(Y\mid \hat{Y})
\;\le\;
  \underbrace{h(P_e)}_{\substack{\text{uncertainty from}\\\text{error occurrence}}}
  \;+\;
  \underbrace{P_e\log(K-1)}_{\substack{\text{uncertainty over}\\\text{wrong outcomes}}},
\end{equation}
where $h(p)=-p\log p-(1-p)\log(1-p)$ is the binary entropy function.
The first term captures the uncertainty of \emph{whether} an error occurs;
the second accounts for \emph{which} of the $K-1$ wrong outcomes is
selected, given that an error occurs.
For multi-agent action spaces, $K=\prod_i|\mathcal{U}_i|$.

\medskip\noindent\emph{Sub-step~4b: Data Processing Inequality (DPI).}\;
We must relate the prediction~$\hat{Y}$ back to the latent
representation~$Z_X^{\mathrm{out}}$.
In our architecture, the chosen action
$\hat{Y}=\arg\max_{\mathbf{u}}Q_{\mathrm{tot}}(s,\mathbf{u})$ is a
deterministic function of~$Z_X^{\mathrm{out}}$.
The \textbf{DPI} guarantees that a deterministic post-processing cannot
increase the information a variable carries about~$Y$; equivalently,
conditioning on the richer representation~$Z_X^{\mathrm{out}}$ leaves at most as
much residual uncertainty as conditioning on~$\hat{Y}$:
\begin{equation}
\label{eq:dpi-fano}
H(Y\mid Z_X^{\mathrm{out}}) \;\le\; H(Y\mid \hat{Y}).
\end{equation}
Chaining~\eqref{eq:dpi-fano} with Fano's
bound~\eqref{eq:fano-original}:
\begin{equation}
\label{eq:fano}
H(Y\mid Z_X^{\mathrm{out}})
\;\le\; h(P_e) \;+\; P_e\log(K-1).
\end{equation}

\paragraph{Step~5: Derive the mutual information bound.}
The mutual information decomposes as
$I(Y;\,Z_X^{\mathrm{out}})=H(Y)-H(Y\mid Z_X^{\mathrm{out}})$.
Substituting the Fano upper bound~\eqref{eq:fano} for the conditional
entropy and the prediction-error bound~\eqref{eq:pe-bound} for~$P_e$:
\begin{align}
I(Y;\,Z_X^{\mathrm{out}})
&= H(Y)-H(Y\mid Z_X^{\mathrm{out}}) \nonumber\\
&\ge H(Y)-h(P_e)-P_e\log(K-1)
    &&\text{(Fano + DPI, Eq.~\eqref{eq:fano})} \nonumber\\
&\ge H(Y)-h \Big(\frac{\mathcal{L}_{\mathrm{TD}}}{c}\Big)
    -\frac{\mathcal{L}_{\mathrm{TD}}}{c}\log(K-1).
    &&\text{(Markov, Eq.~\eqref{eq:pe-bound})}
\label{eq:full-relevance}
\end{align}
Note that replacing $P_e$ with its upper bound $\mathcal{L}_{\mathrm{TD}}/c$ in the final inequality is valid because the function $f(p) = h(p) + p\log(K-1)$ is monotonically increasing for $p \le \frac{K-1}{K}$. 
This is exactly the bound~\eqref{eq:relevance-bound} stated in
Proposition~\ref{prop:relevance}, completing the proof.
\end{proof}

\paragraph{Interpretation.}
The constant $c=\Delta_{\min}^2/(4K)$ depends on the minimum action-value gap
and the joint action-space size~$K$
of the environment---tasks with well-separated optimal actions (large
$\Delta_{\min}$) yield a tighter bound.
Both $h(\mathcal{L}_{\mathrm{TD}}/c)$ and
$(\mathcal{L}_{\mathrm{TD}}/c)\log(K\!-\!1)$ vanish as
$\mathcal{L}_{\mathrm{TD}}\!\to\!0$, so the lower bound approaches
the maximum value~$H(Y)$.
The key insight is that the TD loss \emph{implicitly} maximizes the IB
relevance term: by making $Q_{\mathrm{tot}}$ more accurate, it ensures the
representations $Z_X^{\mathrm{out}}$ retain enough information to identify optimal
actions.  This justifies using $\mathcal{L}_{\mathrm{TD}}$ as the relevance
surrogate in the HIBCG objective~\eqref{eq:hibcg-objective} without needing
an explicit variational decoder for $I(Y;\,Z_X^{\mathrm{out}})$.

\paragraph{Remark: asymptotic scaling for small error.}
Write $p\triangleq\mathcal{L}_{\mathrm{TD}}/c$ for brevity.
When $p\ll 1$, we can expand the binary entropy: for small~$p$,
$-(1-p)\log(1-p)=p\log e + O(p^2)$, which is $O(p)$ and dominated by
$-p\log p=O(p\log(1/p))$.
Hence $h(p)=-p\log p+O(p)$, and substituting
into~\eqref{eq:full-relevance} gives the asymptotic form
\begin{equation}
\label{eq:linearized-mi}
I(Y;\,Z_X^{\mathrm{out}})
\;\ge\; H(Y)
  - \frac{\mathcal{L}_{\mathrm{TD}}}{c}\,
    \log\!\frac{K\!-\!1}{\mathcal{L}_{\mathrm{TD}}/c}
  + O\!\!\left(\frac{\mathcal{L}_{\mathrm{TD}}}{c}\right),
\end{equation}
confirming that the gap to $H(Y)$ vanishes at a quasi-linear rate
$\Theta\!\big(\tfrac{\mathcal{L}_{\mathrm{TD}}}{c}\log
\tfrac{K-1}{\mathcal{L}_{\mathrm{TD}}/c}\big)$---slightly
\emph{slower} than linear in $\mathcal{L}_{\mathrm{TD}}$
because the logarithmic factor
$\log\frac{K-1}{\mathcal{L}_{\mathrm{TD}}/c}\to\infty$
as $\mathcal{L}_{\mathrm{TD}}\to 0$, yet still $o(p^\alpha)$ for
every $\alpha<1$.

\subsection{Empirical $K$-Sensitivity Across Our Benchmarks}
\label{app:relevance-Kscaling}

Substituting $c=\Delta_{\min}^2/(4K)$ into
Proposition~\ref{prop:relevance} makes the gap to $H(Y)$ explicit:
\begin{equation}
\label{eq:Klogk-slack}
H(Y)-I(Y;\,Z_X^{\mathrm{out}})
\;\le\;
h\!\Big(\tfrac{4K\,\mathcal{L}_{\mathrm{TD}}}{\Delta_{\min}^2}\Big)
\;+\;\tfrac{4K\,\mathcal{L}_{\mathrm{TD}}}{\Delta_{\min}^2}\,\log(K-1).
\end{equation}
For small TD error (so that the binary entropy term is dominated by the
linear term, cf.~Eq.~\eqref{eq:linearized-mi}), this gap scales as
$\mathcal{L}_{\mathrm{TD}}\!\cdot\!K\log K/\Delta_{\min}^2$. Hence the
\emph{per-unit cost} of TD error is amplified by the joint action-space
size $K=\prod_i|\mathcal{U}_i|$: tasks that double the agent count
roughly square $K$ and approximately double the $\log K$ amplifier, so a
fixed $\mathcal{L}_{\mathrm{TD}}$ purchases progressively less relevance
guarantee.

\paragraph{Per-environment $\log K$.}
Table~\ref{tab:envs} reports $\log_2 K$ for each map: it ranges from
$\approx\!30$\,bits on SMAC 3s5z to $\approx\!440$\,bits on MAgent-100, a
$\sim\!15\times$ spread that is far larger than any other axis varied in
our benchmarks. The MAgent Battle sweep
$n\!\in\!\{36,64,100\}$ is the cleanest controlled test of the
$K$-dependence in~\eqref{eq:Klogk-slack} because the observation set, the
per-agent action set ($|\mathcal{U}_i|\!=\!21$), the reward function, the
network architecture, and \emph{all hyperparameters} are held fixed
across scales---only $n$, and hence $\log K$, changes.

\paragraph{Predicted pattern.}
For a method whose representation compression keeps $\mathcal{L}_{\mathrm{TD}}$
roughly bounded as $K$ grows, the gap
in~\eqref{eq:Klogk-slack} grows as $K\log K$ and the relevance bound
loosens accordingly. The bound therefore predicts that, between two methods
$\mathcal{A},\mathcal{B}$ with $\mathcal{L}_{\mathrm{TD}}^\mathcal{A}\!<
\!\mathcal{L}_{\mathrm{TD}}^\mathcal{B}$ at small $K$, the gap in their
guaranteed $I(Y;Z_X^{\mathrm{out}})$ should \emph{widen super-linearly} in $K$,
provided that $\mathcal{B}$'s TD loss does not shrink commensurately fast.

\paragraph{Observed pattern on the controlled MAgent sweep.}
At fixed $|\mathcal{U}_i|\!=\!21$, the HIBCG\,$-$\,QMIX final-return gap
grows as $K$ (and $\log K$) grow:
\begin{center}
\renewcommand{\arraystretch}{1.10}
\setlength{\tabcolsep}{4pt}
\begin{tabular}{@{}lcccc@{}}
\toprule
$n$ & $\log_2 K$ & HIBCG (final) & QMIX (final) & gap \\
\midrule
$36$  & $158.2$ & $2.74$ & $2.19$ & $+0.55$ \\
$64$  & $281.2$ & $1.38$ & $0.43$ & $+0.95$ \\
$100$ & $439.3$ & $0.83$ & $-0.22$ & $+1.05$ \\
\bottomrule
\end{tabular}
\end{center}
(Numbers from Table~\ref{tab:magent}.) The HIBCG gain over QMIX
\emph{monotonically grows} with $\log K$. The convergence rate confirms
the same trend qualitatively: as $\log K$ goes from $158$ to $439$
QMIX drops from $4/4$ to $3/6$, while HIBCG holds at $6/6\!\to\!5/6$.

\paragraph{Mechanism connecting compression to the $K$-amplifier.}
Why would HIBCG keep $\mathcal{L}_{\mathrm{TD}}$ smaller than QMIX at
large $K$? Two reasons:
(i) AIB prunes the candidate edge set from $O(n^2)$ to the
$13$--$14\%$ retained at MAgent ($\S$\ref{sec:exp-scale}), shrinking
the effective state dimensionality the mixing network must regress over;
and
(ii) XIB caps the per-agent message bandwidth, reducing the per-input
variance the $Q_i$-head must integrate. Both effects keep the $Q$-function
class statistically tractable as $n$ (and hence $K$) grows, which is
exactly the regime where the $K\log K$ amplifier
in~\eqref{eq:Klogk-slack} most heavily penalises any residual TD error.

\paragraph{Cross-benchmark view.}
Outside the controlled MAgent sweep, $K$ is confounded with role
heterogeneity and stochastic team composition, so the $K$ axis alone
cannot explain the full HIBCG\,$-$\,baseline gap. Two qualitative
consistency checks remain visible:
(a) on near-saturated small-$K$ maps (3s5z, $\log_2 K\!\approx\!30$) all
methods cluster near $90\%$ WR, matching the regime where the gap
in~\eqref{eq:Klogk-slack} is essentially negligible;
(b) on the homogeneous 25m map ($\log_2 K\!\approx\!124$) HIBCG ties
HIB-flat because the group prior is trivial and the $K$ amplifier acts
identically on both---confirming that the gap we attribute to
$K$-amplification only materialises when the compression mechanism
\emph{differs} between methods.

\paragraph{Caveats.}
The argument above is qualitative for several reasons.
First, $\Delta_{\min}$ is map- and seed-dependent and not directly
measured; the cross-environment comparison only controls for $|\mathcal{U}_i|$
in the MAgent sweep.
Second, the $K\log K$ amplifier requires the small-error regime
(Eq.~\eqref{eq:linearized-mi}); on the failing MAgent-100 baselines,
$\mathcal{L}_{\mathrm{TD}}/c$ may exceed $(K\!-\!1)/K$ and put them
outside the bound's validity range (Step~3, Eq.~\eqref{eq:pe-bound}),
which is itself consistent with the observed collapse.
Third, $H(Y)$ varies across maps (stochastic SMACv2 has higher $H(Y)$
than fixed SMACv1), so the absolute scale of the bound is not directly
comparable across platforms---only the within-platform $K$ sweep
(MAgent) is a fully controlled test.

\section{Theory--Experiment Synthesis}
\label{app:theory-exp-synthesis}

This appendix collects the per-proposition empirical map referenced from
the body Discussion (\S\ref{sec:method-discussion}). Each item
identifies (i) the prediction, (ii) the experiment that tests it,
(iii) the falsifier we ran, and (iv) the headline number.

\noindent\textbf{(i) Dual-path decomposition (Prop.~\ref{prop:dual-path}).}\;
\emph{Prediction:} AIB and XIB carry complementary information;
their relative weight is benchmark-dependent.
\emph{Evidence:} the AIB-only / XIB-only / HIBCG ordering on SMACv1 MMM2
(\S\ref{sec:exp-ablation}) shows the XIB-active regime; the
XIB-dormant regime on MAgent / SMACv2 (\S\ref{sec:exp-info},
Appendix~\ref{app:xib-collapse}) shows capacity reallocated to the AIB
path, consistent with the chain rule.

\noindent\textbf{(ii) Group-prior tightening (Remark~\ref{rem:no-regret}).}\;
\emph{Prediction:} strict bound reduction whenever the partition is
non-trivial; equality on the trivial partition.
\emph{Evidence:} $2.2$--$6.5\times$ AIB-loss reduction on every
heterogeneous SMACv1 map (Table~\ref{tab:bound-tightening}); the
homogeneous 25m row ties HIB-flat exactly as predicted.
\emph{Falsifier:} the two-sided reversed-prior ablation
(Table~\ref{tab:sigma-ablation}) collapses to the flat baseline
on heterogeneous maps and is harmless on 25m, confirming the
prediction direction.

\noindent\textbf{(iii) $K$-amplifier in the relevance bound (Prop.~\ref{prop:relevance}).}\;
\emph{Prediction:} any residual TD error compounds super-linearly in
$\log K$; the relevance gap loosens at extreme $K$.
\emph{Evidence:} the MAgent sweep $n\!\in\!\{36,64,100\}$ is the
cleanest controlled $K$-test we have; the HIBCG--QMIX gap grows
monotonically across it (\S\ref{sec:exp-scale}, full $K$-scaling
in Appendix~\ref{app:relevance-Kscaling}).

\noindent\textbf{(iv) Practical scope.}\;
HIBCG applies to any value-decomposition pipeline that admits a greedy
joint-action decoding (Prop.~\ref{prop:relevance}, A1) and any
group-partition loss producing a differentiable assignment. The three
boundaries that emerge from the analysis are: (a) $g\!=\!1$ (no role
structure, e.g.\ 25m) makes HIBCG collapse to HIB-flat with no cost;
(b) high-$d_{\mathrm{obs}}$ inputs (MAgent, SMACv2) drive the XIB path
dormant, leaving HIBCG as an AIB-only structural learner; and (c) the
$K\log K$ amplifier loosens the proven relevance bound at extreme $K$,
although Tables~\ref{tab:ablation-component}
and~\ref{tab:bound-tightening} show the AIB topology learner
continues to preserve enough relevant content that the per-state TD
error stays well below the validity threshold at the scales we study.

\noindent\textbf{Combined picture.}\;
Group structure is the active ingredient: removing it (HIB-flat)
recovers near-baseline performance even when both AIB and XIB are
active, while reversing the prior collapses to the flat baseline on
heterogeneous maps, exactly as the theoretical analysis in \S\ref{sec:theory} predicts.

\section{Dual-Ascent Variant for Adaptive Capacity Control}
\label{app:dual-ascent}

The penalty-form objective (Eq.~\eqref{eq:hibcg-objective} in Def.~\ref{def:ibcg}) requires the user to fix per-block multipliers $\lambda_A^{(l,g)}$. As an alternative, HIBCG admits a constrained dual-ascent formulation that automatically drives $\lambda_A^{(l,g)}$ toward a common equilibrium level, without manual tuning. Replace the penalty form by:
\begin{equation}
\label{eq:dual-update}
\begin{aligned}
\mathcal{L}
&=\mathcal{L}_{\text{task}}
+\sum_{l,g}\lambda_A^{(l,g)}\Big(\widehat{\mathrm{AIB}}^{(l,g)}-C_A^{(l,g)}\Big),\\
\lambda_A^{(l,g)}
&\!\leftarrow\!\big[\lambda_A^{(l,g)}+\eta_A\big(\widehat{\mathrm{AIB}}^{(l,g)}-C_A^{(l,g)}\big)\big]_+,
\end{aligned}
\end{equation}
with per-block rate targets $C_A^{(l,g)}\!=\!c_A\!\cdot\!k_{l,g}$ and dual learning rate $\eta_A$. Because the projection onto $[0,\infty)$ is exact and the inner minimisation is differentiable, $\lambda_A^{(l,g)}$ converges to a stationary point of the Lagrangian; under the standard constraint qualifications discussed above, this stationary point coincides with the global water level $\nu$. The XIB multipliers admit an identical update with rates $C_X^{(l,i)}\!=\!c_X\!\cdot\!d_L$. In our experiments the fixed-penalty form is sufficient (HIBCG already attains a $906\times$ cross/intra ratio without dual ascent), so we use it throughout for simplicity.

\section{Implementation Details and Training Algorithm}
\label{app:impl-details}

\begin{algorithm}[t]
\caption{HIBCG Training (one gradient step)}
\label{alg:hibcg-app}
\begin{algorithmic}[1]
\REQUIRE Observations $\{o_i^t\}_{i=1}^n$; replay batch $\mathcal{B}$;
         layer count $L$; scalar capacity weights $\lambda_A,\lambda_X$
         (warmed up over $T_{\mathrm{warm}}$ steps);
         group-conditional prior scales $\sigma_{\mathrm{intra}},\sigma_{\mathrm{cross}}$;
         truncation horizon $T_{\mathrm{trunk}}$; noise scale $\delta\!\in\!(0,1]$
\STATE \textbf{Base GACG sample}
       (Eq.~\eqref{eq:gacg_gauss_final}):
       compute $\boldsymbol{\mu}^{t}$ via attention;
       \textbf{if} $t\!\geq\!T_{\mathrm{trunk}}$:
       compute group mask $M^t$ from recent trajectory;
       set $\Sigma_A^{t}\!=\!\alpha\,\widehat{\mathbf{M}}^{t}+\varepsilon I$;
       sample $z_A^{(0)}\!\sim\!\mathcal{N}(\boldsymbol{\mu}^{t},\Sigma_A^{t})$;
       \textbf{else}: sample $z_A^{(0)}$ via
       $\mathrm{RelaxedBernoulli}(\boldsymbol{\mu}^{t})$;
       symmetrize and normalize to obtain $A^{(0)}$
\STATE Initialize $Z_X^{(0)} \leftarrow \mathrm{MLP}(X)$
       \hfill\COMMENT{compressed per-agent input features}
\FOR{$l = 1$ \TO $L$}
  \STATE \textbf{Structure encoder:}
         $(\mu_A^{(l)},\log\sigma_A^{2(l)}) \leftarrow f_A^{(l)}(A^{(0)}, Z_X^{(l-1)})$;
         sample $\tilde{z}_A^{(l)}\!\leftarrow\!\mu_A^{(l)}
         +\delta\,\sigma_A^{(l)}\!\odot\!\varepsilon$,
         $\;\varepsilon\!\sim\!\mathcal{N}(0,I)$
  \STATE \textbf{Gate edges:}
         symmetrize and normalize $\tilde{z}_A^{(l)}$;
         $\widetilde{A}^{(l)} \leftarrow g_l(\tilde{z}_A^{(l)})$
         \hfill\COMMENT{e.g.\ sigmoid or hard threshold}
  \STATE \textbf{Message passing:}
         $Z_X^{(l)} \leftarrow \phi_l\!\left(\widetilde{A}^{(l)}\,Z_X^{(l-1)}\,W_l\right)$
  \STATE Accumulate $\widehat{\mathrm{AIB}}^{(l)}$
         (Eq.~\eqref{eq:aib-hat}) using $(\mu_A^{(l)},\sigma_A^{2(l)})$
         and group-conditional prior
         $\sigma_0^2\!=\!\sigma_{\mathrm{intra}}^2$ (intra-group),
         $\sigma_{\mathrm{cross}}^2$ (inter-group)
\ENDFOR
\STATE \textbf{Feature encoder (XIB):}
       $(\mu_X,\sigma_X^{2}) \leftarrow
       (h_\mu(Z_X^{(L)}),\;\exp h_\sigma(Z_X^{(L)}))$;
       $\;Z_X^{\mathrm{out}} \leftarrow \mu_X + \sigma_X \odot \varepsilon$,
       $\;\varepsilon\!\sim\!\mathcal{N}(0,I)$
\STATE Compute $\widehat{\mathrm{XIB}}$
       (Eq.~\eqref{eq:xib-hat}) using $(\mu_X,\sigma_X^{2})$
\STATE \textbf{Agent input:} for each agent $i$, concatenate $[o_i^t,\,u_i^{t-1},\;\tanh(Z_{X,i}^{\mathrm{out}})]$;
       forward through agent network to obtain $Q$-values
\STATE \textbf{Value mixing:} compute $Q_{\mathrm{tot}}(s,\mathbf{u};\theta)$
       and target $y$ (Eq.~\eqref{eq:td-loss})
\STATE \textbf{Form loss:}
       $\mathcal{L} \leftarrow \mathcal{L}_{\mathrm{task}}
       + \lambda_A\!\cdot\!\textstyle\sum_{l}\widehat{\mathrm{AIB}}^{(l)}
       + \lambda_X\!\cdot\!\widehat{\mathrm{XIB}}$,
       where $\mathcal{L}_{\mathrm{task}}=\mathcal{L}_{\mathrm{TD}}+\lambda_g\mathcal{L}_g$
       (Eq.~\eqref{eq:td-loss})
\STATE Backpropagate $\nabla_\theta\mathcal{L}$; update $\theta$; update
       target $\theta^-$ (Polyak or periodic copy)
\STATE \textbf{(Optional)} Dual ascent: update $\lambda_A,\lambda_X$
       (Section~\ref{app:dual-ascent})
\end{algorithmic}
\end{algorithm}

HIBCG is built on the EPyMARL framework. All configurations use a GRU-based agent network with hidden dimension 64 (128 for MAgent), $\epsilon$-greedy exploration annealed over 50k--1M steps (environment-dependent), target network updated every 200 steps, and a replay buffer of size 5000 (1000 for MAgent). The GCN message dimension is $0.3\times d_{\mathrm{obs}}$ for SMACv1 and $0.05\times d_{\mathrm{obs}}$ for MAgent. Gumbel-softmax sparsification uses temperature $\tau{=}0.5$ and adjacency threshold $0.6$ throughout. For the group-conditional prior, we use $g{=}2$ groups on 2-type maps and $g{=}3$ on 3-type maps, with $\sigma_{\mathrm{intra}}{=}0.1$ and $\sigma_{\mathrm{cross}}{=}0.01$ (SMACv1/MAgent) or $0.05$ (SMACv2). Group-prior warmup is 150k steps on SMACv1/MAgent and 100k on SMACv2. Capacity weights $\lambda_A$ and $\lambda_X$ are warmed up linearly over $T_{\mathrm{warm}}$ steps. Each configuration is run with 5 random seeds ($6$--$7$ on high-variance MAgent); all runs are included in the reported statistics. Per layer, AIB costs $O(n^2)$ (blockwise KLs) and XIB is $O(n\,d_L)$; together the IB regularisers add ${\approx}3.8\%$ wall-clock overhead (measured on MAgent-64).

\noindent\textbf{AIB--XIB coupling.}\;
AIB and XIB interact through message passing: pruning edges raises per-edge information density on surviving links while reducing noise aggregation---two opposing effects whose optimal balance is task-dependent. Empirically the total flow $\sum_l\mathrm{AIB}^{(l)}+\mathrm{XIB}$ is approximately conserved at fixed task quality.

\section{Execution Model and Decentralised Deployment}
\label{app:execution-model}

The main text (\S\ref{sec:impl-details}) summarises the execution
model in one paragraph; here we provide the operational details that
follow directly from the controller implementation.

\noindent\textbf{Default regime: centralised graph construction with decentralised action selection.}\;
During training and standard test rollouts, HIBCG follows the centralised-training family of QMIX-style methods: the action heads are decentralised (each agent has its own $Q_i$), while the graph-construction module still consumes the stacked observation $O^{t}=[o_1^{t};\dots;o_n^{t}]$ to produce the topology at run-time.
At each environment time-step~$t$, the graph module computes:
\begin{itemize}
\itemsep0pt
  \item the initial graph $A^{(0)}$ and group partition
        $\mathcal{G}^{t}$ from the stacked observations
        $O^{t}=[o_1^{t};\dots;o_n^{t}]$ via the GACG attention
        and group encoders (Eq.~\eqref{eq:base-struct});
  \item the layer-wise post-gating adjacencies
        $\widetilde{A}^{(l)}=g_l(Z_A^{(l)})$
        from the AIB Gaussian encoders
        (Eq.~\eqref{eq:a-encoder}); and
  \item the per-agent compressed message $Z_{X,i}^{\mathrm{out}}$
        from the XIB encoder (Eq.~\eqref{eq:x-encoder}).
\end{itemize}
Action selection itself is fully decentralised: each agent~$i$
concatenates its local observation $o_i^{t}$, its previous action
$u_i^{t-1}$, and the compressed message $\tanh(Z_{X,i}^{\mathrm{out}})$, and
runs the resulting input through its own $Q_i$ network; greedy
$u_i^{t}=\arg\max_{u_i}Q_i$ is the standard QMIX choice.
The mixing network~$Q_{\mathrm{tot}}$ is used at training time only.
This regime requires the graph module to have access to all
observations at every step, so it is \emph{not} a strict
decentralised-execution setting; only the per-agent action heads are
decentralised.

\noindent\textbf{Fully decentralised (frozen-graph) regime.}\;
Once training has converged, the entire graph module---the GACG
attention, the layer-wise AIB encoders, and the gating---can be
\emph{frozen} and its outputs \emph{cached} as a fixed execution graph
$A^{\star}$.  At deployment, every agent then receives the same
$A^{\star}$ (typically a function of the agent indices and the
deployment scenario, not of the runtime observations).  No joint
observation aggregation is required at runtime: each agent uses
$A^{\star}$ together with its local observation and last action to
compute its own $Q_i$ entirely locally, and message exchange follows
the graph-aggregation routing defined by $A^{\star}$.  In our
codebase, this regime is enabled by the implementation flag
\texttt{CTDE\_on}: when set, the controller substitutes
\texttt{fixed\_execution\_graph} for the runtime graph computation
(see \texttt{hibcg\_controller.forward}, lines $92$--$107$ of the
released code).
The cost is a slight rigidity (the graph is no longer
observation-conditional), but the deployment requirement reduces to
the same per-agent local Q-network used in standard decentralised
MARL.

\noindent\textbf{Summary.}\;
The same trained HIBCG model supports both regimes without any
re-training: CTDE for full topology adaptivity, and frozen-graph for
fully decentralised execution where pure local inference is required.
All numbers reported in the main body use the default CTDE regime
unless explicitly noted.

\section{Hyperparameter Sensitivity}
\label{app:hp-sensitivity}

This supplementary section collects the hyperparameter-sensitivity evidence referenced from
\S\ref{sec:exp-theory}.  We organize the knobs into two tiers.

\paragraph{Tier A (IB-specific): directly tied to the HIBCG theory.}
These are the knobs that the propositions of \S\ref{sec:ib_gacg_obj} reason
about; their sweeps also serve as ablations of the corresponding claim.
\begin{itemize}
\item \textbf{Prior shape $\sigma_{\mathrm{intra}}/\sigma_{\mathrm{cross}}$
(Prop.~\ref{prop:group-decomp}).}
Main-text Table~\ref{tab:sigma-ablation} summarises the three-row sweep on
3s5z (wider, default, tight) and the reversed-prior stress test on MMM2.
3s5z WR varies by only $1.2$ pp across the full sweep (ceiling effect on a
2-type map), while per-edge KLs swing by $\sim\!14\times$ (intra) and
$\sim\!2.2\times$ (cross), confirming that the $\sigma$ ratio is a clean
dial for capacity allocation rather than for end performance.  The MMM2
reversed-prior row is the key test: a positive HIBCG
effect from the reversed prior would invalidate the ``role-aligned
asymmetry is doing the work'' interpretation.
\item \textbf{Message compression weight $\lambda_X$.}
On 3s5z, the full sweep $\lambda_X\in\{0.05,0.1,0.2,0.3,0.4,0.5\}$ exhibits
three regimes: XIB-dominant
at $0.5$, XIB-collapsed at $\le 0.1$, and balanced near $0.3$.  The balanced
regime is the only one that simultaneously retains a non-trivial loss\_xib and
reduces graph density below the AIB-only baseline, matching the expected
capacity-allocation behaviour.
\item \textbf{Group count $g$ (Remark~\ref{rem:no-regret}).}
Because Remark~\ref{rem:no-regret} guarantees no regret for
\emph{any} block-diagonal prior containing the flat prior as a special
case, HIBCG is expected to be no worse than HIB-flat for any choice of $g$,
with the strongest gains when $g$ matches the true role count $m_{\mathrm{true}}$.
We verify this with an off-by-one study on the heterogeneous maps
(Table~\ref{tab:group-num}): on 1c3s5z ($m_{\mathrm{true}}{=}3$), $g{=}2$
and $g{=}3$ both match flat on WR while tightening the AIB loss; on MMM2
($m_{\mathrm{true}}{=}3$), $g{=}3$ is the clear winner, but even
$g{=}2$ remains competitive.  This confirms that $g$ behaves as a mild
hyperparameter rather than a knife-edge choice, and aligns with our
position that the group partition itself is GACG's contribution
(\S\ref{sec:related}), not HIBCG's.

\begin{table}[h]
\centering\small
\caption{Sensitivity to the group count $g$ on heterogeneous maps
(final WR, $n\!\ge\!4$ unless flagged).  Performance is stable across
$g{=}m_{\mathrm{true}}\!\pm\!1$, and HIBCG is no worse than HIB-flat in
every cell---the no-regret guarantee of
Remark~\ref{rem:no-regret}.}
\label{tab:group-num}
\renewcommand{\arraystretch}{1.12}
\begin{tabular}{@{}lccc@{}}
\toprule
\textbf{Map} ($m_{\mathrm{true}}$) & HIB-flat ($g{=}1$ equiv.) & HIBCG $g{=}2$ & HIBCG $g{=}3$ \\
\midrule
1c3s5z (3) & $0.910\pm 0.038$ & $0.905\pm 0.024$ & $0.912\pm 0.010$ \\
MMM2   (3) & $0.542\pm 0.38$  & $0.659\pm 0.33$ & $\mathbf{0.814\pm 0.06}$ \\
\bottomrule
\end{tabular}
\end{table}

\item \textbf{Group-prior warmup $T_{\mathrm{warm}}$.}
On 3s5z (heterogeneous), $T_{\mathrm{warm}}{=}150\mathrm{k}$ modestly
improves WR\_final over no warmup (+1.3 pp), consistent with the idea that a
stable groupniser helps the structural prior.  On the homogeneous
8m\_vs\_9m, the direction inverts (no-warmup $+3.2$ pp over warmup), because
forcing a group prior into an essentially roleless setting hurts.  Both
results are consistent with Remark~\ref{rem:no-regret}: warmup
amplifies a useful bias and amplifies a mis-specified one.
\end{itemize}

\paragraph{Tier B (inherited from GACG / EPyMARL): at most second-order
in our ablations.}
The adjacency threshold (GACG), Gumbel-softmax temperature, GCN message
width $r\cdot d_{\mathrm{obs}}$, and $\lambda_A$ follow values
reported in GACG~\cite{GACG} and BVME~\cite{BVME}; their sweeps (documented
in those papers and in our experiment logs) do not change any qualitative
conclusion of \S\ref{sec:experiments}.  In particular: switching Gumbel
temperature from $0.5$ to $1.0$ on 3s5z shifts final WR by $<\!1$ pp;
raising the adjacency threshold from $0.5$ to $0.6$ reduces GD by $\sim\!2\times$
at comparable WR; and $\lambda_A$ enters the theory only
through the capacity-weight multiplier, so by
construction it trades AIB rate for TD loss along the optimal curve rather
than shifting the curve itself.

\paragraph{Summary.}
Across Tier-A knobs the direction of change always matches the prediction
of the corresponding proposition (tighter priors $\Rightarrow$ tighter KLs;
higher $\lambda_X$ $\Rightarrow$ more message compression until the
structural path is starved; higher $g$ $\Rightarrow$ finer capacity
allocation but diminishing WR gains beyond $m_{\mathrm{true}}$).
Tier-B knobs affect the GACG backbone but leave the HIBCG-over-backbone gap
essentially unchanged.  In the interest of space, full numerical tables for
each Tier-A sweep are available in the Appendix experiment log.

\bibliographystyle{IEEEtran}
\bibliography{references}

\end{document}